\newcommand{\argmax}{\mathop{\rm arg~max}\limits}
\newcommand{\argmin}{\mathop{\rm arg~min}\limits}
\newtheorem{theorem}{Theorem}
\newtheorem{proposition}[theorem]{Proposition}
\newtheorem{lemma}[theorem]{Lemma}
\theoremstyle{definition}
\theoremstyle{remark}
\title{Thompson Sampling for Real-Valued Combinatorial Pure Exploration of Multi-Armed Bandit}
\author {
    % Authors
    Shintaro Nakamura\textsuperscript{\rm 1,\rm 2},
    Masashi Sugiyama\textsuperscript{ \rm 2, \rm 1},
}
\begin{document}

\maketitle

\begin{abstract}
    We study the real-valued combinatorial pure exploration of the multi-armed bandit (R-CPE-MAB) problem. In R-CPE-MAB, a player is given $d$ stochastic arms, and the reward of each arm $s\in\{1, \ldots, d\}$ follows an unknown distribution with mean $\mu_s$. In each time step, a player pulls a single arm and observes its reward. The player's goal is to identify the optimal \emph{action} $\boldsymbol{\pi}^{*} = \argmax_{\boldsymbol{\pi} \in \mathcal{A}} \boldsymbol{\mu}^{\top}\boldsymbol{\pi}$ from a finite-sized real-valued \emph{action set} $\mathcal{A}\subset \mathbb{R}^{d}$ with as few arm pulls as possible. Previous methods in the R-CPE-MAB assume that the size of the action set $\mathcal{A}$ is polynomial in $d$. We introduce an algorithm named the Generalized Thompson Sampling Explore (GenTS-Explore) algorithm, which is the first algorithm that can work even when the size of the action set is exponentially large in $d$. We also introduce a novel problem-dependent sample complexity lower bound of the R-CPE-MAB problem, and show that the GenTS-Explore algorithm achieves the optimal sample complexity up to a problem-dependent constant factor. 
\end{abstract}

\section{Introduction}\label{IntroductionSection}
Pure exploration in the stochastic multi-armed bandit (PE-MAB) is one of the important frameworks for investigating online decision-making problems, where we try to identify the optimal object from a set of candidates as soon as possible \citep{Bubeck2009,Audibert2010,SChen2014}.
One of the important models in PE-MAB is the \emph{combinatorial pure exploration} task in the multi-armed bandit (CPE-MAB) problem \citep{SChen2014,WangAndZhu2022,Gabillon16,LChen2016,LChen2017}. In CPE-MAB, we have a set of $d$ stochastic arms, where the reward of each arm $s\in\{1, \ldots, d\}$ follows an unknown distribution with mean $\mu_s$, and a finite-sized \emph{action set} $\mathcal{A}$, which is a collection of subsets of arms with certain combinatorial structures. The size of the action set can be exponentially large in $d$. In each time step, a player pulls a single arm and observes a reward from it. The goal is to identify the best action from action set $\mathcal{A}$ with as few arm pulls as possible.
Abstractly, the goal is to identify $\boldsymbol{\pi}^{*}$, which is the optimal solution for the following constraint optimization problem:
\begin{equation}
\begin{array}{ll@{}ll}
\text{maximize}_{\boldsymbol{\pi}}  & \boldsymbol{\mu}^{\top}\boldsymbol{\pi}&\\
\text{subject to}& \boldsymbol{\pi}\in \mathcal{A},   & 
\end{array}\label{AbstractFormulation}
\end{equation}
where $\boldsymbol{\mu}$ is a vector whose $s$-th element is the mean reward of arm $s$ and $\top$ denotes the transpose. One example of the CPE-MAB is the shortest path problem shown in Figure~\ref{ShortestPathFigure}. Each edge $s \in \{1, \ldots, 7\}$ has a cost $\mu_s$ and $\mathcal{A} = \{(1, 0, 1, 0, 0, 1, 0), (0, 1, 0, 1, 0, 1, 0), (0, 1, 0, 0, 1, 0, 1)\}$. In real-world applications, the cost of each edge (road) can often be a random variable due to some traffic congestion, and therefore the cost stochastically changes. We assume we can choose an edge (road) each round, and conduct a traffic survey for that edge (road). If we conduct a traffic survey, we can observe a random sample of the cost of the chosen edge. Our goal is to identify the best action, which is a path from the start to the goal nodes.\par
Although CPE-MAB can be applied to many models which can be formulated as (\ref{AbstractFormulation}), most of the existing works in CPE-MAB \citep{SChen2014,WangAndZhu2022,Gabillon16,LChen2017,YihanDu2021,LChen2016} assume $\mathcal{A}\subseteq\{0, 1\}^d$. This means that the player's objective is to identify the best action which maximizes the sum of the expected rewards. Therefore, although we can apply the existing CPE-MAB methods to the shortest path problem \citep{Sniedovich2006}, top-$K$ arms identification \citep{Kalyanakrishnan2010}, matching \citep{Gibbons1985}, and spanning trees \citep{Pettie2002}, we cannot apply them to problems where $\mathcal{A} \subset \mathbb{R}^d$, such as the optimal transport problem \citep{villani2008}, the knapsack problem \citep{dantzig2007}, and the production planning problem \citep{Pochet2010}. For instance, the optimal transport problem shown in Figure \ref{OT_figure} has a real-valued action set $\mathcal{A}$. We have five suppliers and four demanders. Each supplier $i$ has $s_i$ goods to supply. Each demander $j$ wants $d_j$ goods. Each edge $\mu_{ij}$ is the cost to transport goods from supplier $i$ to demander $j$. Our goal is to minimize $\sum_{i = 1}^5 \sum_{j = 1}^4 \pi_{ij}\mu_{ij}$, where $\pi_{ij}(\geq 0)$ is the amount of goods transported to demander $j$ from supplier $i$. Again, we assume that we can choose an edge (road) each round, and conduct a traffic survey for that edge. Our goal is to identify the best action, which is a transportation plan (matrix) that shows how much goods each supplier should send to each demander. \par
To overcome the limitation of the existing CPE-MAB methods, \citet{Nakamura2023} has introduced a real-valued CPE-MAB (R-CPE-MAB), where the action set $\mathcal{A} \subset \mathbb{R}^d$. However, it needs an assumption that the size of the action set $\mathcal{A}$ is polynomial in $d$, which is not satisfied in general since in many combinatorial problems, the action set is exponentially large in $d$. To cope with this problem, one may leverage algorithms from the \emph{transductive linear bandit} literature \citep{Fiez2019,KatzSamuels2020} for the R-CPE-MAB.
In the transductive bandit problem, a player chooses a \emph{probing vector} $\boldsymbol{v}$ from a given finite set $\mathcal{X} \subset \mathbb{R}^d$ each round and observes $\boldsymbol{\mu}^{\top} \boldsymbol{v} + \epsilon$, where $\epsilon$ is a noise from a certain distribution. Her goal is to identify the best \emph{item} $\boldsymbol{z}^{*}$ from a finite-sized set $\mathcal{Z} \subset\mathbb{R}^d$, which is defined as $\boldsymbol{z}^{*} = \argmax_{\boldsymbol{z}\in \mathcal{Z}}\boldsymbol{\mu}^{\top} \boldsymbol{z}$. 
The transductive linear bandit can be seen as a generalization of the R-CPE-MAB since the probing vectors are the standard basis vectors and the items are the actions in the R-CPE-MAB. 
% However, in the transductive bandit, we have an assumption that the size of $\mathcal{Z}$ is polynomial in $d$. Thus, here again, we suffer from the exponential largeness of the action set $\mathcal{A}$ with respect to $d$. \par 
However, the RAGE algorithm introduced in \citet{Fiez2019} has to enumerate all the items in $\mathcal{Z}$, and therefore, not realistic to apply it when the size of $\mathcal{Z}$ is exponentially large in $d$. The Peace algorithm \citep{KatzSamuels2020} is introduced as an algorithm that can be applied to the CPE-MAB even when the size of $\mathcal{Z}$ is exponentially large in $d$, but it cannot be applied to the R-CPE-MAB since its subroutine that determines the termination of the algorithm is only valid when $\mathcal{Z} \subset \left\{0, 1\right\}^{d}$. \par
In this study, we introduce an algorithm named the Generalized Thompson Sampling Explore (GenTS-Explore) algorithm, which can identify the best action in the R-CPE-MAB even when the size of the action set is exponentially large in $d$. This algorithm can be seen as a generalized version of the Thompson Sampling Explore (TS-Explore) algorithm introduced by \citet{WangAndZhu2022}. \par
Additionally, we show lower bounds of the R-CPE-MAB. One is written explicitly; the other is written implicitly and tighter than the first one. We introduce a hardness measure $\mathbf{H} = \sum_{s = 1}^{d} \frac{1}{\Delta^2_{(s)}}$, where $\Delta_{(s)}$ is named \emph{G-Gap}, which can be seen as a generalization of the notion \emph{gap} introduced in the CPE-MAB literature \citep{SChen2014,LChen2016,LChen2017}. We show that the sample complexity upper bound of the Gen-TS-Explore algorithm matches the lower bound up to a factor of a problem-dependent constant term. \par
% The remainder of this paper is structured as follows.
% First, we formally introduce our model R-CPE-MAB. 
% Then, we discuss sample complexity lower bounds of the R-CPE-MAB. One is written in an explicit form, and the other is written implicitly and is tighter than the former.  
% Next, we introduce the GenTS-Explore algorithm and show that the sample complexity upper bound of it matches the lower bound up to a factor of a problem-dependent constant term. 
% Finally, using the knapsack problem and the production planning problem \citep{Pochet2010} as examples, we show numerically that the GenTS-Explore algorithm can identify the best action even when the action set can be exponentially large in $d$.
\begin{figure}[t]
  \begin{minipage}[b]{0.4\linewidth}
    \centering
    \includegraphics[width = \linewidth]{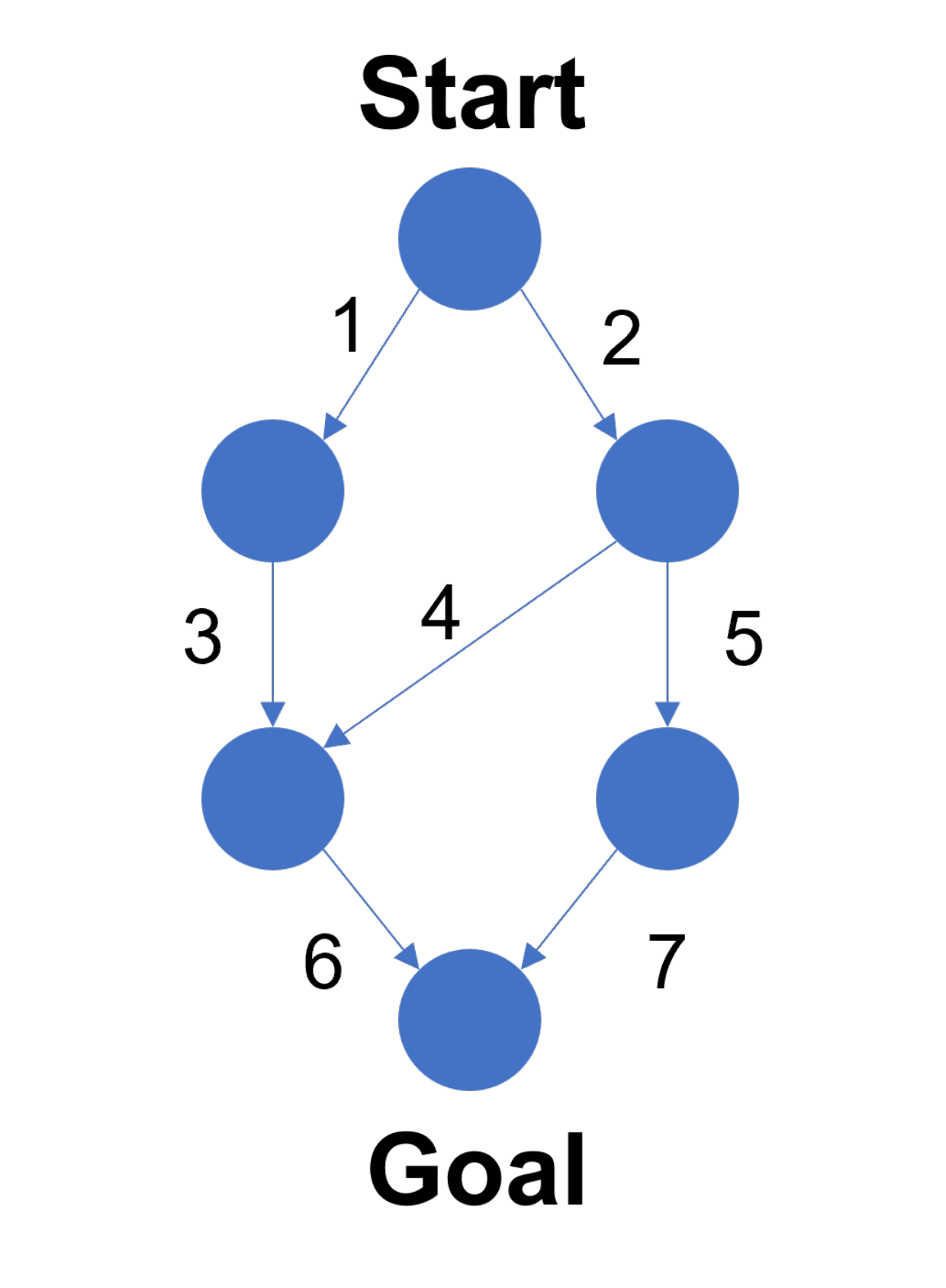}
    \caption{A schematic of the shortest path problem.}
    \label{ShortestPathFigure}
  \end{minipage}
  \begin{minipage}[b]{0.55\linewidth}
    \centering
    \includegraphics[width = \linewidth]{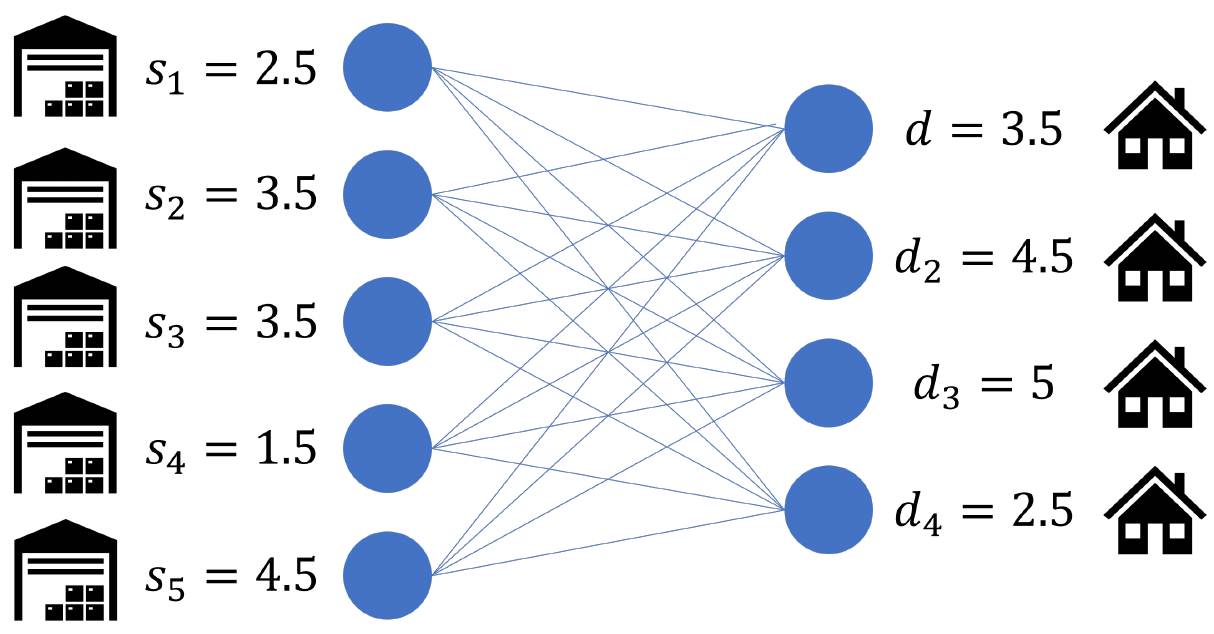}
    \caption{A schematic of the optimal transport problem. One candidate of $\boldsymbol{\pi}$ can be 
$\boldsymbol{\pi} = \begin{pmatrix}
2.5 & 0 & 0 & 0 \\
1.0 & 2.5 & 0 & 0 \\
0 & 2.0 & 1.5 & 0 \\
0 & 0 & 1.5 & 0 \\
0 & 0 & 2.0 & 2.5 \\
\end{pmatrix}.$ }
\label{OT_figure}
  \end{minipage}
\end{figure}
\section{Problem Formulation}\label{ProblemFormulation}
In this section, we formally define the R-CPE-MAB model similar to \citet{SChen2014}. Suppose we have $d$ arms, numbered $1, \ldots, d$. Assume that each arm $s\in[d]$ is associated with a reward distribution $\phi_s$, where $[d] = \{1, \ldots, d\}$. We assume all reward distributions have an $R$-sub-Gaussian tail for some known constant $R>0$. Formally, if $X$ is a random variable drawn from $\phi_s$, then, for all $\lambda\in \mathbb{R}$, we have $\mathbb{E}[\exp (\lambda X - \lambda \mathbb{E}[X])] \leq \exp (R^2\lambda^2/2)$. It is known that the family of $R$-sub-Gaussian tail distributions includes all distributions that are supported on $[0, R]$ and also many unbounded distributions such as Gaussian distributions with variance $R^2$ \citep{Rivasplata2012}. We denote by $\mathcal{N}(\mu, \sigma^2)$ the Gaussian distribution with mean $\mu$ and variance $\sigma^2$.
Let $\boldsymbol{\mu} = (\mu_1, \ldots, \mu_d)^{\top}$ denote the vector of expected rewards, where each element $\mu_s = \mathbb{E}_{X\sim\phi_s}[X]$ denotes the expected reward of arm $s$ and $\top$ denotes the transpose. We denote by $T_s(t)$ the number of times arm $s$ is pulled before round $t$, and by $\boldsymbol{\hat{\mu}}(t) = (\hat{\mu}_1(t), \ldots, \hat{\mu}_d(t))^{\top}$ the vector of sample means of each arm before round $t$.\par
With a given $\boldsymbol{\nu}$, let us consider the following linear optimization problem:
\begin{equation}\label{Abstract_Formulation_IntegerProgramminProblem}
\begin{array}{ll@{}ll}
\text{maximize}_{\boldsymbol{\pi}}  & \boldsymbol{\nu}^{\top}\boldsymbol{\pi}&\\
\text{subject to}& \boldsymbol{\pi}\in \mathcal{C}\subset\mathbb{R}^d,   &
\end{array}
\end{equation}
where $\mathcal{C}$ is a problem-dependent feasible region. For any $\boldsymbol{\nu}\in \mathbb{R}^d$, we denote $\boldsymbol{\pi}^{\boldsymbol{\nu}, \mathcal{C}}$ as the optimal solution of (\ref{Abstract_Formulation_IntegerProgramminProblem}). 
Then, we define the action set $\mathcal{A}$ as the set of vectors that contains optimal solutions of (\ref{Abstract_Formulation_IntegerProgramminProblem}) for any $\boldsymbol{\nu}$, i.e., 
\begin{equation}
    \mathcal{A} = \left\{ \boldsymbol{\pi}^{\boldsymbol{\nu, \mathcal{C}}} \in \mathbb{R}^d \ | \ \forall \boldsymbol{\nu}\in \mathbb{R}^d\right\}.
\end{equation}
 Note that $K$ could be exponentially large in $d$.
The player's objective is to identify $\boldsymbol{\pi}^{*} = \argmax_{\boldsymbol{\pi}\in\mathcal{A}} \boldsymbol{\mu}^{\top}\boldsymbol{\pi}$ by playing the following game. At the beginning of the game, the action set $\mathcal{A}$ is revealed. Then, the player pulls an arm over a sequence of rounds; in each round $t$, she pulls an arm $p_t\in[d]$ and observes a reward sampled from the associated reward distribution $\phi_{p_t}$. The player can stop the game at any round. She needs to guarantee that $\Pr\left[  \boldsymbol{\pi}_{\mathrm{out}} \neq \boldsymbol{\pi}^{*}  \right] \leq \delta$ for a given confidence parameter $\delta$. For any $\delta \in (0, 1)$, we call an algorithm $\mathbb{A}$ a $\delta$-correct algorithm if, for any expected reward $\boldsymbol{\mu}\in\mathbb{R}$, the probability of the error of $\mathbb{A}$ is at most $\delta$, i.e., $\Pr\left[ \boldsymbol{\pi}_{\mathrm{out}} \neq \boldsymbol{\pi}^{*} \right] \leq \delta$.
The learner's performance is evaluated by her \emph{sample complexity}, which is the round she terminated the game. We assume $\boldsymbol{\pi}^{*}$ is unique.
% We denote by $\tau$ the sample complexity. 
\subsection{Technical Assumptions}
% Here, we assume $\mathcal{A}$ is finite. We denote by $K$ the size of $\mathcal{A}$. We assume we have an \emph{offline oracle} that computes (\ref{Abstract_Formulation_IntegerProgramminProblem}) in polynomial or psuedo-polynomial time in $d$ once $\boldsymbol{\nu}$ is given, and  \par
To cope with the exponential largeness of the action set, we make two mild assumptions for our R-CPE-MAB model. The first one is the existence of the \emph{offline oracle}, which computes $\boldsymbol{\pi}^{*}(\boldsymbol{\nu}) = \argmax_{\boldsymbol{\pi} \in \mathcal{A}} \boldsymbol{\nu}^{\top} \boldsymbol{\pi}$ in polynomial or pseudo-polynomial time once $\boldsymbol{\nu}$ is given. We write $\mathrm{Oracle}(\boldsymbol{\nu}) = \boldsymbol{\pi}^{*}(\boldsymbol{\nu}) $. This assumption is relatively mild since in linear programming, we have the network simplex algorithm \citep{Nelder65} and interior points methods \citep{Karmakar1984}, whose computational complexities are both polynomials in $d$. Moreover, if we consider the knapsack problem, though the knapsack problem is NP-complete \citep{Garey1979} and is unlikely that it can be solved in polynomial time, it is well known that we can solve it in pseudo-polynomial time if we use dynamic programming \citep{Kellerer2011,Fujimoto2016}. In some cases, it may be sufficient to use this dynamic programming algorithm as the offline oracle in the R-CPE-MAB. \par
The second assumption is that the set of possible outputs of the offline oracle is finite-sized. This assumption also holds in many combinatorial optimization problems. For instance, no matter what algorithm is used to compute  the solution to the knapsack problem, the action set is a finite set of integer vectors, so this assumption holds. 
Also, in linear programming problems such as the optimal transport problem \citep{villani2008} and the production planning problem \citep{Pochet2010}, it is well known that the solution is on a vertex of the feasible region, and therefore, the set of candidates of solutions for optimization problem (\ref{AbstractFormulation}) is finite. \par

\section{Lower Bound of R-CPE-MAB} \label{LowerBoundSection}
In this section, we discuss sample complexity lower bounds of R-CPE-MAB. In Theorem \ref{LowerBoundTheorem_Explicit}, we show a sample complexity lower bound which is derived explicitly. In Theorem \ref{LowerBoundTheoremImplicit}, we show another lower bound, which is only written in an implicit form but is tighter than that in Theorem \ref{LowerBoundTheorem_Explicit}. \par
In our analysis, we have several key quantities that are useful to discuss the sample complexity upper bounds.
First, we define $\boldsymbol{\pi}^{(s)}$ as follows:
\begin{equation}
    \boldsymbol{\pi}^{(s)} = \argmin_{\boldsymbol{\pi} \in \mathcal{A} \setminus \{ \boldsymbol{\pi}^{*} \}} \frac{\boldsymbol{\mu}^{\top} \left( \boldsymbol{\pi}^{*} - \boldsymbol{\pi} \right) }{ \left| \pi^{*}_{s} - \pi_{s} \right| }.
\end{equation}
Intuitively, among the actions whose $s$-th element is different from $\boldsymbol{\pi}^{*}$, $\boldsymbol{\pi}^{(s)}$ is the one that is the most difficult to confirm its optimality.
We define a notion named \emph{G-gap} which is formally defined as follows:
\begin{eqnarray}
    \Delta_{(s)}& = &  \frac{\boldsymbol{\mu}^{\top} (\boldsymbol{\pi}^{*} - \boldsymbol{\pi}^{(s)})}{|\pi^{*}_s - \pi^{(s)}_s|} \nonumber\\
    & = & \min_{\boldsymbol{\pi} \in \mathcal{A} \setminus \left\{ \boldsymbol{\pi}^{*} \right\}} \frac{\boldsymbol{\mu}^{\top} \left(\boldsymbol{\pi}^{*} - \boldsymbol{\pi} \right) }{|\pi^{*}_s - \pi_s|}.
\end{eqnarray}
\emph{G-gap} can be seen as a natural generalization of \emph{gap} introduced in the CPE-MAB literature \citep{SChen2014,LChen2016,LChen2017}.
Then, we denote the sum of inverse squared gaps by 
\begin{eqnarray}
    \mathbf{H} &=& \sum_{s = 1}^{d} \left(\frac{1}{\Delta_{(s)}} \right)^{2} \nonumber \\
               &=& \sum_{s = 1}^{d} \max_{\boldsymbol{\pi} \in \mathcal{A} \setminus \{ \boldsymbol{\pi}^{*} \} } \frac{\left| \pi^{*}_s - \pi_s  \right|^2}{ \left(\left( \boldsymbol{\pi}^{*} - \boldsymbol{\pi} \right)^{\top} \boldsymbol{\mu}\right)^2 }, \nonumber
\end{eqnarray}
which we define as a hardness measure of the problem instance in R-CPE-MAB. In Theorem \ref{LowerBoundTheorem_Explicit}, we show that $\mathbf{H}$ appears in a sample complexity lower bound of R-CPE-MAB. Therefore, we expect that this quantity plays an essential role in characterizing the difficulty of the problem instance. 
\subsection{Explicit Form of a Sample Complexity Lower Bound} 
Here, we show a sample complexity lower bound of the R-CPE-MAB which is written in an explicit form.

\begin{restatable}[]{theorem}{LowerBoundTheoremExplicit} \label{LowerBoundTheorem_Explicit}
    Fix any action set $\mathcal{A} \subset \mathbb{R}^{d}$ and any vector $\boldsymbol{\mu} \in \mathbb{R}^d$. Suppose that, for each arm $s\in [d]$, the reward distribution $\phi_s$ is given by $\phi_s = \mathcal{N}(\mu_s, 1)$. Then, for any $\delta \in \left( 0, \frac{e^{-16}}{4} \right)$ and any $\delta$-correct algorithm $\mathbb{A}$, we have 
    \begin{equation}\label{LowerBoundTheorem_Explicit_Equation}
        \mathbb{E}\left[ T \right] \geq \frac{1}{16} \mathbf{H}\log \left( \frac{1}{4\delta} \right),
    \end{equation}
    where $T$ denotes the total number of arm pulls by algorithm $\mathbb{A}$.
\end{restatable}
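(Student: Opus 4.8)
The plan is to establish (\ref{LowerBoundTheorem_Explicit_Equation}) by a change-of-measure (transportation) argument: I confront the true instance $\boldsymbol{\mu}$ with a family of $d$ perturbed alternative instances, one per arm, and exploit that a $\delta$-correct algorithm must statistically distinguish any two instances whose optimal actions differ. The workhorse is the transportation inequality relating expected pull counts, per-arm Kullback--Leibler divergences, and error probabilities: for the true instance $\boldsymbol{\mu}$, any alternative $\boldsymbol{\mu}'$, and any event $\mathcal{E}$ measurable with respect to the algorithm's history up to its (random) stopping time,
\[
\sum_{s=1}^{d} \mathbb{E}_{\boldsymbol{\mu}}\!\left[T_s\right]\,\mathrm{KL}\!\left(\mathcal{N}(\mu_s,1),\,\mathcal{N}(\mu'_s,1)\right)\ \geq\ \mathrm{kl}\!\left(\Pr_{\boldsymbol{\mu}}[\mathcal{E}],\,\Pr_{\boldsymbol{\mu}'}[\mathcal{E}]\right),
\]
where $\mathrm{kl}(a,b)$ denotes the binary relative entropy; equivalently one may pass through the Bretagnolle--Huber inequality, which is the cleanest route to the $\log(1/4\delta)$ factor.

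The key construction is the per-arm alternative. For each $s\in[d]$ I would form $\boldsymbol{\mu}^{(s)}$ by perturbing \emph{only} the $s$-th coordinate, $\mu^{(s)}_s=\mu_s+\eta_s$ with $\mu^{(s)}_t=\mu_t$ for $t\neq s$, choosing the sign of $\eta_s$ to match that of $\pi^{(s)}_s-\pi^{*}_s$. Under $\boldsymbol{\mu}^{(s)}$ the gap between $\boldsymbol{\pi}^{(s)}$ and $\boldsymbol{\pi}^{*}$ shifts by $\eta_s(\pi^{(s)}_s-\pi^{*}_s)$, so the minimal magnitude $|\eta_s|$ needed to make $\boldsymbol{\mu}^{(s)\top}\boldsymbol{\pi}^{(s)}$ exceed $\boldsymbol{\mu}^{(s)\top}\boldsymbol{\pi}^{*}$ is exactly $\boldsymbol{\mu}^{\top}(\boldsymbol{\pi}^{*}-\boldsymbol{\pi}^{(s)})/|\pi^{*}_s-\pi^{(s)}_s|=\Delta_{(s)}$. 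This is precisely why $\boldsymbol{\pi}^{(s)}$ and the G-gap were defined as they were: $\boldsymbol{\pi}^{(s)}$ is the cheapest action to promote above $\boldsymbol{\pi}^{*}$ using arm $s$ alone. For Gaussians of unit variance, the single changed coordinate contributes $\mathrm{KL}(\mathcal{N}(\mu_s,1),\mathcal{N}(\mu^{(s)}_s,1))=\eta_s^2/2$ per pull, and all other terms in the transportation sum vanish.

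Taking $\mathcal{E}=\{\boldsymbol{\pi}_{\mathrm{out}}=\boldsymbol{\pi}^{*}\}$, $\delta$-correctness gives $\Pr_{\boldsymbol{\mu}}[\mathcal{E}]\geq 1-\delta$ while, since $\boldsymbol{\pi}^{*}$ is no longer optimal under $\boldsymbol{\mu}^{(s)}$, $\Pr_{\boldsymbol{\mu}^{(s)}}[\mathcal{E}]\leq\delta$. The transportation inequality then yields $\mathbb{E}_{\boldsymbol{\mu}}[T_s]\cdot\tfrac{\eta_s^2}{2}\geq \mathrm{kl}(1-\delta,\delta)$, hence $\mathbb{E}_{\boldsymbol{\mu}}[T_s]\gtrsim \Delta_{(s)}^{-2}\log(1/\delta)$. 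Summing over $s\in[d]$ produces $\sum_s\Delta_{(s)}^{-2}=\mathbf{H}$ and therefore a bound of the form $\mathbb{E}[T]\gtrsim \mathbf{H}\log(1/\delta)$. The specific constant $\tfrac{1}{16}$ and the factor $\log(1/4\delta)$ are then a matter of bookkeeping: the $4\delta$ comes from the Bretagnolle--Huber step, and the restriction $\delta<e^{-16}/4$ guarantees $\log(1/4\delta)>16$, which is exactly the slack needed to absorb the additive entropy term and replace $\mathrm{kl}(1-\delta,\delta)$ by a clean constant multiple of $\log(1/4\delta)$.

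The hard part will be Step two — making the single-coordinate construction fully rigorous. Perturbing by exactly $\Delta_{(s)}$ only equalizes $\boldsymbol{\pi}^{(s)}$ and $\boldsymbol{\pi}^{*}$, so I must perturb by $\Delta_{(s)}+\epsilon$ (taking $\epsilon\downarrow 0$) or by a fixed generous multiple to obtain a \emph{strict} and \emph{unique} new optimum, which is required for $\delta$-correctness to force $\Pr_{\boldsymbol{\mu}^{(s)}}[\boldsymbol{\pi}_{\mathrm{out}}=\boldsymbol{\pi}^{*}]\leq\delta$; I also need $\pi^{(s)}_s\neq\pi^{*}_s$ so the direction $\eta_s$ is well defined, and I must verify that no \emph{third} action is inadvertently promoted in a way that breaks the intended comparison. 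Choosing a margin safely above $\Delta_{(s)}$ is the likely source of the otherwise sub-optimal constant $\tfrac{1}{16}$; the remaining steps are routine given the transportation inequality and the Gaussian KL identity.
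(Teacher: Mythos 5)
Your proposal is correct, and the core construction---for each arm $s$, perturb only the $s$-th coordinate by a margin exceeding $\Delta_{(s)}$ in the direction of $\pi^{(s)}_s-\pi^{*}_s$ so that $\boldsymbol{\pi}^{(s)}$ strictly beats $\boldsymbol{\pi}^{*}$, then sum the resulting per-arm bounds---is exactly the paper's. Where you diverge is the machinery that converts this confrontation into a bound on $\mathbb{E}[T_s]$. You invoke the Kaufmann--Capp\'e--Garivier transportation inequality (equivalently Bretagnolle--Huber plus Wald's identity for the KL of the stopped processes), which immediately gives $\mathbb{E}[T_s]\cdot\eta_s^2/2\geq\log\frac{1}{4\delta}$ and hence, with $|\eta_s|=2\Delta_{(s)}$, the constant $\tfrac{1}{2}$ rather than $\tfrac{1}{16}$---strictly stronger than the claimed bound, with no need for the restriction $\delta<e^{-16}/4$. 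The paper instead runs the older Chen et al.\ (2014) argument: assume $\mathbb{E}[T_s]<t_s^{*}$ for contradiction, control $T_s$ by Markov's inequality, control the empirical sums by Kolmogorov's maximal inequality, and lower-bound the likelihood ratio $L_1(W)/L_0(W)\geq 4\delta$ on the intersection of these events via an explicit Gaussian likelihood computation (their Lemma \ref{Lemma15inSChen2014}); the constant $\tfrac{1}{16}$ and the constraint $\delta<e^{-16}/4$ are artifacts of needing $4t_s^{*}$ pulls to fit Lemma \ref{Lemma15inSChen2014}'s hypothesis and of the Kolmogorov step requiring $\log\frac{1}{4\delta}>16$. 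Amusingly, the paper itself uses your change-of-distribution lemma for Theorem \ref{LowerBoundTheoremImplicit} but not here, so your route is both more uniform with the rest of the paper and quantitatively sharper. The worries in your final paragraph are real but minor: $\pi^{(s)}_s\neq\pi^{*}_s$ holds automatically whenever $\Delta_{(s)}<\infty$ (otherwise arm $s$ contributes nothing to $\mathbf{H}$ and can be skipped), and promoting a third action is harmless since all you need is that $\boldsymbol{\pi}^{*}$ is \emph{not} optimal under the alternative, which the single comparison against $\boldsymbol{\pi}^{(s)}$ already certifies.
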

Theorem \ref{LowerBoundTheorem_Explicit} can be seen as a natural generalization of the result in ordinary CPE-MAB shown in \citet{SChen2014}.
In the CPE-MAB literature, the hardness measure $\mathbf{H}'$ is defined as follows \citep{SChen2014,WangAndZhu2022,LChen2017}:
\begin{equation}
    \mathbf{H}' = \sum_{s = 1}^{d} \left(\frac{1}{\Delta_{s}} \right)^{2},
\end{equation}
where 
\begin{equation}
    \Delta_{s} = \min_{\boldsymbol{\pi} \in \{ \boldsymbol{\pi}\in\mathcal{A} \ | \ \pi_{s} \neq \pi^{*}_{s} \}} {\boldsymbol{\mu}^{\top} \left( \boldsymbol{\pi}^{*} - \boldsymbol{\pi} \right) }.
\end{equation}
Below, we discuss why the hardness measure in R-CPE-MAB uses $\Delta_{(s)}$ not $\Delta_{s}$. \par
Suppose we have two bandit instances $\mathcal{B}_1$ and $\mathcal{B}_2$. In $\mathcal{B}_1$, $\mathcal{A}_1 = \left\{ \left( 100, 0 \right)^{\top}, (0, 100)^{\top} \right\}$ and $\boldsymbol{\mu}_1 = \left( \mu_{1, 1}, \mu_{1, 2} \right) = \left( 0.011, 0.01 \right)^{\top}$. In $\mathcal{B}_2$, $\mathcal{A}_2 = \left\{ \left( 1, 0 \right)^{\top}, \left( 0, 1 \right)^{\top} \right\}$ and $\boldsymbol{\mu}_{2} = \left( \mu_{2, 1}, \mu_{2, 2} \right) = \left( 0.1, 0.11 \right)^{\top}$. We assume that, for both instances, the arms are equipped with Gaussian distributions with unit variance. Also, for any $i \in \{1, 2\}$ and $s\in\{ 1, 2 \}$, let us denote by $T_{i, s}(t)$ the number of times arm $s$ is pulled in the bandit instance $\mathcal{B}_{i}$ in round $t$. Let us consider the situation where $T_{1, 1}(t) = T_{2, 1}(t)$ and $T_{1, 2}(t) = T_{2, 2}(t)$, and we have prior knowledge that $\mu_{1, 1} \in \left[ \hat{\mu}_{1, 1} - \sigma_{1}, \hat{\mu}_{1,1} + \sigma_{1} \right]$, $\mu_{1, 2} \in \left[ \hat{\mu}_{1, 2} - \sigma_{2}, \hat{\mu}_{1,2} + \sigma_{2} \right]$, $\mu_{2, 1} \in \left[ \hat{\mu}_{2, 1} - \sigma_{1}, \hat{\mu}_{2,1} + \sigma_{1} \right]$, and $\mu_{2, 2} \in \left[ \hat{\mu}_{2, 2} - \sigma_{2}, \hat{\mu}_{2,2} + \sigma_{2} \right]$. Here, $\sigma_1$ and $\sigma_2$ are some confidence bounds on the rewards of arms, which may be derived by some concentration inequality. Note that they depend only on the number of times the arm is pulled, and that the confidence bound for each arm is the same in both instances since $T_{1, 1}(t) = T_{2, 1}(t)$ and $T_{1, 2}(t) = T_{2, 2}(t)$. \par
We can see that $\mathbf{H}'$ are the same in both $\mathcal{B}_1$ and $\mathcal{B}_2$, which implies that the difficulty in identifying the best actions is the same in $\mathcal{B}_1$ and $\mathcal{B}_2$. However, this is not true since when we estimate the reward of actions in $\mathcal{B}_1$, the confidence bound will be amplified by 100, and therefore, we are far less confident to determine the best action in $\mathcal{B}_1$ than $\mathcal{B}_2$. On the other hand, $\mathbf{H}$ reflects this fact. $\mathbf{H}$ in $\mathcal{B}_1$ is 10000 larger than that of $\mathcal{B}_2$, which implies that identifying the best action in $\mathcal{B}_1$ is much more difficult than $\mathcal{B}_2$. \par

\subsection{Implicit Form of a Lower Bound}
Here, in Theorem \ref{LowerBoundTheoremImplicit}, we show that we can generalize the tightest lower bound in the CPE-MAB literature shown in \citet{LChen2017} for the R-CPE-MAB. 
\begin{restatable}[]{theorem}{LowerBoundTheoremImplicit} \label{LowerBoundTheoremImplicit}
    For any $\delta \in (0, 0.1)$ and a $\delta$-correct algorithm $\mathbb{A}$, $\mathbb{A}$ will pull arms $\Omega(\mathrm{Low}(\mathcal{A}) \log \frac{1}{\delta})$ times, where $\mathrm{Low}(\mathcal{A})$ is the optimal value of the following mathematical program:
    \begin{equation} \label{Low(A)OptProblem}
        \begin{aligned}
        &\mathrm{minimize}  &&\sum_{s = 1}^{d}\tau_s  \\ 
        &\mathrm{subject \ to} &&
        \forall \boldsymbol{\pi} \in \mathcal{A},   \sum_{s \in \boldsymbol{\pi}^{*} \diamond\boldsymbol{\pi}} \frac{\left| \pi^{*}_s - \pi_s \right|^2}{\tau_s}  \leq \Delta^2_{\boldsymbol{\pi}^{*}, \boldsymbol{\pi}} &&&  \\
            & && \tau_s > 0, \forall s\in [d],
        \end{aligned}
    \end{equation}
    where $\boldsymbol{\pi}^{*} \diamond\boldsymbol{\pi} = \{ s \in [d] \ | \ \pi^{*}_s \neq \pi_s \}$ and ${\Delta_{\boldsymbol{\pi}^{*}, \boldsymbol{\pi}}} = \boldsymbol{\mu}^{\top}\left( \boldsymbol{\pi}^{*} - \boldsymbol{\pi} \right) $.
\end{restatable}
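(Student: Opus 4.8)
The plan is to prove this via a change-of-measure (transportation) argument, generalizing the lower-bound technique of \citet{LChen2017} from the Boolean setting $\mathcal{A}\subseteq\{0,1\}^d$ to the real-valued setting. I would fix the instance $\boldsymbol{\mu}$ with unit-variance Gaussian arms, as in Theorem~\ref{LowerBoundTheorem_Explicit}, and consider the event $\mathcal{E}=\{\boldsymbol{\pi}_{\mathrm{out}}=\boldsymbol{\pi}^{*}\}$. For any alternative mean vector $\boldsymbol{\mu}'$ under which $\boldsymbol{\pi}^{*}$ is no longer optimal, $\delta$-correctness forces $\Pr_{\boldsymbol{\mu}}[\mathcal{E}]\geq 1-\delta$ and $\Pr_{\boldsymbol{\mu}'}[\mathcal{E}]\leq\delta$. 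The transportation lemma then yields $\sum_{s=1}^{d}\mathbb{E}_{\boldsymbol{\mu}}[T_s]\,\mathrm{KL}(\phi_s\,\|\,\phi'_s)\geq \mathrm{kl}(1-\delta,\delta)\geq \log\frac{1}{2.4\delta}$, and since the arms are unit-variance Gaussians, $\mathrm{KL}(\phi_s\|\phi'_s)=\tfrac12(\mu_s-\mu'_s)^2$. The restriction $\delta<0.1$ is exactly what makes the bound $\mathrm{kl}(1-\delta,\delta)\geq\log\frac{1}{2.4\delta}$ usable.

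The crux is to choose, for each competing action $\boldsymbol{\pi}\in\mathcal{A}\setminus\{\boldsymbol{\pi}^{*}\}$, the cheapest confusing instance. I would perturb only the arms in $\boldsymbol{\pi}^{*}\diamond\boldsymbol{\pi}$: writing $\boldsymbol{\mu}'=\boldsymbol{\mu}+\boldsymbol{\eta}$ with $\boldsymbol{\eta}$ supported on $\boldsymbol{\pi}^{*}\diamond\boldsymbol{\pi}$, any coordinate outside this set contributes to the KL cost without helping the confusion constraint, so it is optimal to leave it untouched. To make $\boldsymbol{\pi}$ weakly preferred to $\boldsymbol{\pi}^{*}$ we need $\boldsymbol{\eta}^{\top}(\boldsymbol{\pi}^{*}-\boldsymbol{\pi})\leq -\Delta_{\boldsymbol{\pi}^{*},\boldsymbol{\pi}}$, recalling that $\boldsymbol{\mu}^{\top}(\boldsymbol{\pi}^{*}-\boldsymbol{\pi})=\Delta_{\boldsymbol{\pi}^{*},\boldsymbol{\pi}}>0$. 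Minimizing the cost $\sum_{s}\mathbb{E}[T_s]\,\eta_s^2/2$ subject to this single linear constraint is a weighted least-squares problem solved by Cauchy--Schwarz (equivalently a one-constraint Lagrangian), with optimizer $\eta_s\propto(\pi^{*}_s-\pi_s)/\mathbb{E}[T_s]$ and minimal value $\Delta^2_{\boldsymbol{\pi}^{*},\boldsymbol{\pi}}\big/\big(2\sum_{s\in\boldsymbol{\pi}^{*}\diamond\boldsymbol{\pi}}(\pi^{*}_s-\pi_s)^2/\mathbb{E}[T_s]\big)$.

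Feeding this into the transportation inequality gives, for every $\boldsymbol{\pi}$, the necessary condition
\begin{equation*}
\sum_{s\in\boldsymbol{\pi}^{*}\diamond\boldsymbol{\pi}}\frac{(\pi^{*}_s-\pi_s)^2}{\mathbb{E}_{\boldsymbol{\mu}}[T_s]}\;\leq\;\frac{\Delta^2_{\boldsymbol{\pi}^{*},\boldsymbol{\pi}}}{2\log\frac{1}{2.4\delta}}.
\end{equation*}
The final step is a reparametrization: setting $\tau_s=\mathbb{E}_{\boldsymbol{\mu}}[T_s]/(2\log\frac{1}{2.4\delta})$ turns these conditions into exactly the constraints of program~(\ref{Low(A)OptProblem}), so the vector $(\tau_s)$ induced by any $\delta$-correct algorithm is feasible for $\mathrm{Low}(\mathcal{A})$. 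Hence $\mathbb{E}_{\boldsymbol{\mu}}[T]=\sum_s\mathbb{E}_{\boldsymbol{\mu}}[T_s]=2\log\frac{1}{2.4\delta}\sum_s\tau_s\geq 2\log\frac{1}{2.4\delta}\cdot\mathrm{Low}(\mathcal{A})=\Omega(\mathrm{Low}(\mathcal{A})\log\frac{1}{\delta})$.

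I expect the main obstacle to be the confusing-instance construction of the second paragraph: verifying that restricting the perturbation to $\boldsymbol{\pi}^{*}\diamond\boldsymbol{\pi}$ is without loss of optimality, that a confusing instance making $\boldsymbol{\pi}$ merely tie with $\boldsymbol{\pi}^{*}$ already destroys the uniqueness of $\boldsymbol{\pi}^{*}$ so that $\Pr_{\boldsymbol{\mu}'}[\mathcal{E}]\leq\delta$ genuinely holds (possibly after an arbitrarily small further perturbation), and that the inner quadratic minimization is tight. Everything else---the transportation lemma and the reparametrization---is bookkeeping.
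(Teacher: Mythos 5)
Your proposal is correct and follows essentially the same route as the paper's proof: both invoke the Kaufmann et al.\ change-of-distribution (transportation) lemma, both construct for each competing $\boldsymbol{\pi}$ an alternative instance that perturbs only the coordinates in $\boldsymbol{\pi}^{*}\diamond\boldsymbol{\pi}$ with shifts proportional to $|\pi^{*}_s-\pi_s|/\mathbb{E}[T_s]$, and both rescale $\mathbb{E}[T_s]$ by a $\Theta(\log\frac{1}{\delta})$ factor to exhibit a feasible point of program~(\ref{Low(A)OptProblem}). The only cosmetic differences are that you derive the perturbation as the Cauchy--Schwarz-optimal one (the paper writes it down directly, using a shift of $2\Delta_{\boldsymbol{\pi}^{*},\boldsymbol{\pi}}$ to avoid the tie-breaking caveat you flag) and you use $\mathrm{kl}(1-\delta,\delta)\geq\log\frac{1}{2.4\delta}$ where the paper uses $d(1-\delta,\delta)\geq 0.4\log\frac{1}{\delta}$.
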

In the appendix, we show that the lower bound in Theorem \ref{LowerBoundTheoremImplicit} is no weaker than that in Theorem \ref{LowerBoundTheorem_Explicit} by showing $\mathrm{Low}(\mathcal{A}) \geq \mathbf{H}$. \par
This lower bound is exactly equal to the lower bound in \citet{Fiez2019} for the transductive bandit, which is written as follows:
\begin{eqnarray}
    \rho_{*} \log\left( \frac{1}{\delta} \right),
\end{eqnarray}
where 
\begin{eqnarray}
    \rho_{*} = \min_{\boldsymbol{\lambda} \in \Pi_{d}} \max_{\boldsymbol{\pi} \in \mathcal{A} \setminus \{\boldsymbol{\pi}^{*}\}} \frac{\sum_{s = 1}^{d} \frac{\left| \pi^{*}_{s} - \pi_{s} \right|^{2}}{\lambda_{s}}}{\Delta^{2}_{\boldsymbol{\pi}^{*}, \boldsymbol{\pi}}}. \label{rho_definition}
\end{eqnarray}

\section{GenTS-Explore Algorithm} \label{GenTS-ExploreAlgorithmSection}
In this section, we introduce an algorithm named the Generalized Thompson Sampling Explore (GenTS-Explore) algorithm, which can identify the best action in the R-CPE-MAB even when the size of the action set $\mathcal{A}$ is exponentially large in $d$. We first explain what the GenTS-Explore algorithm is doing at a high level. Then, we show a sample complexity upper bound of it.
\subsection{Outline of the GenTS-Explore Algorithm}
Here, we show what the GenTS-Explore algorithm is doing at a high level (Algorithm \ref{GenTS-ExploreAlgorithm}). \par 
The GenTS-Explore algorithm can be seen as a modified version of the TS-Explore algorithm introduced in \citet{WangAndZhu2022}. At each round $t$, it outputs $\hat{\boldsymbol{\pi}}(t) = \mathrm{Oracle}(\hat{\boldsymbol{\mu}}(t))$, which is the \emph{empirically best} action (line \ref{EmpiricalBestLine}). 
\begin{algorithm}[]
   \caption{GenTS-Explore Algorithm}
   \label{GenTS-ExploreAlgorithm}
\begin{algorithmic}[1]
   \STATE {\bfseries Input:} Confidence level $\delta$, $q\in[\delta, 0.1]$, $t \leftarrow 0$, $\boldsymbol{T}(0) = (T_1(0), \ldots, T_d(0))^{\top} = (0, \ldots, 0)^{\top}$ 
   \STATE {\bfseries Output:} Action $\boldsymbol{\pi}_{\mathrm{out}} \in \mathcal{A}$
   \STATE \texttt{// Initialization}
   \STATE Pull each arm once, and update their number of pulls $T_i$'s and the $\hat{\mu}_s(t)$
   \STATE $t\leftarrow d$
   \WHILE{\textbf{true}}
   \STATE $\hat{\boldsymbol{\pi}}(t) \leftarrow \mathrm{Oracle}(\hat{\boldsymbol{\mu}}(t))$ \label{EmpiricalBestLine}
   \FOR{$k = 1, \ldots, M(\delta, q, t)$}
   \STATE For each arm $s$, draw $\theta_s^k(t)$ independently from distribution $\mathcal{N} \left(\hat{\mu}_s(t), \frac{C(\delta, q, t)}{T_s(t)}\right)$ \label{DrawSamplesLine}
   \STATE $\boldsymbol{\theta}^{k}(t) \leftarrow \left( \theta^k_1(t), \ldots, \theta^k_d(t) \right)$
   \STATE $\Tilde{\boldsymbol{\pi}}^{k}(t) \leftarrow \mathrm{Oracle}(\boldsymbol{\theta}^{k}(t))$
   \STATE $\Tilde{\Delta}_t^{k} \leftarrow  {\boldsymbol{\theta}^k(t)}^{\top}\left(\Tilde{\boldsymbol{\pi}}^{k}(t)  - \hat{\boldsymbol{\pi}}(t)\right)$
   \ENDFOR
   \IF{$\forall 1\leq k \leq M(\delta, q, t), \Tilde{\boldsymbol{\pi}}^k(t) = \hat{\boldsymbol{\pi}}(t)$}
    \STATE \textbf{Return:} $\hat{\boldsymbol{\pi}}(t)$
   \ELSE
   \STATE $k^{*}_t \leftarrow \argmax_{k} \Tilde{\Delta}^k_t, \ \Tilde{\boldsymbol{\pi}}(t) \leftarrow \Tilde{\boldsymbol{\pi}}^{k^{*}_{t}}(t)$
   \STATE Pull arm $p_t$ according to (\ref{naive_arm_selection_strategy}) or (\ref{NewArmSelectionStrategy}),and update $T_{p_t}$ and $\hat{\mu}_{p_t}(t)$
   \STATE $t \leftarrow t + 1$
   \ENDIF
   \ENDWHILE
\end{algorithmic}
\end{algorithm}
Then, for any $s\in [d]$, it draws $ M(\delta, q, t) \triangleq \left\lceil\frac{1}{q}(\log 12 |\mathcal{A}|^2 t^2/\delta) \right\rceil$ random samples $\left\{ \theta_{s}^{k} \right\}_{k = 1}^{M(\delta, q, t)}$ independently from a Gaussian distribution $\mathcal{N}\left(\hat{\mu}_s(t), \frac{C(\delta, q, t)}{T_s(t) )} \right)$, and $C(\delta, q, t) \triangleq \frac{4R^2 \log (12 |\mathcal{A}|^2 t^2 / \delta)}{\phi^2(q)}$. Intuitively, $\{\boldsymbol{\theta}^{k}(t)\}_{k = 1}^{M(\delta, q, t)}$ is a set of possible values that the true reward vector $\boldsymbol{\mu}$ can take.
Then, it computes $\Tilde{\boldsymbol{\pi}}^{k}(t) = \mathrm{Oracle}(\boldsymbol{\theta}^k(t))$ for all $k$, where $\boldsymbol{\theta}^{k}(t) = \left( \theta^{k}_{1}(t), \ldots, \theta^{k}_{d}(t) \right) $. We can say that we estimate the true reward gap $\boldsymbol{\mu}^{\top}\left( \Tilde{\boldsymbol{\pi}}^{k}(t)  - \hat{\boldsymbol{\pi}}(t) \right)$ by computing ${\boldsymbol{\theta}^k(t)}^{\top}\left( \Tilde{\boldsymbol{\pi}}^{k}(t)  - \hat{\boldsymbol{\pi}}(t) \right)$ for each $k\in[M(\delta, q, t)]$.
If all the actions $\Tilde{\boldsymbol{\pi}}^{k}(t)$'s are the same as $\hat{\boldsymbol{\pi}}(t)$, we output $\hat{\boldsymbol{\pi}}(t)$ as the best action. 
Otherwise, we focus on $\Tilde{\boldsymbol{\pi}}^{k_t^{*}}(t)$, where $k^{*}_{t} = \argmax_{k \in \left[ M(\delta, q, t) \right]} {\boldsymbol{\theta}^k(t)}^{\top}\left(\Tilde{\boldsymbol{\pi}}^{k}(t)  - \hat{\boldsymbol{\pi}}(t) \right)$. 
We can say that $\Tilde{\boldsymbol{\pi}}^{k_t^{*}}(t)$ is potentially the best action. \par
Then, the most essential question is: ``Which arm should we pull in round $t$, once we obtain the empirically best action $\hat{\boldsymbol{\pi}}(t)$ and a potentially best action $\Tilde{\boldsymbol{\pi}}^{k^{*}} (t)$ ?'' We discuss this below.
\subsubsection{Arm Selection Strategies}
Here, we discuss which arm to pull at round $t$, once we obtain the empirically best action $\hat{\boldsymbol{\pi}}(t)$ and a potentially best action $\Tilde{\boldsymbol{\pi}}^{k_{t}^{*}}(t)$.
For the ordinary CPE-MAB, the arm selection strategy in \citet{WangAndZhu2022} was to pull the following arm:
\begin{equation}\label{naive_arm_selection_strategy}
    p^{\mathrm{naive}}_t = \argmin_{s \in \left\{ s\in[d] \ | \ \hat{\pi}_s(t) \neq \Tilde{\pi}^{k^{*}}_s(t) \right\}} T_s(t).
\end{equation}
Therefore, one candidate of an arm selection strategy is to naively pull the arm defined in~(\ref{naive_arm_selection_strategy}). We call this the \emph{naive arm selection strategy}. \par
Next, we consider another arm selection strategy as follows. We want to pull the arm that is most ``informative'' to discriminate whether $\hat{\boldsymbol{\pi}}(t)$ is a better action than $\Tilde{\boldsymbol{\pi}}^{k_t^{*}}(t)$ or not. In other words, we want to pull the arm that is most ``informative'' to estimate the true gap $\boldsymbol{\mu}^{\top} \left(  \Tilde{\boldsymbol{\pi}}^{k_{t}^{*}}(t) - \hat{\boldsymbol{\pi}}(t) \right) $. If it is less than 0, $\hat{\boldsymbol{\pi}}(t)$ is better, and if it is greater than 0, $\Tilde{\boldsymbol{\pi}}^{k_{t}^{*}}$ is better. To discuss this more quantitatively, let us assume that $\boldsymbol{\theta}^{k_{t}^{*}}(t) \approx \hat{\boldsymbol{\mu}}(t)$. From Hoeffding's inequality \citep{Luo2017}, we obtain the following:
\begin{eqnarray}
    &&\Pr\left[ \left| \left( \boldsymbol{\mu} - \boldsymbol{\theta}^{k_{t}^{*}}(t) \right)^{\top} \left( \Tilde{\boldsymbol{\pi}}^{k_{t}^{*}} (t) - \hat{\boldsymbol{\pi}}(t) \right) \right| \geq \epsilon  \right] \nonumber \\
    &\approx&\Pr\left[ \left| \left( \boldsymbol{\mu} - \hat{\boldsymbol{\mu}}(t) \right)^{\top} \left(  \Tilde{\boldsymbol{\pi}}^{k_{t}^{*}} (t) - \hat{\boldsymbol{\pi}}(t) \right) \right| \geq \epsilon  \right] \nonumber \\
    &\leq& 2 \exp\left( - \frac{ \epsilon^2}{2\displaystyle\sum_{s = 1}^{d} \frac{\left| \Tilde{\pi}^{k_t^{*}}_s(t) - \hat{\pi}_s(t) \right|^2}{T_s(t)} R^2} \right), \label{ArmSelectionStrategyHoeffdingInequality}
\end{eqnarray}
where $\epsilon > 0$. (\ref{ArmSelectionStrategyHoeffdingInequality}) shows that if we make $\sum\limits_{s = 1}^{d} \frac{\left| \Tilde{\pi}^{k_t^{*}}_s(t) - \hat{\pi}_s(t) \right|^2}{T_s(t)}$ small, $\Tilde{\Delta}_t^{^{k_{t}^{*}}} = {\boldsymbol{\theta}^{^{k_{t}^{*}}}(t)}^{\top}\left(\Tilde{\boldsymbol{\pi}}^{^{k_{t}^{*}}}(t)  - \hat{\boldsymbol{\pi}}(t)\right)$ will become close to the true gap $\boldsymbol{\mu}^{\top} \left(  \Tilde{\boldsymbol{\pi}}^{k_{t}^{*}} (t) - \hat{\boldsymbol{\pi}}(t) \right) $. \par
Since we want to estimate the true gap accurately as soon as possible, we pull arm $p^{\mathrm{R}}_t$ that makes $\sum\limits_{s = 1}^{d} \frac{\left| \Tilde{\pi}^{k_t^{*}}_s(t) - \hat{\pi}_s(t) \right|^2}{T_s(t)}$ the smallest, which is defined as follows:
\begin{eqnarray}
    p^{\mathrm{R}}_t 
    &=& \argmin_{e \in [d]} \sum_{s = 1}^{d} \frac{\left| \Tilde{\pi}^{k_t^{*}}_s(t) - \hat{\pi}_s(t) \right|^2}{T_s(t) + \mathbf{1}[s = e]}, \label{p_t^R_equation}
\end{eqnarray}
where $\mathbf{1}[\cdot]$ denotes the indicator function.  
Then, the following proposition holds.
\begin{proposition} \label{R-CPE-MAB_ArmSelectionStrategy_Explicit}
    $p^{\mathrm{R}}_t$ in (\ref{p_t^R_equation}) can be written as follows:
    \begin{eqnarray}
        p^{\mathrm{R}}_t = \argmax_{s \in [d]} \frac{\left| \Tilde{\pi}^{k_t^{*}}_s(t) - \hat{\pi}_s(t) \right|^2}{T_s(t) (T_s(t) + 1)}. \label{NewArmSelectionStrategy} 
    \end{eqnarray}
\end{proposition}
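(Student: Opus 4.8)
The plan is to exploit the fact that incrementing the pull count of a single arm $e$ changes exactly one summand in the objective of (\ref{p_t^R_equation}), so the minimization over $e$ collapses to comparing a single per-arm quantity. First I would introduce the shorthand $a_s = \left| \Tilde{\pi}^{k_t^{*}}_s(t) - \hat{\pi}_s(t) \right|^2$ and $n_s = T_s(t)$, noting that $n_s \geq 1$ for every $s$ because the initialization phase of Algorithm~\ref{GenTS-ExploreAlgorithm} pulls each arm once; this guarantees every denominator appearing below is well defined. With this notation the objective to be minimized is $g(e) = \sum_{s=1}^{d} a_s / (n_s + \mathbf{1}[s=e])$.

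Next I would isolate the dependence on $e$. Since $\mathbf{1}[s=e]$ vanishes for every $s \neq e$, only the $s=e$ term differs from the baseline $\sum_{s=1}^{d} a_s / n_s$, so
\begin{equation}
g(e) = \sum_{s=1}^{d} \frac{a_s}{n_s} - \frac{a_e}{n_e} + \frac{a_e}{n_e + 1}.
\end{equation}
Using the elementary identity $\frac{1}{n_e} - \frac{1}{n_e+1} = \frac{1}{n_e(n_e+1)}$, this rearranges to
\begin{equation}
g(e) = \sum_{s=1}^{d} \frac{a_s}{n_s} - \frac{a_e}{n_e(n_e+1)},
\end{equation}
where the first sum is a constant independent of $e$.

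Finally, since the leading term does not depend on $e$, minimizing $g(e)$ over $e \in [d]$ is equivalent to maximizing $a_e / (n_e(n_e+1))$ over $e$. Substituting back $a_e$ and $n_e$ yields exactly (\ref{NewArmSelectionStrategy}), which completes the argument.

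I do not expect a genuine analytic obstacle here: the statement is a one-line algebraic rearrangement once the single-coordinate nature of the perturbation is recognized. The only points meriting a word of care are the well-definedness of the denominators (handled by the $T_s(t) \geq 1$ observation above) and, if one wishes the two $\argmin$/$\argmax$ expressions to coincide as \emph{sets} rather than merely share an optimizer, remarking that $e \mapsto g(e)$ is a strictly decreasing affine function of the per-arm score $a_e/(n_e(n_e+1))$, so the set of minimizers of $g$ is identical to the set of maximizers of the score.
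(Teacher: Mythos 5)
Your proof is correct and follows essentially the same route as the paper's: both arguments reduce to the single-coordinate identity $\frac{1}{n}-\frac{1}{n+1}=\frac{1}{n(n+1)}$ applied to the one summand that changes when arm $e$ is incremented. The paper proceeds by a pairwise comparison starting from the optimality of $p^{\mathrm{R}}_t$, while your decomposition of $g(e)$ into a constant minus the per-arm score is marginally cleaner and additionally yields the coincidence of the minimizer and maximizer sets.
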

We show the proof in the appendix.
We call pulling the arm defined in (\ref{NewArmSelectionStrategy}) the \emph{R-CPE-MAB arm selection strategy}. (\ref{NewArmSelectionStrategy}) implies that the larger $\left| \Tilde{\pi}^{k_t^{*}}_s(t) - \hat{\pi}_s(t) \right|$ is, the more we need to pull arm $s$. Similar to the discussion in the previous section, this is because if $\left| \Tilde{\pi}^{k_t^{*}}_s(t) - \hat{\pi}_s(t) \right|$ is large, the uncertainty of arm $s$ is amplified largely when we compute $\Tilde{\Delta}_t^{^{k_{t}^{*}}} = {\boldsymbol{\theta}^{k_{t}^{*}} (t)}^{\top}\left(\Tilde{\boldsymbol{\pi}}^{k_{t}^{*}} (t) - \hat{\boldsymbol{\pi}}(t)\right)$. Therefore, we have to pull arm $s$ many times to make the $\frac{C(\delta, q, t)}{T_s(t)}$  small, which is the variance of ${\theta}_s^{k}$, to gain more confidence about the reward of arm $s$. \par
Also, the \emph{R-CPE-MAB arm selection strategy} is equivalent to the \emph{naive arm selection strategy} in CPE-MAB, since when $\mathcal{A}\subset \{0, 1\}^{d}$, 
\begin{eqnarray}
    p^{\mathrm{R}}_t & = &  \argmax_{s \in [d]} \frac{\left| \Tilde{\pi}^{k_t^{*}}_s(t) - \hat{\pi}_s(t) \right|^2}{T_s(t) (T_s(t) + 1)} \nonumber \\
        & = &  \argmax_{s \in \left\{ s\in[d] \ | \ \Tilde{\pi}^{k^{*}_t}_s(t) \neq \hat{\pi}_s(t) \right\} } \frac{1}{T_s(t) (T_s(t) + 1)} \nonumber \\
        & = & \argmin_{s \in \left\{ s\in[d] \ | \ \Tilde{\pi}^{k^{*}_t}_s(t) \neq \hat{\pi}_s(t)  \right\}} T_s(t). \nonumber \\
        & = & p^{\mathrm{naive}}_t.
\end{eqnarray}
Therefore, we can say that the \emph{R-CPE-MAB arm selection strategy} is a generalization of the arm selection strategy in \citet{WangAndZhu2022}.
\subsection{Sample Complexity Upper Bounds of the GenTS-Explore Algorithm} \label{UpperBoundAnalysisSection}
Here, we show sample complexity upper bounds of the GenTS-Explore algorithm when we use the two arm selection strategies: the \emph{naive arm selection strategy} shown in (\ref{naive_arm_selection_strategy}) and the \emph{R-CPE-MAB arm selection strategy} shown in  (\ref{NewArmSelectionStrategy}), respectively. \par
First, in Theorem \ref{TSUpperBoundTheorem_Naive}, we show a sample complexity upper bound of the \emph{naive arm selection strategy}.
\begin{restatable}[]{theorem}{UpperBoundTheorem}\label{TSUpperBoundTheorem_Naive}
    For $q\in[\delta, 0.1]$, with probability at least $1-\delta$, the GenTS-Explore algorithm with the naive arm sampling strategy will output the best action $\boldsymbol{\pi}^{*}$ with sample complexity upper bounded by 
    \begin{equation}
        \mathcal{O}\left( R^2 \mathbf{H}^{\mathrm{N}} \frac{ \left( \log\left( \left|\mathcal{A}\right| \mathbf{H}^{\mathrm{N}} \right) + \log \frac{1}{\delta} \right)^2 }{\log\frac{1}{q}} \right),
    \end{equation}
    where $\mathbf{H}^{\mathrm{N}} = \sum_{s = 1}^d \frac{U_s}{\Delta^2_{(s)}}$ and $U_s = \max_{ \boldsymbol{\pi}' \in \mathcal{A}, \boldsymbol{\pi} \in \{ \boldsymbol{\pi} \in \mathcal{A} \ | \ \pi^{*}_s \neq \pi_s \}}  \frac{1}{\left| \pi^{*}_s - \pi_s \right|^2} \sum_{e = 1}^{d} \left| \pi_e - \pi'_e \right|^2 $. \par
    Specifically, if we choose $q = \delta$, then the complexity upper bound is 
    \begin{equation}
        \mathcal{O}\left( R^2 \mathbf{H}^{\mathrm{N}} \left( \log \frac{1}{\delta} + \log^2 \left( \left|\mathcal{A}\right| \mathbf{H}^{\mathrm{N}}\right) \right) \right). \label{GenTS-ExploreUpperBoundEquation_Naive}
    \end{equation}
\end{restatable}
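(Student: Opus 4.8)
The plan is to split the argument into a correctness part (the output equals $\boldsymbol{\pi}^{*}$ with probability at least $1-\delta$) and a sample-complexity part (the stopping round is bounded as claimed), both carried out on a single high-probability concentration event. First I would define the event $\mathcal{G}$ on which every empirical mean stays inside its sub-Gaussian confidence interval, $|\hat{\mu}_s(t) - \mu_s| \leq \sqrt{2R^2 \log(12|\mathcal{A}|^2 t^2/\delta)/T_s(t)}$, simultaneously for all arms $s$ and all rounds $t$. A sub-Gaussian tail bound per $(s,t)$ combined with a union bound — whose $\sum_t 1/t^2$ tail is exactly why the factor $t^2$ sits inside $M(\delta,q,t)$ and $C(\delta,q,t)$ — gives $\Pr[\mathcal{G}] \geq 1 - \delta/2$.

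For correctness I would argue on $\mathcal{G}$ that the algorithm cannot stop with a wrong action except on an event of probability $\delta/2$. The key sub-step is an anti-concentration estimate: if $\hat{\boldsymbol{\pi}}(t) \neq \boldsymbol{\pi}^{*}$, then since $\boldsymbol{\mu}^{\top}(\boldsymbol{\pi}^{*} - \hat{\boldsymbol{\pi}}(t)) > 0$ while $\hat{\boldsymbol{\mu}}(t)^{\top}(\boldsymbol{\pi}^{*}-\hat{\boldsymbol{\pi}}(t)) \leq 0$, a single Thompson draw $\boldsymbol{\theta}^k(t) \sim \mathcal{N}(\hat{\boldsymbol{\mu}}(t), \mathrm{diag}(C/T_s(t)))$ makes $\boldsymbol{\theta}^k(t)^{\top}(\boldsymbol{\pi}^{*}-\hat{\boldsymbol{\pi}}(t)) > 0$ — and hence $\mathrm{Oracle}(\boldsymbol{\theta}^k(t)) \neq \hat{\boldsymbol{\pi}}(t)$ — with probability at least $q$. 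This is where the scaling $C(\delta,q,t) = 4R^2\log(\cdots)/\phi^2(q)$ enters: $\phi(q)$ is calibrated so that the inflated Gaussian variance dominates the confidence radius by the factor needed to push a one-dimensional Gaussian past the origin with probability $q$. The $M(\delta,q,t) = \lceil q^{-1}\log(12|\mathcal{A}|^2 t^2/\delta)\rceil$ independent draws then all fail to detect suboptimality with probability at most $(1-q)^M \leq e^{-qM} \leq \delta/(12|\mathcal{A}|^2 t^2)$; summing over $t$ and the relevant actions closes the correctness bound.

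For the sample complexity I would work entirely on $\mathcal{G}$ and bound, for each arm $s$, the largest round at which it can still be pulled. Whenever the algorithm does not stop it pulls the least-pulled coordinate in the difference set of $\hat{\boldsymbol{\pi}}(t)$ and the most-promising sample $\tilde{\boldsymbol{\pi}}^{k^{*}}(t)$. Since $\tilde{\boldsymbol{\pi}}^{k^{*}}(t) \neq \hat{\boldsymbol{\pi}}(t)$ was selected, its sampled gap is positive, so the deviation $(\boldsymbol{\theta}^{k^{*}}(t)-\boldsymbol{\mu})^{\top}(\tilde{\boldsymbol{\pi}}^{k^{*}}(t)-\hat{\boldsymbol{\pi}}(t))$ must exceed the true gap $\Delta_{\boldsymbol{\pi}^{*},\tilde{\boldsymbol{\pi}}^{k^{*}}(t)}$ (after absorbing the deviation of $\hat{\boldsymbol{\mu}}$ from $\boldsymbol{\mu}$ on $\mathcal{G}$). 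Controlling that inner product by the sub-Gaussian/Hoeffding bound of the form in $(\ref{ArmSelectionStrategyHoeffdingInequality})$ forces $\sum_{e} |\tilde{\pi}^{k^{*}}_e(t)-\hat{\pi}_e(t)|^2 / T_e(t) \gtrsim \Delta^2_{\boldsymbol{\pi}^{*},\tilde{\boldsymbol{\pi}}^{k^{*}}(t)} / (R^2 \log(\cdots))$. Because the naive rule balances pulls over the difference set, the pulled arm $s$ must still satisfy $T_s(t) \lesssim R^2 \log(\cdots)\, U_s / \Delta^2_{(s)}$, where the combinatorial factor $U_s$ arises exactly from converting the total $\ell_2$ mass $\sum_e |\tilde{\pi}^{k^{*}}_e - \hat{\pi}_e|^2$ of the difference vector into the single-coordinate gap $|\pi^{*}_s - \pi_s|^2$ and the G-gap $\Delta_{(s)}$. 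Summing these per-arm thresholds over $s$ yields a total count of order $R^2 \mathbf{H}^{\mathrm{N}}\log(\cdots)$, and resolving the self-referential logarithm at $t \asymp R^2 \mathbf{H}^{\mathrm{N}}\log(\cdots)$ — replacing $\log t$ by $\log(|\mathcal{A}|\mathbf{H}^{\mathrm{N}}) + \log(1/\delta)$ — produces the stated $\mathcal{O}(\cdot)$ with its $(\cdots)^2/\log(1/q)$ shape, the extra logarithmic factor and the $1/\log(1/q)$ both tracing back to $M$ and $C$.

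The main obstacle I expect is the sample-complexity step, and specifically the appearance of $U_s$: the naive strategy does not allocate pulls in information-optimal proportions, so I cannot directly read off $T_s(t)$ from the Hoeffding bound. Instead I must bound the least-pulled coordinate of the difference set by the worst-case ratio between the total $\ell_2$ difference mass and a single coordinate's squared gap, uniformly over which confusing action $\tilde{\boldsymbol{\pi}}^{k^{*}}(t)$ appears — this is precisely the quantity $U_s$, and verifying that the maximization defining $U_s$ (over both $\boldsymbol{\pi}$ and an auxiliary $\boldsymbol{\pi}'$) is the correct envelope while simultaneously keeping the self-referential logarithmic terms under control is the delicate part. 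A secondary technical point is making the heuristic approximation $\boldsymbol{\theta}^{k^{*}}(t) \approx \hat{\boldsymbol{\mu}}(t)$ underlying $(\ref{ArmSelectionStrategyHoeffdingInequality})$ rigorous, which again relies on the event $\mathcal{G}$ and on the anti-concentration calibration of $\phi(q)$.
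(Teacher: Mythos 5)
Your overall architecture coincides with the paper's: a uniform concentration event for the empirical means (the paper's $\mathcal{E}_0$), a concentration event for the Thompson samples around $\hat{\boldsymbol{\mu}}(t)$ (the paper's $\mathcal{E}_1$, which you fold informally into ``making $\boldsymbol{\theta}^{k^*}\approx\hat{\boldsymbol{\mu}}$ rigorous'' but which must be a separate union bound over the $M(\delta,q,t)$ samples, contributing the extra $\log M$ inside the logarithm), and an anti-concentration event guaranteeing that some sample certifies any remaining suboptimality with probability $q$ per draw (the paper's $\mathcal{E}_2$). The per-arm stopping threshold $T_s(t)\lesssim R^2 U_s C(t)L_2(t)/\Delta^2_{(s)}$, the summation over arms, and the resolution of the self-referential logarithm are all exactly the paper's route.

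The genuine gap is in the central step of your sample-complexity argument. You assert that, because the selected sample has a positive gap, the deviation $(\boldsymbol{\theta}^{k^*}(t)-\boldsymbol{\mu})^{\top}(\Tilde{\boldsymbol{\pi}}^{k^*}(t)-\hat{\boldsymbol{\pi}}(t))$ must exceed the true gap $\Delta_{\boldsymbol{\pi}^*,\Tilde{\boldsymbol{\pi}}^{k^*}(t)}$. This only holds when $\hat{\boldsymbol{\pi}}(t)=\boldsymbol{\pi}^*$, since it requires $\boldsymbol{\mu}^{\top}(\Tilde{\boldsymbol{\pi}}^{k^*}(t)-\hat{\boldsymbol{\pi}}(t))\leq-\Delta_{\boldsymbol{\pi}^*,\Tilde{\boldsymbol{\pi}}^{k^*}(t)}$, i.e.\ $\boldsymbol{\mu}^{\top}\hat{\boldsymbol{\pi}}(t)\geq\boldsymbol{\mu}^{\top}\boldsymbol{\pi}^*$. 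In the (frequent, especially early) rounds where the empirically best action is itself suboptimal, a positive sampled gap carries no information about the deviation, and moreover relating the pulled coordinate $p$ to $\Delta_{(p)}$ requires knowing which of $\Tilde{\boldsymbol{\pi}}^{k^*}(t)$ or $\hat{\boldsymbol{\pi}}(t)$ disagrees with $\boldsymbol{\pi}^*$ at $p$ (one of the two must, since $p$ lies in their symmetric difference). The paper therefore splits the pull-threshold lemma into two cases: when $\Tilde{\pi}_p\neq\pi^*_p$ it derives a contradiction between the $\mathcal{E}_1$-controlled upper bound $\Tilde{\Delta}^{k^*_t}_t/|\pi^*_p-\Tilde{\pi}_p|\leq 2\Delta_{(p)}/7$ and the $\mathcal{E}_2$-implied lower bound $>2\Delta_{(p)}/7$; when $\hat{\pi}_p\neq\pi^*_p$ it instead shows that a large $T_p(t)$ would force $\hat{\boldsymbol{\mu}}(t)^{\top}(\Tilde{\boldsymbol{\pi}}(t)-\hat{\boldsymbol{\pi}}(t))>0$, contradicting that $\hat{\boldsymbol{\pi}}(t)$ is the output of the oracle on $\hat{\boldsymbol{\mu}}(t)$. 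Your sketch covers essentially only the first case; without the second, the per-arm bound $T_s(t)\lesssim R^2 U_s\log(\cdot)/\Delta^2_{(s)}$ is not established for arms at which the empirically best action is wrong, and the sum defining $\mathbf{H}^{\mathrm{N}}$ cannot be completed.
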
 
Next, in Theorem \ref{TSUpperBoundTheorem_R-CPE-MAB}, we show a sample complexity upper bound of the \emph{R-CPE-MAB arm selection strategy}.
\begin{restatable}[]{theorem}{UpperBoundTheorem}\label{TSUpperBoundTheorem_R-CPE-MAB}
    For $q\in[\delta, 0.1]$, with probability at least $1-\delta$, the GenTS-Explore algorithm with the R-CPE-MAB arm sampling strategy will output the best action $\boldsymbol{\pi}^{*}$ with sample complexity upper bounded by 
    \begin{equation}
        \mathcal{O}\left( R^2 \mathbf{H}^{\mathrm{R}} \frac{ \left( \log\left( \left|\mathcal{A}\right| \mathbf{H}^{\mathrm{R}} \right) + \log \frac{1}{\delta} \right)^2 }{\log\frac{1}{q}} \right),
    \end{equation}
    where $\mathbf{H}^{\mathrm{R}} = \sum_{s = 1}^d \frac{V_s}{\Delta^2_{(s)}}$ and $V_s = \max_{ \boldsymbol{\pi}' \in \mathcal{A}, \boldsymbol{\pi} \in \{ \boldsymbol{\pi} \in \mathcal{A} \ | \ \pi^{*}_s \neq \pi_s \}}  \frac{\left| \pi_s - \pi'_s \right|}{\left| \pi^{*}_s - \pi_s \right|^2} \sum_{e = 1}^{d} \left| \pi_e - \pi'_e \right| $. \par
    Specifically, if we choose $q = \delta$, then the complexity upper bound is 
    \begin{equation}
        \mathcal{O}\left( R^2 \mathbf{H}^{\mathrm{R}} \left( \log \frac{1}{\delta} + \log^2 \left( \left|\mathcal{A}\right| \mathbf{H}^{\mathrm{R}}\right) \right) \right). \label{GenTS-ExploreUpperBoundEquation_R-CPE-MAB}
    \end{equation}
\end{restatable}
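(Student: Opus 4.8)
The plan is to follow the two-track structure typical of pure-exploration analyses and of the TS-Explore analysis in \citet{WangAndZhu2022}: a high-probability concentration argument that delivers both correctness and the stopping guarantee, followed by a per-arm accounting argument that converts the stopping geometry into the hardness constant $\mathbf{H}^{\mathrm{R}}$. The essential difference from the $\{0,1\}^d$ case is that coordinate widths $|\pi^{*}_s-\pi_s|$ now enter multiplicatively, so the bookkeeping must be done in the real-valued geometry induced by the selection rule (\ref{NewArmSelectionStrategy}).

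First I would fix a favorable event $\mathcal{E}$ on which, simultaneously for every round $t$ and arm $s$, the empirical mean concentrates as $|\hat{\mu}_s(t)-\mu_s|\lesssim R\sqrt{\log(12|\mathcal{A}|^2t^2/\delta)/T_s(t)}$, and on which the Thompson draws satisfy a matching two-sided \emph{not-too-optimistic} bound $|(\boldsymbol{\theta}^k(t)-\hat{\boldsymbol{\mu}}(t))^{\top}(\boldsymbol{\pi}-\boldsymbol{\pi}')|\lesssim \sqrt{C(\delta,q,t)\log(\cdots)}\,\sqrt{\sum_s|\pi_s-\pi'_s|^2/T_s(t)}$ for all action pairs and all $k$. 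A union bound over arms, over rounds (the $t^2$ factor makes $\sum_t t^{-2}$ summable), and over actions (the $|\mathcal{A}|^2$ factor) gives $\Pr[\mathcal{E}]\ge 1-\delta$. The choice $C(\delta,q,t)=4R^2\log(\cdots)/\phi^2(q)$ is exactly what calibrates the per-draw \emph{anti-concentration} probability to be at least $q$, so that $M(\delta,q,t)=\lceil q^{-1}\log(\cdots)\rceil$ independent draws ensure, except with probability $\delta/(12|\mathcal{A}|^2t^2)$, that at least one sample is optimistic enough to expose a better action whenever $\hat{\boldsymbol{\pi}}(t)\neq\boldsymbol{\pi}^{*}$. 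Correctness follows immediately on $\mathcal{E}$: if the stopping rule fires at round $t$, the anti-concentration guarantee forces $\hat{\boldsymbol{\pi}}(t)=\boldsymbol{\pi}^{*}$, so $\boldsymbol{\pi}_{\mathrm{out}}=\boldsymbol{\pi}^{*}$.

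For the sample complexity I would bound each $T_s$ through the \emph{last} round $t_s$ at which the R-CPE-MAB rule selects arm $s$; write $\boldsymbol{\pi}=\tilde{\boldsymbol{\pi}}^{k^{*}_{t_s}}(t_s)$ and $\boldsymbol{\pi}'=\hat{\boldsymbol{\pi}}(t_s)$. Two facts combine. (i) Since the algorithm has not stopped, the maximizing sample obeys $\boldsymbol{\theta}^{k^{*}\top}(\boldsymbol{\pi}-\boldsymbol{\pi}^{*})\ge 0$ by oracle optimality; expanding around $\boldsymbol{\mu}$ and invoking the not-too-optimistic bound on $\mathcal{E}$ yields a \emph{gap-not-yet-resolved} inequality $\Delta_{\boldsymbol{\pi}^{*},\boldsymbol{\pi}}^2\lesssim C(\delta,q,t_s)\log(\cdots)\sum_e|\pi_e-\pi'_e|^2/T_e(t_s)$, meaning the confidence on this pair is still too coarse to certify $\boldsymbol{\pi}^{*}$. (ii) The rule (\ref{NewArmSelectionStrategy}) being an $\argmax$ at $t_s$ gives $T_e(t_s)(T_e(t_s)+1)\ge \frac{|\pi_e-\pi'_e|^2}{|\pi_s-\pi'_s|^2}T_s(t_s)(T_s(t_s)+1)$ for all $e$, hence $T_e(t_s)\gtrsim \frac{|\pi_e-\pi'_e|}{|\pi_s-\pi'_s|}T_s(t_s)$. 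Substituting (ii) into the sum in (i) telescopes it to $\frac{|\pi_s-\pi'_s|}{T_s(t_s)}\sum_e|\pi_e-\pi'_e|$, so $T_s(t_s)\lesssim C\log(\cdots)\,\frac{|\pi_s-\pi'_s|\sum_e|\pi_e-\pi'_e|}{\Delta_{\boldsymbol{\pi}^{*},\boldsymbol{\pi}}^2}$. Using $s\in\boldsymbol{\pi}^{*}\diamond\boldsymbol{\pi}$ (the rule only pulls arms with $\pi_s\neq\pi'_s$, and $\pi_s\neq\pi^{*}_s$) together with $\Delta_{\boldsymbol{\pi}^{*},\boldsymbol{\pi}}^2\ge \Delta_{(s)}^2|\pi^{*}_s-\pi_s|^2$ from the G-gap definition turns this into $T_s\lesssim C\log(\cdots)\,V_s/\Delta_{(s)}^2$, precisely because $V_s$ is the worst case of $\frac{|\pi_s-\pi'_s|}{|\pi^{*}_s-\pi_s|^2}\sum_e|\pi_e-\pi'_e|$ over admissible $(\boldsymbol{\pi},\boldsymbol{\pi}')$. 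Summing over $s$ gives $T\lesssim C\log(\cdots)\,\mathbf{H}^{\mathrm{R}}$; substituting $C\propto R^2\log(\cdots)/\phi^2(q)$ with the small-$q$ estimate $\phi^2(q)=\Theta(\log(1/q))$, and then inverting the self-referential inequality $T\lesssim R^2\mathbf{H}^{\mathrm{R}}\log^2(12|\mathcal{A}|^2T^2/\delta)/\log(1/q)$ for $T$, yields the stated bound.

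The main obstacle I anticipate is reconciling (i) with (ii). Oracle optimality delivers the not-yet-resolved inequality most naturally in the $\boldsymbol{\pi}^{*}$-metric (with $|\pi_e-\pi^{*}_e|$), whereas the selection rule minimizes dispersion against the \emph{empirically best} action $\hat{\boldsymbol{\pi}}(t)=\boldsymbol{\pi}'$, so the telescoping in (ii) requires the inequality in the $\boldsymbol{\pi}'$-metric (with $|\pi_e-\pi'_e|$). Controlling the slack between $\hat{\boldsymbol{\pi}}(t)$ and $\boldsymbol{\pi}^{*}$ on $\mathcal{E}$ well enough to state (i) against $\boldsymbol{\pi}'$ is the delicate step, and it is exactly what forces the hardness constant to be the two-argument worst case $V_s$ rather than a single-action quantity; this is also where the proof departs from the naive-strategy bound, which telescopes the squared sum $\sum_e|\pi_e-\pi'_e|^2$ and hence produces $U_s$ instead of $V_s$. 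The remaining work—the union-bound bookkeeping for $\mathcal{E}$ and the inversion of the implicit inequality in $T$—is routine.
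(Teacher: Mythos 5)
Your proposal is correct and follows essentially the same route as the paper's proof: the same three-event good set (empirical-mean concentration, sample concentration around $\hat{\boldsymbol{\mu}}$, and anti-concentration guaranteeing an optimistic draw among $M(\delta,q,t)$ samples), the same use of the $\argmax$ property of the selection rule to get $T_e(t)+1 \gtrsim \frac{|\pi_e-\pi'_e|}{|\pi_s-\pi'_s|}T_s(t)$ and telescope $\sum_e|\pi_e-\pi'_e|^2/T_e$ into $\frac{|\pi_s-\pi'_s|}{T_s}\sum_e|\pi_e-\pi'_e|$ (which is exactly where $V_s$ comes from), and the same final inversion of the self-referential inequality in $T$. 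The delicate point you flag about whether the unresolved-gap inequality is stated against $\boldsymbol{\pi}^{*}$ or against $\hat{\boldsymbol{\pi}}(t)$ is handled in the paper by the two-case split ($\Tilde{\pi}_p\neq\pi^{*}_p$ versus $\hat{\pi}_p\neq\pi^{*}_p$) in its key lemma, matching your account.
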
 
\subsubsection{Comparison to the Lower Bounds}
Let us define $U = \max_{s \in [d]} U_s$ and $V = \max_{s\in[d]} V_s$. Then, the sample complexity upper bound of the naive arm selection strategy is $\mathcal{O}\left( U \mathbf{H} \log\left( \frac{1}{\delta} \right) \right)$ and that of the R-CPE-MAB arm selection strategy is $\mathcal{O}\left( V \mathbf{H} \log\left( \frac{1}{\delta} \right) \right)$. Therefore, regardless of which arm selection strategy is used, the sample complexity upper bound of the GenTS-Explore algorithm matches the lower bound shown in (\ref{LowerBoundTheorem_Explicit_Equation}) up to a problem-dependent constant factor. \par
\subsubsection{Comparison between the Naive and R-CPE-MAB Arm Selection Strategies}
In general, whether the \emph{R-CPE-MAB arm selection strategy} has a tighter upper bound than the \emph{naive arm selection strategy} or not depends on the problem instance. Let us consider one situation in which the R-CPE-MAB arm selection strategy may be a better choice than the naive arm selection strategy. Suppose $\mathcal{A} = \{ (100, 0, 0)^{\top}, (0, 1, 1)^{\top} \}$ and $\boldsymbol{\pi}^{*} = (100, 0, 0)^{\top}$. Then, $U_1 = 1.0002$, $U_2 = 10002$, and $U_3 = 10002$. On the other hand, $V_1 = 1.02$, $V_2 = 102$, and $V_3 = 102$. We can see that $U_2$ and $U_3$ are extremely larger than $V_2$ and $V_3$, respectively, and therefore $\mathbf{H}^{\mathrm{R}}$ is much smaller than $\mathbf{H}^{\mathrm{N}}$. Eventually, the sample complexity upper bound of the naive arm selection strategy will be looser than that of the R-CPE-MAB arm selection strategy. \par
\subsubsection{Comparison with Existing Works in the Ordinary CPE-MAB}
In the ordinary CPE-MAB, where $\mathcal{A}\subseteq \{0, 1\}^{d}$, a key notion called \emph{width} appears in the upper bound of some existing algorithms \citep{SChen2014,WangAndZhu2022}, which is defined as follows:
\begin{eqnarray}
    \mathrm{width} = \max_{\boldsymbol{\pi}, \boldsymbol{\pi}' \in \mathcal{A}} \sum_{s = 1}^{d} \left| \pi_s - \pi'_s \right|.
\end{eqnarray}
The following proposition implies that both $U$ and $V$ can be seen as generalizations of the notion \emph{width}.
\begin{proposition} \label{U_V_proposition}
    Let $U = \max_{s \in [d]} U_s$ and $V = \max_{s\in[d]} V_s$. In the ordinary CPE-MAB, where $\mathcal{A} \subseteq \{0, 1\}^{d}$, we have
    \begin{eqnarray}
        U = V = \mathrm{width}.
    \end{eqnarray}
\end{proposition}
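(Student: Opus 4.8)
The plan is to exploit the binary structure $\mathcal{A}\subseteq\{0,1\}^d$ to collapse every squared coordinate difference to an absolute difference and to pin the normalizing factor to $1$. Concretely, for any \emph{admissible} triple, meaning one with $\pi^{*}_s\neq\pi_s$, we have $|\pi^{*}_s-\pi_s|=1$; and because each coordinate difference lies in $\{0,1\}$ we have $|\pi_e-\pi'_e|^2=|\pi_e-\pi'_e|$ and $|\pi_s-\pi'_s|\in\{0,1\}$. Substituting these identities, $U_s$ reduces to $\max\sum_{e}|\pi_e-\pi'_e|$ over admissible pairs, while $V_s$ reduces to $\max\,|\pi_s-\pi'_s|\sum_{e}|\pi_e-\pi'_e|$ over the same pairs. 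Since $\boldsymbol{\pi}^{*}$ is unique and $|\mathcal{A}|\geq 2$, $\mathrm{width}>0$, which I will use below.

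First I would settle $U$. After the reduction, taking the outer maximum over $s$ turns the constraint ``$\exists s:\pi^{*}_s\neq\pi_s$'' into simply $\boldsymbol{\pi}\neq\boldsymbol{\pi}^{*}$, so $U=\max_{\boldsymbol{\pi}'\in\mathcal{A},\,\boldsymbol{\pi}\in\mathcal{A}\setminus\{\boldsymbol{\pi}^{*}\}}\sum_{e}|\pi_e-\pi'_e|$. This is at most $\mathrm{width}$ because it merely restricts one of the two otherwise-free arguments. For the reverse inequality I would take a pair $(\boldsymbol{\pi}^{a},\boldsymbol{\pi}^{b})$ attaining $\mathrm{width}$: if $\boldsymbol{\pi}^{a}\neq\boldsymbol{\pi}^{*}$ it is directly admissible, and if $\boldsymbol{\pi}^{a}=\boldsymbol{\pi}^{*}$ then $\boldsymbol{\pi}^{b}\neq\boldsymbol{\pi}^{*}$ (the value is positive, so both cannot equal $\boldsymbol{\pi}^{*}$), and I would use the symmetry $\sum_{e}|\pi_e-\pi'_e|=\sum_{e}|\pi'_e-\pi_e|$ to place $\boldsymbol{\pi}^{*}$ in the free slot $\boldsymbol{\pi}'$ and $\boldsymbol{\pi}^{b}$ in the constrained slot. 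Either way we exhibit an admissible pair of value $\mathrm{width}$, giving $U=\mathrm{width}$.

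Next I would settle $V$. The bound $V\leq\mathrm{width}$ is immediate from $|\pi_s-\pi'_s|\leq 1$. The reverse direction is the main obstacle, since the extra factor $|\pi_s-\pi'_s|$ forces me to find a single index $s_0$ that is \emph{simultaneously} (i) admissible, $\pi^{*}_{s_0}\neq\pi_{s_0}$, and (ii) a disagreement coordinate of the chosen pair, $\pi_{s_0}\neq\pi'_{s_0}$. The observation that unlocks this is: at any coordinate where the width-achieving pair $(\boldsymbol{\pi}^{a},\boldsymbol{\pi}^{b})$ disagrees, the two binary values are exactly $\{0,1\}$, so precisely one of them differs from $\pi^{*}_{s_0}$. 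I would therefore pick any disagreement coordinate $s_0$ (one exists since $\mathrm{width}>0$), orient the pair so that the constrained argument $\boldsymbol{\pi}$ is the member differing from $\boldsymbol{\pi}^{*}$ at $s_0$, and set $\boldsymbol{\pi}'$ to the other member. Then $|\pi_{s_0}-\pi'_{s_0}|=1$ and $\sum_{e}|\pi_e-\pi'_e|=\mathrm{width}$, so $V_{s_0}\geq\mathrm{width}$ and hence $V=\mathrm{width}$. Combining the two halves yields $U=V=\mathrm{width}$.
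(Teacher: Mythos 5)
Your proof is correct and follows essentially the same route as the paper: exploit $\mathcal{A}\subseteq\{0,1\}^{d}$ so that $|\pi^{*}_s-\pi_s|=1$ on admissible triples and $|\pi_e-\pi'_e|^2=|\pi_e-\pi'_e|$, then identify the reduced maxima with $\mathrm{width}$. The paper merely asserts the resulting chain of equalities in a single display (inside its proof of Proposition~\ref{TighterResultThanWangAndZhu}), whereas your two-sided argument --- in particular the observation that at any disagreement coordinate of a width-achieving pair exactly one member differs from $\boldsymbol{\pi}^{*}$, which lets you orient the pair so that the admissibility constraint and the factor $|\pi_{s_0}-\pi'_{s_0}|=1$ hold simultaneously --- supplies precisely the justification the paper leaves implicit.
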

Next, recall that the GenTS-Explore algorithm is equivalent to the TS-Explore algorithm in the ordinary CPE-MAB, regardless of which arm selection strategy is used. 
Proposition \ref{TighterResultThanWangAndZhu} shows that our upper bound (\ref{GenTS-ExploreUpperBoundEquation_Naive}) and (\ref{GenTS-ExploreUpperBoundEquation_R-CPE-MAB}) are both tighter than that shown in \citet{WangAndZhu2022}, which is $\mathcal{O} \left( \mathrm{width} \sum_{s = 1}^{d} \frac{1}{\Delta^2_s} \right)$.
\begin{proposition} \label{TighterResultThanWangAndZhu}
    In the ordinary CPE-MAB, where $\mathcal{A} \subseteq \{0, 1\}^{d}$, we have
    \begin{eqnarray}
        \mathbf{H}^{\mathrm{N}} = \sum_{s = 1}^{d} \frac{U_s}{\Delta^2_s} \leq \mathrm{width} \sum_{s = 1}^{d} \frac{1}{\Delta^{2}_{s}}, \label{H^N_result}
    \end{eqnarray}
    and 
    \begin{eqnarray}
        \mathbf{H}^{\mathrm{R}} = \sum_{s = 1}^{d} \frac{V_s}{\Delta^2_s} \leq \mathrm{width} \sum_{s = 1}^{d} \frac{1}{\Delta^{2}_{s}}. \label{H^R_result}
    \end{eqnarray}
\end{proposition}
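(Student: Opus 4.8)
The plan is to prove both displayed inequalities coordinatewise: I will show that in the Boolean setting each per-index quantity obeys $U_s \leq \mathrm{width}$ and $V_s \leq \mathrm{width}$, then divide by $\Delta_s^2$ and sum over $s$. I note first that when $\mathcal{A}\subseteq\{0,1\}^d$ the $G$-gap $\Delta_{(s)}$ coincides with the ordinary gap $\Delta_s$, since every $\boldsymbol{\pi}$ contributing to the defining minimization has $|\pi^*_s-\pi_s|=1$; hence the measures $\mathbf{H}^{\mathrm{N}}$ and $\mathbf{H}^{\mathrm{R}}$ written in the proposition are exactly those appearing in Theorems \ref{TSUpperBoundTheorem_Naive} and \ref{TSUpperBoundTheorem_R-CPE-MAB}, and the two sides of (\ref{H^N_result}) and (\ref{H^R_result}) are well-posed.

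The key observation is that in $\{0,1\}^d$ the squares collapse. For any $\boldsymbol{\pi},\boldsymbol{\pi}'\in\mathcal{A}$ each coordinate difference lies in $\{0,1\}$, so $|\pi_e-\pi'_e|^2=|\pi_e-\pi'_e|$, and whenever $\pi^*_s\neq\pi_s$ we have $|\pi^*_s-\pi_s|^2=1$. Substituting these into the definition of $U_s$ gives $U_s=\max_{\boldsymbol{\pi}',\boldsymbol{\pi}\in\mathcal{A}:\,\pi_s\neq\pi^*_s}\sum_{e=1}^d|\pi_e-\pi'_e|$, and dropping the constraint $\pi_s\neq\pi^*_s$ only enlarges the feasible region of the maximum, so $U_s\leq\max_{\boldsymbol{\pi},\boldsymbol{\pi}'\in\mathcal{A}}\sum_{e=1}^d|\pi_e-\pi'_e|=\mathrm{width}$. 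For $V_s$ the same substitutions yield $V_s=\max_{\boldsymbol{\pi}',\boldsymbol{\pi}\in\mathcal{A}:\,\pi_s\neq\pi^*_s}|\pi_s-\pi'_s|\sum_{e=1}^d|\pi_e-\pi'_e|$; here I additionally use $|\pi_s-\pi'_s|\leq 1$ (again because both entries are Boolean) to bound the leading factor by $1$, and the same enlargement argument gives $V_s\leq\mathrm{width}$. Equivalently, both per-index bounds follow at once from Proposition \ref{U_V_proposition}, since $U_s\leq\max_{s}U_s=U=\mathrm{width}$ and likewise $V_s\leq V=\mathrm{width}$.

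With the per-index bounds in hand the conclusion is a single summation: because $\Delta_s>0$ for every $s$, the termwise inequalities are preserved, so $\mathbf{H}^{\mathrm{N}}=\sum_{s=1}^d U_s/\Delta_s^2\leq\mathrm{width}\sum_{s=1}^d 1/\Delta_s^2$, and identically $\mathbf{H}^{\mathrm{R}}=\sum_{s=1}^d V_s/\Delta_s^2\leq\mathrm{width}\sum_{s=1}^d 1/\Delta_s^2$, which are (\ref{H^N_result}) and (\ref{H^R_result}). I do not anticipate a genuine obstacle here; the only point requiring care is the reduction from squared to absolute differences (and the bound $|\pi_s-\pi'_s|\leq 1$), which relies specifically on the Boolean structure and is precisely what fails for general real-valued action sets — this is exactly why $U_s$ and $V_s$, rather than $\mathrm{width}$ itself, are the correct quantities in the R-CPE-MAB upper bounds.
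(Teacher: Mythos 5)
Your proposal is correct and follows essentially the same route as the paper: identify $\Delta_{(s)}=\Delta_s$ in the Boolean case, bound each $U_s$ and $V_s$ by $\mathrm{width}$ (the paper does this via $V_s\leq V=\mathrm{width}$ from Proposition \ref{U_V_proposition}, which you also note), and sum termwise. Your direct verification that the squares collapse in $\{0,1\}^d$ is just an unpacking of the same computation the paper uses to establish $U=V=\mathrm{width}$.
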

\subsubsection{Comparison with Existing Works in the Transductive Bandits Literature}
Even though it is impractical to apply the RAGE algorithm \citep{Fiez2019} and the Peace algorithm \citep{Fiez2019} to the R-CPE-MAB when the action class $\mathcal{A}$ is exponentially large in $d$, it is still beneficial to theoretically compare the GenTS-Explore algorithm with them for the case where the action class $\mathcal{A}$ is polynomial in $d$. 
Firstly, from the lower bound of \citet{Fiez2019}, we have $\mathbf{H}^{R}, \mathbf{H}^{N} \geq \rho_{*}$. This means the upper bound shown in (\ref{GenTS-ExploreUpperBoundEquation_Naive}) and (\ref{GenTS-ExploreUpperBoundEquation_R-CPE-MAB}) are at least $\rho_{*} \left( \log \left( \frac{1}{\delta} + \log^{2} \left( |\mathcal{A}| \rho_{*} \right) \right) \right)$.  \par
The results of \citet{Fiez2019} shows that the upper bound of the RAGE is $\rho_{*} \left( \log \left( \frac{1}{\delta} \right) + \log \left( |\mathcal{A}| \right) \right) \log \left( \frac{1}{\Delta_{\mathrm{min}}} \right)$, where $\Delta_{\mathrm{min}} = \min_{\mathcal{A} \setminus \{ \boldsymbol{\pi}^{*} \}} \boldsymbol{\mu}^{\top} \left( \boldsymbol{\pi}^{*} - \boldsymbol{\pi} \right)$. The GenTS-Explore pay $\log^{2} \left( |\mathcal{A}| \right)$ and the RAGE pay $\log\left( |\mathcal{A}| \right) \log \left( \frac{1}{\Delta_{\mathrm{min}}} \right)$. In general, the size of the action set is exponentially large in $d$, and therefore the RAGE algorithm has a tighter upper bound. \par
The Peace algorithm \citep{KatzSamuels2020} has a sample complexity upper bound of 
\begin{eqnarray}
    \left(\gamma_{*} + \rho_{*} \log\left( \frac{1}{\delta}\right) \right) \log \left( \frac{1}{\Delta_{\mathrm{min}}} \right), \label{PeaceUpperBound}
\end{eqnarray}
where 
\begin{eqnarray}
    \gamma_{*} = \inf_{\boldsymbol{\lambda} \in \Pi_{d}} \mathbb{E}_{\boldsymbol{\eta} \sim \mathcal{N}\left(0, I \right) } \left[ \sup_{\boldsymbol{\pi} \in \mathcal{A} \setminus \left\{ \boldsymbol{\pi}^{*} \right\}} \frac{\left( \boldsymbol{\pi}^{*} - \boldsymbol{\pi} \right)^{\top} \boldsymbol{A}\left( \lambda \right)^{-1/2} \boldsymbol{\eta}}{ \boldsymbol{\mu}^{\top} \left( \boldsymbol{\pi}^{*} - \boldsymbol{\pi} \right) } \right]. \nonumber
\end{eqnarray}
Here, $\boldsymbol{A}\left(\boldsymbol{\lambda}\right)$ is a diagonal matrix whose $(i, i)$ element is $\lambda_{i}$. They showed that the upper bound in (\ref{PeaceUpperBound}) matches the lower bound up to $\log\left( \frac{1}{\Delta_{\mathrm{min}}} \right)$, which gets rid of the $\log\left( |\mathcal{A}| \right)$ term. It is an interesting future work whether we can get rid of the $\log\left( |\mathcal{A}| \right)$ even for the R-CPE-MAB with exponentially large $\mathcal{A}$.

\section{Experiment}\label{ExperimentSection}
In this section, we experimentally compare the two arm selection strategies, the \emph{naive arm selection strategy} and the \emph{R-CPE-MAB arm selection strategy}. 
\subsection{The Knapsack Problem}
Here, we consider the knapsack problem \citep{dantzig2007}, where the action set $\mathcal{A}$ is exponentially large in $d$ in general. \par
In the knapsack problem, we have $d$ items. Each item $s\in[d]$ has a weight $w_s$ and value $v_s$. Also, there is a knapsack whose capacity is $W$ in which we put items. Our goal is to maximize the total value of the knapsack not letting the total weight of the items exceed the capacity of the knapsack. Formally, the optimization problem is given as follows:
\begin{equation*}
\begin{array}{ll@{}ll}
\text{maximize}_{\boldsymbol{\boldsymbol{\pi}}\in\mathcal{A}}  & \sum_{s = 1}^{d}v_{s}\pi_s  &\\ \\
\text{subject to}& \sum_{s = 1}^{d}\pi_s w_s \leq W, &
\end{array}
\end{equation*}
where $\pi_s$ denotes the number of item $s$ in the knapsack. Here, the weight of each item is known, but the value is unknown, and therefore has to be estimated. In each time step, the player chooses an item $s$ and gets an observation of value $r_s$, which can be regarded as a random variable from an unknown distribution with mean $v_s$. \par
% \subsection{When $|\mathcal{A}|$ is exponentially large in $d$}
For our experiment, we generated the weight of each item uniformly from $\{1, 2, \ldots, 50 \}$. For each item $s$, we generated $v_s$ as $v_s = w_s \times (1 + x)$, where $x$ is a sample from $\mathcal{N}(0, {0.1}^2)$. We set the capacity of the knapsack at $W =50$. Each time we chose an item $s$, we observed a value $v_s + x$ where $x$ is a noise from $\mathcal{N}(0, {0.1}^2)$. We set $R=0.1$. We show the result in Figure \ref{knapsack_problem_experiment}. \par
We can say that the R-CPE-MAB arm selection strategy performs better than the naive arm selection strategy since the former needs fewer rounds until termination. In some cases, the sample complexity of the R-CPE-MAB arm selection strategy is only 1/3 to 1/2 that of the naive arm selection strategy.
\begin{figure}
    \centering
    \includegraphics[width = \linewidth]{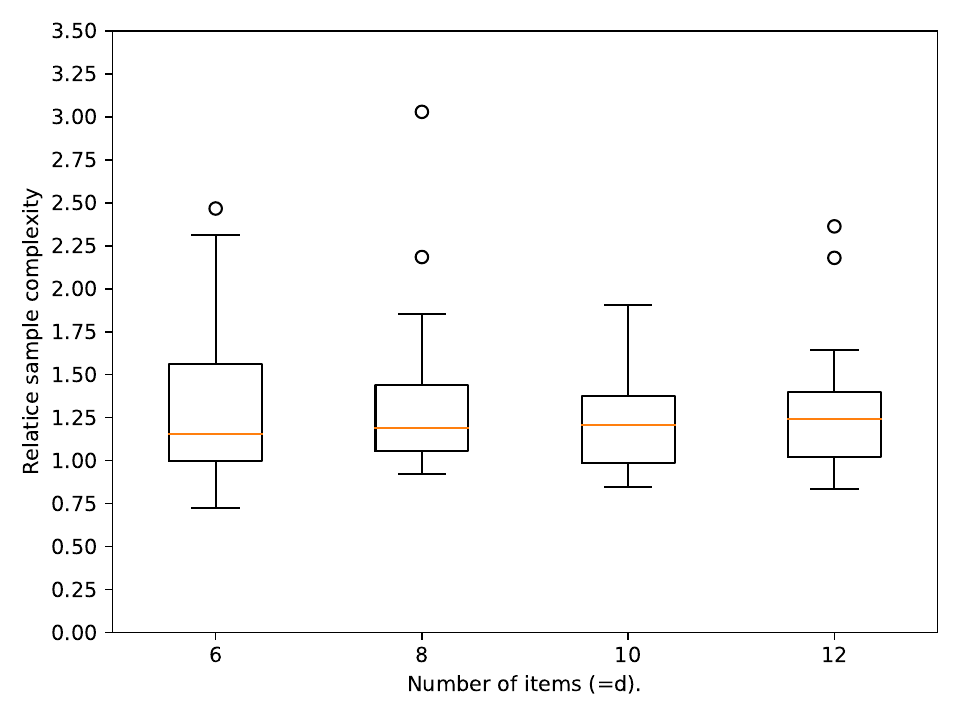}
    \caption{Comparison of the naive arm selection strategy and the R-CPE-MAB arm selection strategy. The vertical axis indicates the number of rounds the former strategy took to find the best action normalized by the number of rounds the latter strategy took to find the best action. The horizontal axis indicates the number of items $d$. We ran experiments 30 times for each setting.}
    \label{knapsack_problem_experiment}
\end{figure}
\subsection{The Production Planning Problem}
Here, we consider the production planning problem \citep{Pochet2010}. In the production planning problem, there are $m$ materials, and these materials can be mixed to make one of $d$ different products. We have a matrix $\boldsymbol{M} \in \mathbb{R}^{m \times d}$, where $M_{s}$ represents how much material $i \in [m]$ is needed to make product $s \in [d]$. 
Also, we are given vectors $\boldsymbol{v}^{\mathrm{max}} \in \mathbb{R}^{m}$ and $\boldsymbol{\mu} \in \mathbb{R}^{d}$.
Then, formally, the optimization problem is given as follows:
\begin{equation*}
\begin{array}{ll@{}ll}
\text{maximize}_{\boldsymbol{\boldsymbol{\pi}}\in\mathcal{A}}  & \boldsymbol{\mu}^{\top} \boldsymbol{\pi}  &\\ \\
\text{subject to}& \boldsymbol{M} \boldsymbol{\pi} \leq \boldsymbol{v}^{\mathrm{max}} , &
\end{array}
\end{equation*}
where the inequality is an element-wise comparison. 
Intuitively, we want to obtain the optimal vector $\boldsymbol{\pi}^{*}$ that maximizes the total profit without using more material $i$ than $v^{\mathrm{max}}_i$ for each $i \in [m]$, where $\pi^{*}_s$ represents how much product $s$ is produced. \par
Here, we assume that $\boldsymbol{M}$ and $\boldsymbol{v}^{\mathrm{max}}$ are known, but $\boldsymbol{\mu}$ is unknown, and therefore has to be estimated. In each time step, the player chooses a product $s$ and gets an observation of value $r_s$, which can be regarded as a random variable from an unknown distribution with mean $\mu_s$.\par
For our experiment, we have three materials, i.e., $m = 3$. We set $\boldsymbol{v}^{\mathrm{max}} = (30, 30, 30)^{\top}$. Also, we generated every element in $M$ uniformly from $\{1, 2, 3, 4\}$. For each product $s$, we generated $\mu_s$ as $\mu_s = \sum_{i = 1}^{m} M_{is} + x$, where $x$ is a random sample from $\mathcal{N}(0,1)$. Each time we chose a product $s$, we observed a value $\mu_s + x$ where $x$ is a noise from $\mathcal{N}\left(0, {0.1}^2\right)$. We set $R = 0.1$. We show the result in Figure \ref{production_planning_experiment}. Again, we can see that the R-CPE-MAB arm selection strategy performs better than the naive arm selection strategy since the former needs fewer rounds until termination.

\begin{figure}
    \centering
    \includegraphics[width = \linewidth]{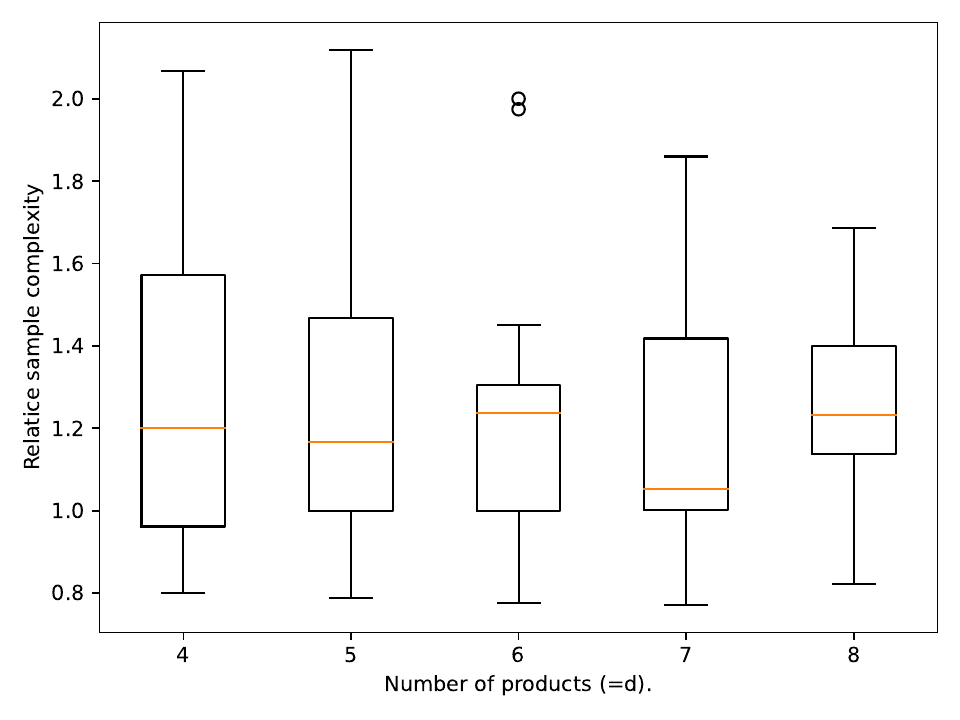}
    \caption{ The numbers show the mean and standard deviation of the number of rounds the naive arm selection strategy took to find the best action normalized by the number of rounds the R-CPE-MAB arm selection strategy took to find the best action over 15 runs. }
    \label{production_planning_experiment}
\end{figure}

\section{Conclusion}
In this study, we studied the R-CPE-MAB. We showed novel lower bounds for R-CPE-MAB by generalizing key quantities in the ordinary CPE-MAB literature. Then, we introduced an algorithm named the GenTS-Explore algorithm, which can identify the best action in R-CPE-MAB even when the size of the action set is exponentially large in $d$. We showed a sample complexity upper bound of it, and showed that it matches the sample complexity lower bound up to a problem-dependent constant factor. Finally, we experimentally showed that the GenTS-Explore algorithm can identify the best action even if the action set is exponentially large in~$d$. 

\section*{Acknowledgement}
We thank Dr. Kevin Jamieson for his very helpful advice and comments on existing studies.
SN was supported by JST SPRING, Grant Number JPMJSP2108.

\bibliography{aaai24}

\clearpage
\appendix

\section{Proof of Theorem \ref{LowerBoundTheorem_Explicit}}
For the reader's convenience, we restate Theorem \ref{LowerBoundTheorem_Explicit}.
\LowerBoundTheoremExplicit*
Before stating our proof, we first introduce two technical lemmas. The first lemma is the well-known Kolmogrov's inequality.
\begin{lemma}[Kolmogrov's inequality[\citet{SChen2014}, Lemma 14]] \label{KokomogorovInequality}
    Let $Z_1, \ldots, Z_n$ be independent zero-mean random variables with $\mathrm{Var}\left[ Z_k \right] \leq +\infty$ for all $k\in[n]$. Then, for any $\lambda > 0$, 
    \begin{equation}
        \Pr\left[ \max_{1 \leq k \leq n} |S_k| \geq \lambda \right] \leq \frac{1}{\lambda^2} \sum_{i = 1}^{n} \mathrm{Var}[Z_k],
    \end{equation}
    where $S_k = X_1 + \cdots + X_k$.
\end{lemma}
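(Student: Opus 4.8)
The plan is to prove this by the classical first-passage (stopping-time) decomposition, bounding the second moment of the final partial sum $S_n$ from below by isolating the first index at which the running sum crosses level $\lambda$. First I would introduce, for each $k\in[n]$, the \emph{first-passage event}
\[
A_k = \left\{ |S_1| < \lambda, \ldots, |S_{k-1}| < \lambda, \ |S_k| \geq \lambda \right\}.
\]
These events are pairwise disjoint, and their union is exactly $\{\max_{1\leq k\leq n}|S_k|\geq \lambda\}$, so it suffices to bound $\sum_{k=1}^{n}\Pr[A_k]$. The idea is that $\mathbb{E}[S_n^2]$ is large enough to ``pay'' $\lambda^2$ for each way the path first crosses the threshold.

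The core of the argument is the lower bound $\mathbb{E}[S_n^2] \geq \lambda^2 \sum_{k=1}^{n}\Pr[A_k]$. To obtain it, I would first write $\mathbb{E}[S_n^2] \geq \sum_{k=1}^{n}\mathbb{E}\left[ S_n^2 \mathbf{1}[A_k] \right]$ using disjointness of the $A_k$, and then on each $A_k$ split $S_n = S_k + (S_n - S_k)$, yielding
\[
\mathbb{E}\left[ S_n^2 \mathbf{1}[A_k] \right] = \mathbb{E}\left[ S_k^2 \mathbf{1}[A_k] \right] + 2\mathbb{E}\left[ S_k \mathbf{1}[A_k](S_n - S_k) \right] + \mathbb{E}\left[ (S_n - S_k)^2 \mathbf{1}[A_k] \right].
\]
Since $S_k \mathbf{1}[A_k]$ is a function of $Z_1, \ldots, Z_k$ only, while $S_n - S_k = Z_{k+1} + \cdots + Z_n$ is a function of $Z_{k+1}, \ldots, Z_n$ only, independence of the $Z_i$ factors the cross term as $\mathbb{E}\left[ S_k \mathbf{1}[A_k] \right]\cdot \mathbb{E}[S_n - S_k]$, which vanishes because every $Z_i$ is zero-mean. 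Dropping the nonnegative final term and using $S_k^2 \geq \lambda^2$ on $A_k$ gives $\mathbb{E}[S_n^2 \mathbf{1}[A_k]] \geq \lambda^2 \Pr[A_k]$; summing over $k$ establishes the claimed lower bound on $\mathbb{E}[S_n^2]$.

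To finish, I would identify the second moment: because the $Z_i$ are independent and zero-mean, $\mathbb{E}[S_n^2] = \mathrm{Var}[S_n] = \sum_{k=1}^{n}\mathrm{Var}[Z_k]$. Combining this with the lower bound and dividing by $\lambda^2$ gives $\Pr\left[\max_{1\leq k\leq n}|S_k|\geq \lambda\right] \leq \lambda^{-2}\sum_{k=1}^{n}\mathrm{Var}[Z_k]$, exactly as stated. The main obstacle, and the only genuinely nonroutine step, is the vanishing of the cross term: one must verify carefully that $A_k$ and $S_k$ are measurable with respect to $Z_1, \ldots, Z_k$ alone, so that the future increment $S_n - S_k$ is independent of $S_k \mathbf{1}[A_k]$ and its zero expectation can be factored out. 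Everything else reduces to the disjointness bookkeeping and a Chebyshev-type estimate $S_k^2 \geq \lambda^2$ on each crossing event.
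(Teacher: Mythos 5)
Your proof is correct and is the classical first-passage argument for Kolmogorov's maximal inequality; the paper itself gives no proof, importing the lemma verbatim from \citet{SChen2014} (their Lemma 14), and the standard proof you reconstruct is exactly the one found there and in textbooks. All the key steps check out: the events $A_k$ are disjoint with the right union, $S_k\mathbf{1}[A_k]$ is indeed $\sigma(Z_1,\ldots,Z_k)$-measurable so the cross term factors and vanishes by zero means, and the identity $\mathbb{E}[S_n^2]=\sum_{k=1}^{n}\mathrm{Var}[Z_k]$ follows from independence. You also implicitly (and correctly) repaired two typos in the statement: $S_k$ should be $Z_1+\cdots+Z_k$, not $X_1+\cdots+X_k$, and the hypothesis should read $\mathrm{Var}[Z_k]<+\infty$ rather than $\leq+\infty$.
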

The second technical lemma shows that the joint likelihood of Gaussian distributions on a sequence of variables does not change  much when the mean of the distribution shifts by a sufficiently small value.
\begin{lemma}[\citet{SChen2014}] \label{Lemma15inSChen2014}
    Fix some $d \in \mathbb{R}$ and $\theta \in (0, 1)$. Define $t = \frac{1}{4d^2}\log(\frac{1}{\theta})$. Let $T$ be an integer less or equal to $4t$, $s_1, \ldots, s_T$ be any sequence, and $X_1, \ldots, X_T$ be $T$ real numbers which satisfy the following:
    \begin{equation}
        \left| \sum_{i = 1}^{T} X_i - \sum_{i = 1}^{T} s_i \right| \leq \sqrt{t \log\left(\frac{1}{\theta}\right)}.
    \end{equation}
    Then, we have
    \begin{equation}
        \prod_{i = 1}^{T} \frac{\mathcal{N}(X_i | s_i + d, 1)}{\mathcal{N}(X_i | s_i, 1)} \geq\theta,
    \end{equation}
    where we let $\mathcal{N}(x, \mu, \sigma^2) = \frac{1}{\sigma\sqrt{2\pi}}\exp\left( -\frac{\left( x - \mu \right)^2}{2\sigma^2} \right)$ denote the probability density function of normal distribution with mean $\mu$ and variance $\sigma^2$.
\end{lemma}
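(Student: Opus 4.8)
The plan is to collapse the entire product into a single exponential and then reduce the claimed inequality to an elementary bound on one linear and one quadratic term. Writing out each Gaussian density with variance $1$, the $i$-th factor is $\exp\bigl(-\tfrac{1}{2}(X_i - s_i - d)^2 + \tfrac{1}{2}(X_i - s_i)^2\bigr)$. Setting $Y_i = X_i - s_i$ and expanding the quadratics, the squares in $Y_i$ cancel, leaving $\exp\bigl(dY_i - \tfrac{d^2}{2}\bigr)$. Taking the product over $i = 1, \ldots, T$ therefore gives
$$\prod_{i=1}^{T} \frac{\mathcal{N}(X_i | s_i + d, 1)}{\mathcal{N}(X_i | s_i, 1)} = \exp\Bigl( d \sum_{i=1}^{T}(X_i - s_i) - \frac{T d^2}{2} \Bigr),$$
so that, since the exponential is monotone and $\log\theta = -\log(1/\theta)$, the lemma is equivalent to the single scalar inequality
$$d \sum_{i=1}^{T}(X_i - s_i) - \frac{T d^2}{2} \;\geq\; -\log\frac{1}{\theta}.$$

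First I would control the linear term. By the hypothesis $\bigl|\sum_{i}(X_i - s_i)\bigr| \leq \sqrt{t \log(1/\theta)}$ and the elementary bound $d\sum_i (X_i - s_i) \geq -|d|\,\bigl|\sum_i (X_i - s_i)\bigr|$, I obtain $d\sum_i(X_i - s_i) \geq -|d|\sqrt{t\log(1/\theta)}$. Substituting the definition $t = \tfrac{1}{4d^2}\log(1/\theta)$ gives $\sqrt{t\log(1/\theta)} = \tfrac{1}{2|d|}\log(1/\theta)$, whence the linear term is bounded below by $-\tfrac{1}{2}\log(1/\theta)$. Next I would control the quadratic term using $T \leq 4t$: since $d^2 > 0$ we have $-\tfrac{Td^2}{2} \geq -2t d^2$, and plugging in $t = \tfrac{1}{4d^2}\log(1/\theta)$ yields $-2td^2 = -\tfrac{1}{2}\log(1/\theta)$ as well.

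Combining the two estimates, the left-hand side of the scalar inequality is at least $-\tfrac{1}{2}\log(1/\theta) - \tfrac{1}{2}\log(1/\theta) = -\log(1/\theta)$, which is exactly the required lower bound; exponentiating recovers the product $\geq \theta$. The calculation is essentially routine, so the only real care is in the bookkeeping: the constant $t = \tfrac{1}{4d^2}\log(1/\theta)$ is calibrated precisely so that the linear term (scaling like $|d|\sqrt{t}$) and the quadratic term (scaling like $td^2$) each contribute exactly half of $\log(1/\theta)$. One must also track the sign of $d$ honestly---since $d$ may be negative, the bound on the linear term must pass through $|d|$ rather than $d$, while the quadratic term only ever sees $d^2$ and is harmless. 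I expect this sign-and-constant matching to be the one place where a careless version of the argument could lose a factor.
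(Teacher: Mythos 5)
Your proof is correct, and it is essentially the canonical argument: the paper itself states this lemma without proof, citing \citet{SChen2014}, whose Lemma 15 is proved by exactly the computation you give --- collapse the likelihood ratio to $\exp\bigl(d\sum_{i}(X_i - s_i) - \tfrac{Td^2}{2}\bigr)$, bound the linear term by $-\tfrac{1}{2}\log(1/\theta)$ via the hypothesis and $\sqrt{t\log(1/\theta)} = \tfrac{1}{2|d|}\log(1/\theta)$, and the quadratic term by $-\tfrac{1}{2}\log(1/\theta)$ via $T \leq 4t$. Your handling of the sign of $d$ through $|d|$ is exactly the care the statement requires (note only that $d \neq 0$ is implicitly assumed, since $t$ is otherwise undefined).
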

Now, we prove Theorem \ref{LowerBoundTheorem_Explicit}.
\begin{proof}[Proof of Theorem \ref{LowerBoundTheorem_Explicit}]
    Fix $\delta > 0$, $\boldsymbol{\mu} = \left( \mu_1, \ldots, \mu_d \right)^{\top}$, and a $\delta$-correct algorithm $\mathbb{A}$. For each $s \in [d]$, assume that the reward distribution is given by $\phi_s = \mathcal{N}\left( \mu_s, 1 \right)$. For any $s \in [d]$, the number of trials of arm $s$ is lower-bounded by 
    \begin{equation}
        \mathbb{E}\left[ T_s \right] \geq \frac{1}{16\Delta_{(s)}^2} \log \left( \frac{1}{4\delta} \right). \label{Eq83inSChen}
    \end{equation}
    Note that the theorem follows immediately by summing up (\ref{Eq83inSChen}) for all $s\in[d]$.\par
    Fix any $s\in [d]$. We define $\theta = 4\delta$ and $t_{s}^{*} = \frac{1}{16\Delta_{(s)}^2} \log \left( \frac{1}{\theta} \right)$. We prove (\ref{Eq83inSChen}) by contradiction. Therefore, we assume $\mathbb{E}[T_s] < t_{s}^{*}$ in the rest of the proof. \par
    \textbf{Step (1): An alternative hypothesis.} We consider two hypothesis $H_0$ and $H_1$. Under hypothesis $H_0$, all reward distributions are the same with our assumption in the theorem as follows:
    \begin{equation}
        H_0: \phi_s = \mathcal{N}\left( \mu_s, 1 \right) \quad \text{for all $s\in[d]$}.
    \end{equation}
    On the other hand, under hypothesis $H_1$, we change the means of reward distributions such that
    \begin{equation}
        H_1: \phi_s = 
      \begin{cases}
        \mathcal{N}\left( \mu_s - 2\Delta_{(s)}, 1 \right) & \text{if $\pi^{*}_{s} > \pi^{(s)}_{s}$,} \\
        \mathcal{N}\left( \mu_s + 2\Delta_{(s)}, 1 \right) & \text{if $\pi^{*}_{s} < \pi^{(s)}_{s}$,}
      \end{cases}  
    \end{equation}
    and  $\phi_l = \mathcal{N}(\mu_l, 1)$ for all $l \neq s$. \par
    For $h \in \{ 0, 1 \}$, we use $\mathbb{E}_{h}$ and $\mathbb{P}_{h}$ to denote the expectation and probability, respectively, under the hypothesis $H_h$. \par
    Now, we claim that $\boldsymbol{\pi}^{*}$ is no longer the optimal action under hypothesis $H_1$. Let $\boldsymbol{\mu}_{0}$ and $\boldsymbol{\mu}_{1}$ be the expected reward vectors under $H_0$ and $H_1$, respectively. We have
    \begin{eqnarray}
        && \boldsymbol{\mu}_1^{\top} \boldsymbol{\pi}^{*} - \boldsymbol{\mu}_1^{\top} \boldsymbol{\pi}^{(s)} \nonumber \\
        & = & \boldsymbol{\mu}_{0}^{\top} \boldsymbol{\pi}^{*} - \boldsymbol{\mu}_{0}^{\top} \boldsymbol{\pi}^{(s)} - 2\left( \boldsymbol{\mu}_{0}^{\top} \boldsymbol{\pi}^{*} - \boldsymbol{\mu}_{0}^{\top} \boldsymbol{\pi}^{(s)} \right) \nonumber \\
        & = & - \left( \boldsymbol{\mu}_{0}^{\top} \boldsymbol{\pi}^{*} - \boldsymbol{\mu}_{0}^{\top} \boldsymbol{\pi}^{(s)} \right) \nonumber \\
        & < & 0. \nonumber
    \end{eqnarray}
    This means that under $H_1$, action $\boldsymbol{\pi}^{*}$ is not the best action. \par
    \textbf{Step (2): Three random events.} Let $X_1, \ldots, X_{T_s}$ denote the sequence of reward outcomes of arm $s$. Now, we define three random events $\mathcal{A}$, $\mathcal{B}$, and $\mathcal{C}$ as follows:
    \begin{eqnarray}
        \mathcal{A} &=& \left\{ T_s \leq 4t_s^{*} \right\}, \nonumber\\
        \mathcal{B} &=& \left\{ \boldsymbol{\pi}_{\mathrm{out}} = \boldsymbol{\pi}^{*} \right\}, \nonumber\\ 
        \mathcal{C} &=& \left\{ \max_{1 \leq t \leq 4t^{*}_s} \left| \sum_{i = 1}^{t} X_i - t\cdot \mu_s \right| < \sqrt{t^{*}_s \log \left( \frac{1}{\delta} \right)} \right\}. \nonumber
    \end{eqnarray}
    Now, we bound the probability of these events under hypothesis $H_0$. First, we show that $\Pr\left[ \mathcal{A} \right] \geq 3/4$. This can be proven by Markov's inequality as follows:
    \begin{equation}
        {\Pr}_0\left[ T_s > 4t^{*}_s \right] \leq \frac{\mathbb{E}_0\left[ T_s \right]}{4t^{*}_s} \leq \frac{t^{*}_s}{4t^{*}_s} = \frac{1}{4}. \nonumber
    \end{equation}
    We now show that $\{\Pr\}_0\left[ \mathcal{C} \right]$. Notice that $\left\{ X_t - \mu_s \right\}_{t = 1, \ldots}$ is a sequence of zero-mean independent random variables under $H_0$. Define $K_t = \sum_{i = 1}^{t}X_i$. Then, by Kolomogorov's inequality (Lemma \ref{KokomogorovInequality}), we have 
    \begin{eqnarray}
        &&{\Pr}_{0}\left[ \max_{1 \leq t \leq 4t^{*}_s} \left| K_t - t\cdot\mu_s \right| \geq \sqrt{t^{*}_s \log \left( \frac{1}{\theta} \right)} \right] \nonumber \\
        & \leq & \frac{{\mathbb{E}_0\left[ \left( K_{4t^*_s} - 4\mu_s t^{*}_s \right)^2 \right]}}{t^{*}_s \log\left( \frac{1}{\theta} \right)} \nonumber \\
        & \overset{(a)}{=} & \frac{4t^{*}_s}{t^{*}_s \log\left( \frac{1}{\theta} \right) } \nonumber \\
        & \overset{(b)}{<} & \frac{1}{4},
    \end{eqnarray}
    where (a) follows from the fact that the variance of $\phi_s$ is 1 and therefore $\mathbb{E}_{0}\left[ \left( K_{4t^{*}_s} - 4\mu_s t^{*}_s \right)^2 \right] = 4t^{*}_s$, and (b) follows from  $\theta < e^{-16}$. \par
    Since $\mathbb{A}$ is a $\delta$-correct algorithm, where $\delta < \frac{e^{-16}}{4} < \frac{1}{4}$, we have ${\Pr}_{0}\left[ \mathcal{B} \right] \geq \frac{3}{4}$. Define random event $\mathcal{S} = \mathcal{A}\cap \mathcal{B}\cap \mathcal{C}$. Then, by union bound, we have ${\Pr}_0\left[ \mathcal{S} \right] \geq \frac{1}{4}$.\par
    \textbf{Step(3): The loss of likelihood.} Now, we claim that, under the assumption that $\mathbb{E}_{0}\left[ T_s \right] < t^{*}_s$, one has ${\Pr}_1\left[ \mathcal{B} \right] \geq \delta$. Let $W$ be the history of the sampling process until the algorithm stops (including the sequence of arms chosen at each time and the sequence of observed outcomes). Define the likelihood function $L_l$ as 
    \begin{equation}
        L_l(w) = p_l(W = w),
    \end{equation}
    where $p_l$ is the probability density function of histories under hypothesis $H_l$. \par
    Now, assume that the event $\mathcal{S}$ occurred. We will bound the likelihood ratio $L_1(W)/L_0(W)$ under this assumption. Since $H_1$ and $H_0$ only differs on the reward distribution of arm $s$, we have
    \begin{equation} \label{likelihoodratio}
        \frac{L_1(W)}{L_0(W)} = \prod_{i = 1}^{T_s} \frac{\mathcal{N} (X_i | \mu_{1, s}, 1) }{\mathcal{N}(X_i | \mu_{0, s}, 1)},
    \end{equation}
    where $\mu_{0, s}$ and $\mu_{1, s}$ denotes the $s$-th element of $\boldsymbol{\mu}_{0}$ and $\boldsymbol{\mu}_{1}$, respectively. By the definition of $H_1$ and $H_0$, we see that $\mu_{1, s} = \mu_{0, s} \pm 2\Delta_{(s)}$, where the sign depends on whether $\pi_{s}^{*}$ is larger than $\pi^{(s)}_s$ or not. Therefore, when event $\mathcal{S}$ occurs, we can apply Lemma \ref{Lemma15inSChen2014} by setting $d = \mu_{1, s} - \mu_{0, s} = \pm 2\Delta_{(s)}$, $T = T_s$ and $s_i = \mu_{0, s}$ for all $i$. Hence, by Lemma \ref{Lemma15inSChen2014} and (\ref{likelihoodratio}), we have
    \begin{equation}
        \frac{L_1(W)}{L_0(W)} \geq \theta = 4\delta
    \end{equation}
    holds if event $\mathcal{S}$ occurs. \par
    Then, we have 
    \begin{equation}
        \frac{L_1(W)}{L_0(W)}\mathbf{1}[\mathcal{S}] \geq 4\delta\mathbf{1}[\mathcal{S}]
    \end{equation}
    holds regardless the occurence of event $\mathcal{S}$. Here, recall that $\mathbf{1}[\cdot]$ denotes the indicator function, i.e., $\mathbf{1}[\mathcal{S}] = 1$ only if $\mathcal{S}$ occurs and otherwise $\mathbf{1}[\mathcal{S}] = 0$.\par
    We can obtain
    \begin{eqnarray}
        {\Pr}_1\left[ \mathcal{B} \right] 
        & \geq & {\Pr}_1 \left[ \mathcal{S} \right] =  {\mathbb{E}}_1 \left[ \mathbf{1}[\mathcal{S}] \right] \\
        & = & {\mathbb{E}}_{0} \left[ \frac{L_{1}(W)}{L_{0}(W)} \mathbf{1}[\mathcal{S}] \right] \\
        & \geq & 4 \delta \mathbb{E}_{0}\left[\mathbf{1}[\mathcal{S}]  \right] \\
        & = & 4\delta{\Pr}_{0}\left[ \mathcal{S} \right] \geq \delta.        
    \end{eqnarray}
    Now, we have proven that, if $\mathbb{E}\left[ T_S \right] < t^{*}_s$, then $\Pr\left[ \mathcal{B} \right] \geq \delta$. This means that, if $\mathbb{E}\left[ T_S \right] < t^{*}_s$, algorithm $\mathbb{A}$ will choose $\boldsymbol{\pi}^{*}$ as the final output $\boldsymbol{\pi}_{\mathrm{out}}$ with probability at least $\delta$, under hypothesis $H_1$. However, under $H_1$, we have shown that $\boldsymbol{\pi}^{*}$ is not the best action. Therefore, algorithm $\mathbb{A}$ has a probability of error at least $\delta$ under $H_1$. This contradicts to the assumption that algorithm $\mathbb{A}$ is a $\delta$-correct algorithm. Hence, we must have $\mathbb{E}_{0}\left[ T_s \right] \geq t^{*}_s = \frac{1}{16\Delta_{(s)}^2} \log \left( \frac{1}{4\delta} \right)$
\end{proof}

\section{Proof of Theorem \ref{LowerBoundTheoremImplicit}} \label{LowerBoundTheoremImplicitProof}
Here, we prove Theorem \ref{LowerBoundTheoremImplicit}. Before proving the theorem, let us denote the Kullback-Leibler divergence from the distribution of arm $a_2$ to that of arm $a_1$ by $\mathrm{KL}(a_1, a_2)$. We call an arm a \emph{Gaussian arm} if its reward follows a Gaussian distribution with unit variance. For two Gaussian arms $a_1$ and $a_2$ with means $\mu_1$ and $\mu_2$, respectively, it holds that 
\begin{equation}
    \mathrm{KL}(a_1, a_2) = \frac{1}{2}(\mu_1 - \mu_2)^{2}
\end{equation}
Moreover, let us denote the binary relative entropy function by $d(x, y) = x \log (\frac{x}{y}) + (1 - x)\log(\frac{1 - x}{1 - y})$. \par
Next, we show the ``Change of Distribution'' lemma formulated by \citet{Kaufmann2016}, which is useful for proving our lower bound.
\begin{lemma} \label{Change_of_Distribution}
    Let $\mathbb{A}$ be an algorithm that runs on $d$ arms, and let $\mathcal{C} = (a_1, \ldots, a_d)$ and $\mathcal{C}' = (a'_1, \ldots, a'_d)$ be two instances of $d$ arms. Let random variable $\tau_s$ denote the number of samples taken from the $s$-th arm. Let $\mathcal{E}$ denote the event that algorithm $\mathbb{A}$ returns $\boldsymbol{\pi}^{*}$ as the optimal action. Then, we have
    \begin{equation}
        \sum_{s = 1}^{d} \mathbb{E}_{\mathbb{A}, \mathcal{C}}[\tau_s] \mathrm{KL}(a_s, a'_s) \geq d\left( \Pr_{\mathbb{A}, \mathcal{C}}\left[ \mathcal{E} \right], \Pr_{\mathbb{A}, \mathcal{C}'}\left[ \mathcal{E} \right] \right).
    \end{equation}
\end{lemma}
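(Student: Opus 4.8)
The plan is to prove the lemma in three moves: a data-processing (contraction) step that produces the right-hand side, a likelihood-ratio decomposition that rewrites the full history KL, and a Wald-type identity that turns the latter into the weighted sum of per-arm KLs. Throughout I would work with the \emph{history} $W$, namely the interleaved sequence of arms chosen and rewards observed by $\mathbb{A}$ up to its (possibly random) stopping time, and I would write $P_{\mathbb{A},\mathcal{C}}$ and $P_{\mathbb{A},\mathcal{C}'}$ for the laws of $W$ when $\mathbb{A}$ is run on $\mathcal{C}$ and $\mathcal{C}'$. Since $\mathbb{A}$ is a fixed (possibly randomized) sampling-plus-stopping rule, the only difference between the two laws is the reward distributions attached to each arm.

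First I would invoke the data-processing inequality for KL divergence. The indicator $\mathbf{1}[\mathcal{E}]$ is a deterministic function of $W$, so applying this map pushes both laws forward to Bernoulli distributions with parameters $\Pr_{\mathbb{A},\mathcal{C}}[\mathcal{E}]$ and $\Pr_{\mathbb{A},\mathcal{C}'}[\mathcal{E}]$. Because post-processing can only decrease KL, and the KL between these two Bernoulli laws is exactly the binary relative entropy, we obtain
\begin{equation}
\mathrm{KL}\!\left( P_{\mathbb{A},\mathcal{C}}, P_{\mathbb{A},\mathcal{C}'} \right) \geq d\!\left( \Pr_{\mathbb{A},\mathcal{C}}[\mathcal{E}], \Pr_{\mathbb{A},\mathcal{C}'}[\mathcal{E}] \right).
\end{equation}
This already supplies the right-hand side of the claim, so the remaining task is to show that the left-hand KL equals $\sum_{s=1}^d \mathbb{E}_{\mathbb{A},\mathcal{C}}[\tau_s]\,\mathrm{KL}(a_s, a'_s)$.

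To that end I would write the left-hand side as the expected log-likelihood ratio $\mathbb{E}_{\mathbb{A},\mathcal{C}}\!\left[ \log \tfrac{dP_{\mathbb{A},\mathcal{C}}}{dP_{\mathbb{A},\mathcal{C}'}}(W) \right]$ and use that $\mathbb{A}$ employs the \emph{same} arm-selection and stopping rule in both worlds. Every factor in the likelihood ratio arising from the algorithm's internal randomness therefore cancels, leaving only the reward-density factors:
\begin{equation}
\log \frac{dP_{\mathbb{A},\mathcal{C}}}{dP_{\mathbb{A},\mathcal{C}'}}(W) = \sum_{s=1}^{d} \sum_{i=1}^{\tau_s} \log \frac{f_s(X_{s,i})}{f'_s(X_{s,i})},
\end{equation}
where $f_s, f'_s$ are the densities of $a_s, a'_s$ and $X_{s,i}$ is the $i$-th reward drawn from arm $s$. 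Taking expectation under $\mathcal{C}$ and writing $\tau_s = \sum_t \mathbf{1}[p_t = s]$, where $\{p_t = s\}$ is measurable with respect to the past $\mathcal{F}_{t-1}$, the tower rule and the fact that the $t$-th reward is conditionally independent of the past given the chosen arm give $\mathbb{E}_{\mathbb{A},\mathcal{C}}\!\left[\sum_{i\le\tau_s}\log(f_s/f'_s)(X_{s,i})\right] = \mathbb{E}_{\mathbb{A},\mathcal{C}}[\tau_s]\,\mathrm{KL}(a_s,a'_s)$, and summing over $s$ completes the argument.

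The hard part will be making this Wald-type identity fully rigorous in the adaptive setting, where $\tau_s$ is a data-dependent stopping time: one must check an integrability condition (it suffices that $\mathbb{E}_{\mathbb{A},\mathcal{C}}[\tau_s] < \infty$, which may be assumed, since otherwise the bound is vacuous) and verify that the centered increments $\log(f_s/f'_s)(X_{s,i}) - \mathrm{KL}(a_s,a'_s)$ form a martingale-difference sequence with respect to the natural filtration, so that optional stopping (or the counting argument above) applies. A secondary technical point is ensuring mutual absolute continuity of the two laws so that the log-ratio is well defined; for the unit-variance Gaussian arms considered in the paper this holds automatically and $\mathrm{KL}(a_s,a'_s) = \tfrac{1}{2}(\mu_s - \mu'_s)^2$, matching the formula recalled just before the lemma.
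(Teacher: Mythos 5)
Your proposal is correct, but note that the paper itself offers no proof of this lemma: it is imported verbatim from \citet{Kaufmann2016} as a known ``Change of Distribution'' result, so there is no in-paper argument to compare against. What you have reconstructed is precisely the canonical proof from that reference: the data-processing (contraction) inequality applied to the $\mathcal{F}_\tau$-measurable indicator $\mathbf{1}[\mathcal{E}]$ yields the binary relative entropy on the right-hand side, and the Wald-type identity for the log-likelihood ratio of the stopped history --- with the arm-selection and internal-randomization factors cancelling because the same algorithm runs in both instances --- identifies the KL between the two history laws with $\sum_{s}\mathbb{E}_{\mathbb{A},\mathcal{C}}[\tau_s]\,\mathrm{KL}(a_s,a'_s)$. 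Both of your flagged technical points are the right ones and are handled the same way in the original: integrability is a non-issue since $\mathbb{E}[\tau_s]=\infty$ with $\mathrm{KL}(a_s,a'_s)>0$ makes the left side infinite (and arms with zero KL contribute zero almost surely, so Wald is not even needed for them), and mutual absolute continuity holds automatically for the unit-variance Gaussian instances to which the paper applies the lemma in Theorem 2. One small presentational remark: you should make explicit that the data-processing step is applied to the laws restricted to the stopping $\sigma$-field $\mathcal{F}_\tau$, so that the expected log-likelihood ratio of the stopped history is exactly the relevant KL divergence; with that stated, the argument is complete and faithful to the cited source.
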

Now, we prove Theorem \ref{LowerBoundTheoremImplicit}. We define $U_{\boldsymbol{\pi}, \boldsymbol{\pi}'} = \left\{ s\in[d] \ | \ \pi_s > \pi'_s \right\}$. We restate Theorem \ref{LowerBoundTheoremImplicit}.
\LowerBoundTheoremImplicit*
\begin{proof}
    Fix $\delta \in (0, 0.1)$, action set $\mathcal{A}$, and a $\delta$-correct algorithm $\mathbb{A}$. Let $n_s$ be the expected number of samples drawn from the $s$-th arm when $\mathbb{A}$ runs on instance $\mathcal{C}$. Let $\alpha = d(1 - \delta, \delta)/2$ and $\tau_s/\alpha$. It suffices to show that $\tau = \{\tau_1, \ldots, \tau_d\}$ is a feasible solution for the optimization problem (\ref{Low(A)OptProblem}), as it follows that 
    \begin{equation}
        \sum_{s = 1}^{d} n_s = \alpha \sum_{s = 1}^{d} \tau_s \geq \alpha \mathrm{Low}(\mathcal{A}) = \Omega\left( \mathrm{Low}(\mathcal{A}) \log \frac{1}{\delta} \right).
    \end{equation}
    Here, the last equation holds since for all $\delta \in (0, 0.1)$, 
    \begin{equation}
        d(1 - \delta, \delta) = (1 - 2\delta)\log \frac{1 - \delta}{\delta} \geq 0.8 \log \frac{1}{\sqrt{\delta}} = 0.4 \log \frac{1}{\delta}.
    \end{equation}
    To show that $\tau$ is a feasible solution, we fix $\boldsymbol{\pi} \in \mathcal{A}$. Let $s \in U_{\boldsymbol{\pi}^{*}, \boldsymbol{\pi}} \cup U_{\boldsymbol{\pi} , \boldsymbol{\pi}^{*}}$ and  $\frac{\boldsymbol{\mu}^{\top}(\boldsymbol{\pi}^{*} - \boldsymbol{\pi})}{|\pi^{*}_s - \pi_s|} = \frac{c_s}{n_s}$, where 
    \begin{equation}
        c_s = \frac{2\boldsymbol{\mu}^{\top}(\boldsymbol{\pi}^{*} - \boldsymbol{\pi}) \cdot \left| \pi^{*}_s - \pi_s \right|}{\sum_{s = 1}^{d} \frac{\left| \pi^{*}_s - \pi_s \right|^2}{n_s}}.
    \end{equation}
    We consider the following alternative instance $\mathcal{I}'$: the mean of each arm $s$ in $U_{\boldsymbol{\pi}^{*}, \boldsymbol{\pi}}$ is decreased by $\frac{\boldsymbol{\mu}^{\top}(\boldsymbol{\pi}^{*} - \boldsymbol{\pi})}{|\pi^{*}_s - \pi_s|}$, while the mean of each arm $s$ in $U_{\boldsymbol{\pi}, \boldsymbol{\pi}^{*}}$ is increased by $\frac{\boldsymbol{\mu}^{\top}(\boldsymbol{\pi}^{*} - \boldsymbol{\pi})}{|\pi^{*}_s - \pi_s|}$, and the action set $\mathcal{A}$ is the same as $\mathcal{I}$. Note that, in $\mathcal{I}'$,
    \begin{eqnarray}
    &&\begin{split}
        \biggl( \boldsymbol{\mu}^{\top}\boldsymbol{\pi}^{*} &+ \sum_{s \in U_{\boldsymbol{\pi}^{*}, \boldsymbol{\pi}}} \left( - \frac{\boldsymbol{\mu}^{\top}(\boldsymbol{\pi}^{*} - \boldsymbol{\pi})}{|\pi^{*}_s  - \pi_s|} \right) \left(\pi^{*}_s -\pi_s\right) \biggr) \\
        &- \biggl( \boldsymbol{\mu}^{\top}\boldsymbol{\pi}^{*} + \sum_{s \in U_{\boldsymbol{\pi}, \boldsymbol{\pi}^{*}}} \frac{\boldsymbol{\mu}^{\top}(\boldsymbol{\pi}^{*} - \boldsymbol{\pi})}{|\pi^{*}_s - \pi_s|} \left(\pi_s -\pi^{*}_s \right) \biggr)
    \end{split} \nonumber \\
    &=& - \boldsymbol{\mu}^{\top}(\boldsymbol{\pi}^{*} - \boldsymbol{\pi}) < 0. \nonumber
    \end{eqnarray}
    In other words, $\boldsymbol{\pi}^{*}$ is no longer the optimal action in $\mathcal{I}'$. \par
    Let $\mathcal{E}$ denote the event that algorithm $\mathbb{A}$ returns $\boldsymbol{\pi}^{*}$ as the optimal action. Note that since $\mathbb{A}$ is $\delta$-correct algorithm, $\Pr_{\mathcal{I}}\left[ \mathcal{E} \right] \geq 1 - \delta$ and $\Pr\left[ \mathcal{E} \right] \leq \delta$. Therefore, by Lemma \ref{Change_of_Distribution}, 
    \begin{eqnarray}
        &&\sum_{s \in U_{\boldsymbol{\pi}^{*}, \boldsymbol{\pi}} \cup U_{\boldsymbol{\pi} , \boldsymbol{\pi}^{*}}} n_s\cdot \frac{1}{2} \left( \frac{\boldsymbol{\mu}^{\top}(\boldsymbol{\pi}^{*} - \boldsymbol{\pi})}{|\pi^{*}_s - \pi_s|}\right)^2  \nonumber \\
        &\geq& d\left( \Pr_{\mathcal{I}}\left[ \mathcal{E} \right], \Pr_{\mathcal{I}}\mathcal{E} \right) \nonumber \\
        &\geq& d(1 - \delta, \delta). \nonumber
    \end{eqnarray}
    % Plugging in the values of $\Delta_s$'s yields
    We have
    \begin{eqnarray}
        &&2d(1 - \delta, \delta) \nonumber \\
        &\leq& \sum_{s = 1}^{d} n_s\cdot \frac{c_s^2}{n^2_s} \nonumber \\
        &=& \frac{4\left( \boldsymbol{\mu}^{\top} \left( \boldsymbol{\pi}^{*} - \boldsymbol{\pi} \right)  \right)^2}{\left(\sum_{s = 1}^{d} \frac{\left| \pi^{*}_s - \pi_s \right|^2}{n_s}\right)^2} \sum_{s = 1}^{d} \frac{\left| \pi^{*}_s - \pi_s \right|^2}{n_s} \nonumber\\
        &=& \frac{4\left( \boldsymbol{\mu}^{\top} \left( \boldsymbol{\pi}^{*} - \boldsymbol{\pi} \right)  \right)^2}{ \sum_{s = 1}^{d} \frac{\left| \pi^{*}_s - \pi_s \right|^2}{n_s}},
    \end{eqnarray}
    and it follows that
    \begin{equation}
        \sum_{s = 1}^{d} \frac{\left| \pi^{*}_s - \pi_s \right|^2}{n_s} \leq  \left( \boldsymbol{\mu}^{\top} \left( \boldsymbol{\pi}^{*} - \boldsymbol{\pi} \right)  \right)^2.
    \end{equation}
\end{proof}

\section{Proof that Result in Theorem \ref{LowerBoundTheoremImplicit} is No Weaker than that of Theorem \ref{LowerBoundTheorem_Explicit}} \label{LowerBoundTheorem_Implicit_Proof}
We can verify that the lower bound in Theorem \ref{LowerBoundTheoremImplicit} is no weaker than that in Theorem \ref{LowerBoundTheorem_Explicit}, by showing $\mathrm{Low}(\mathcal{A}) \geq \mathbf{H} $ below. 
Consider the following mathematical program, which is essentially the same as Program (\ref{Low(A)OptProblem}), except for replacing summation with maximization in the constraints.
\begin{equation} \label{Low(A)OptProblem_replace}
        \begin{aligned}
        &\mathrm{minimize}  &&\sum_{s = 1}^{d}\tau_s  \\ 
        &\mathrm{subject \ to} && \ \forall \boldsymbol{\pi} \in \mathcal{A}, \  \max_{s \in \boldsymbol{\pi}^{*} \diamond\boldsymbol{\pi} } \frac{\left| \pi^{*}_s - \pi_s \right|^2}{\tau_s}  \leq \left( \boldsymbol{\mu}^{\top}\left( \boldsymbol{\pi}^{*} - \boldsymbol{\pi} \right) \right)^2   \\
            & && \tau_s > 0, \forall s\in [d]. 
        \end{aligned}
\end{equation}
The optimal solution of (\ref{Low(A)OptProblem_replace}) is achieved by setting
\begin{equation}
    \tau'_s = \Delta_{(s)}^{-2}.
\end{equation}
Since every feasible solution of Program (\ref{Low(A)OptProblem}) is also a feasible solution of Program (\ref{Low(A)OptProblem_replace}), we have $\mathrm{Low}(\mathcal{A}) \geq \mathbf{H}$.
\section{Comparison with \citet{Nakamura2023}}
If $\mathrm{Gwidth}^{2}_1 \leq 2\mathrm{Gwidth}_2 $ and $\Delta_1 \approx \Delta_2 \approx \cdots \approx \Delta_d$, we have
\begin{eqnarray}
    \frac{\left(\sum_{e = 1}^{d} |\pi_e - \pi'_e|\right)^2}{32\Delta_s^2} \nonumber
    & \leq &  \frac{\mathrm{Gwidth}^2_1}{32\Delta_s^2} \\ \nonumber
    & \leq &  \frac{\mathrm{Gwidth}_2}{16\Delta_s^2} \\ \nonumber
    &=& \frac{|\pi_1 - \pi'_1|^2}{16\Delta_s^2} + \cdots + \frac{|\pi_d - \pi'_d|^2}{16\Delta_s^2} \nonumber \\
    & \approx & \frac{|\pi_1 - \pi'_1|^2}{16\Delta_1^2} + \cdots + \frac{|\pi_d - \pi'_d|^2}{16\Delta_d^2} \nonumber \\
    & \leq & \mathbf{H}. \nonumber
\end{eqnarray}
Therefore, our lower bound is tighter than that of \cite{Nakamura2023}.
\section{Proof of Proposition \ref{R-CPE-MAB_ArmSelectionStrategy_Explicit}}
Here, we prove Proposition \ref{R-CPE-MAB_ArmSelectionStrategy_Explicit}.
\begin{proof}
    For any $q\neq p^{\mathrm{R}}_t$, we have
    \begin{eqnarray}
        \sum_{s = 1}^{d}\frac{(\pi^{k}_s - \pi^{l}_s)^2}{T_k(t) + \boldsymbol{1}[s = q]} \geq \sum_{s = 1}^{d}\frac{(\pi^{k}_s - \pi^{l}_s)^2}{T_k(t) + \boldsymbol{1}[s = p^{\mathrm{R}}_t]}.
    \end{eqnarray}
    Thus, 
     \begin{equation}
            \frac{(\pi^{k}_q - \pi^{l}_q)^2}{T_q(t) + 1} + \frac{(\pi^{k}_{p^{\mathrm{R}}_t} - \pi^{l}_{p^{\mathrm{R}}_t})^2}{T_{p^{\mathrm{R}}_t}(t)} 
            \geq \frac{(\pi^{k}_q - \pi^{l}_q)^2}{T_q(t)} + \frac{(\pi^{k}_{p^{\mathrm{R}}_t} - \pi^{l}_{p^{\mathrm{R}}_t})^2}{T_{p^{\mathrm{R}}_t}(t) + 1}, \nonumber
    \end{equation}
    which implies
    \begin{equation}
            \frac{(\pi^{k}_{p^{\mathrm{R}}_t} - \pi^{l}_{p^{\mathrm{R}}_t})^2}{T_{p^{\mathrm{R}}_t}(t)(T_{p^{\mathrm{R}}_t}(t) + 1)} \geq  \frac{(\pi^{k}_{q} - \pi^{l}_{q})^2}{T_{q}(t)(T_{q}(t) + 1)}, \nonumber
    \end{equation}
    for all $q \neq p^{\mathrm{R}}_t$.
\end{proof}
\section{Proof of Theorem \ref{TSUpperBoundTheorem_R-CPE-MAB}} \label{TSUpperBoundTheorem_R-CPE-MABProof}
Here, we prove Theorem \ref{TSUpperBoundTheorem_R-CPE-MAB}. We define some notations. We denote $L_1(t) = 4R^2 \log\left( 12 |\mathcal{A}|^2 t^2/\delta \right)$ and $L_2(t) = \log\left( 12|\mathcal{A}|^2 t^2 M(\delta, q, t)/\delta \right)$ to simplify notations. We also denote $\mathcal{J} = \{ \boldsymbol{u} \in \mathbb{R}^d \ | \ \exists \boldsymbol{\pi}, \boldsymbol{\pi}' \in \mathcal{A}, \boldsymbol{u} = \boldsymbol{\pi} - \boldsymbol{\pi}' \}$, and $U_{\boldsymbol{\pi}, \boldsymbol{\pi}'} = \left\{ s\in[d] \ | \ \pi_s > \pi'_s \right\}$ for any $\boldsymbol{\pi}, \boldsymbol{\pi}'\in\mathcal{A}$. Note that $|\mathcal{J}| \leq |\mathcal{A}|^2$, and for any $\boldsymbol{u} \in \mathcal{J}$, $\sum_{s = 1}^d u_s \leq \mathrm{Gwidth}$. \par
We also define three events as follows:
$\mathcal{E}_{0}$ is the event that for all $t>0$, $\boldsymbol{u}\in \mathcal{J}$,
\begin{equation}
    \left| \sum_{s = 1}^{d} u_s (\hat{\mu}_s(t) - \mu_s) \right| \leq \sqrt{\sum_{s = 1}^{d} \frac{|u_s|^2}{2T_s(t)}L_1(t)};
\end{equation}
$\mathcal{E}_{1}$ is the event that for all $t>0$, $1\leq k\leq M(\delta, q, t)$, $\boldsymbol{u} \in \mathcal{J}$,
\begin{equation}
    \left| \sum_{s = 1}^{d} u_s (\theta^k_s(t) - \hat{\mu}_s) \right| \leq \sqrt{\sum_{s = 1}^{d} \frac{2|u_s|^2C(\delta, q, t)}{T_s(t)}L_2(t)};
\end{equation}
and $\mathcal{E}_2$ is the event that for all $t>0$, $\boldsymbol{\pi}, \boldsymbol{\pi}' \in \mathcal{J}$, there exists $1\leq k \leq M(\delta, q, t)$ such that
\begin{equation}
    \sum_{s = 1}^{d} (\pi_s - \pi'_s) \theta^k_s(t) \geq \sum_{s = 1}^{d} (\pi_s - \pi'_s) \hat{\mu}_s.
\end{equation}

\subsection{Useful Lemmas}
Here, we show some useful lemmas to prove Theorem \ref{TSUpperBoundTheorem_R-CPE-MAB}.
\begin{lemma}
    In Algorithm \ref{GenTS-ExploreAlgorithm}, we have that 
    \begin{equation}
        \Pr\left[ \mathcal{E}_{0,} \land \mathcal{E}_{1} \land \mathcal{E}_{2} \right] \geq 1 - \delta.
    \end{equation}
\end{lemma}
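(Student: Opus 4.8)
The plan is to bound the three failure events separately and combine them by a union bound, charging probability $\delta/3$ to each so that $\Pr[\neg\mathcal{E}_0]+\Pr[\neg\mathcal{E}_1]+\Pr[\neg\mathcal{E}_2]\le\delta$. The constant $12$ built into $L_1(t)$, $L_2(t)$ and $M(\delta,q,t)$ is exactly what makes this split close: once each single $(t,\boldsymbol{u})$ (resp. $(t,\boldsymbol{\pi},\boldsymbol{\pi}')$) term is bounded by something of order $\delta/(|\mathcal{A}|^2 t^2)$, the union over the at most $|\mathcal{J}|\le|\mathcal{A}|^2$ relevant directions and over all rounds $t$ only costs a harmless factor $\sum_{t\ge1}t^{-2}<2$.

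For $\mathcal{E}_0$ I would note that, for a fixed round $t$ and a fixed $\boldsymbol{u}\in\mathcal{J}$, the weighted deviation $\sum_s u_s(\hat\mu_s(t)-\mu_s)$ is a sum of independent $R$-sub-Gaussian averages and hence sub-Gaussian with variance proxy $\sum_s u_s^2 R^2/T_s(t)$. Applying the sub-Gaussian tail bound with the threshold appearing in $\mathcal{E}_0$ and using $L_1(t)=4R^2\log(12|\mathcal{A}|^2t^2/\delta)$ makes the exponent collapse to exactly $\log(12|\mathcal{A}|^2t^2/\delta)$, so each term fails with probability at most $\delta/(6|\mathcal{A}|^2t^2)$; the union over $\mathcal{J}$ and over $t$ then yields $\Pr[\neg\mathcal{E}_0]\le\delta/3$. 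The event $\mathcal{E}_1$ is handled identically, and is in fact cleaner: conditioning on the history through round $t$ fixes $\hat\mu_s(t)$ and $T_s(t)$, and each $\theta_s^k(t)-\hat\mu_s(t)$ is then an exact centered Gaussian with variance $C(\delta,q,t)/T_s(t)$, so $\sum_s u_s(\theta_s^k(t)-\hat\mu_s(t))$ is exactly Gaussian. The same tail computation with $L_2(t)$ in place of $L_1(t)$ gives a per-term failure of $\delta/(6|\mathcal{A}|^2 t^2 M(\delta,q,t))$, and the additional union over the $M(\delta,q,t)$ samples cancels the $M$ in the denominator, again leaving $\delta/3$.

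For the anti-concentration event $\mathcal{E}_2$ I would argue per $(t,\boldsymbol{\pi},\boldsymbol{\pi}')$. Conditioned on the history, $\sum_s(\pi_s-\pi'_s)\theta_s^k(t)$ is Gaussian with mean $\sum_s(\pi_s-\pi'_s)\hat\mu_s(t)$, so it exceeds its own mean with probability exactly $1/2$, hence at least $q$ since $q\le0.1$. The $M(\delta,q,t)$ samples are drawn independently, so the probability that none of them is optimistic is at most $(1-q)^{M(\delta,q,t)}\le\exp(-q\,M(\delta,q,t))$; because $M(\delta,q,t)\ge\frac1q\log(12|\mathcal{A}|^2t^2/\delta)$ we get $q\,M(\delta,q,t)\ge\log(12|\mathcal{A}|^2t^2/\delta)$, so this is at most $\delta/(12|\mathcal{A}|^2t^2)$. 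This is precisely where the choice $M\sim\frac1q$ is used, and the union over the at most $|\mathcal{A}|^2$ pairs and over $t$ gives $\Pr[\neg\mathcal{E}_2]\le\delta/6\le\delta/3$.

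The step I expect to be the main obstacle is the rigorous handling of $\mathcal{E}_0$: the counts $T_s(t)$ are random and depend on past observations through the adaptive arm-selection rule, so the plain fixed-sample sub-Gaussian bound does not apply verbatim to $\hat\mu_s(t)=\hat\mu_{s,T_s(t)}$. To make the argument honest I would either union-bound over all realizable count vectors $(T_1(t),\dots,T_d(t))$ consistent with $\sum_s T_s(t)=t$, or invoke a time-uniform self-normalized martingale concentration for the per-arm partial sums, checking that the resulting uniform bound still collapses under the $L_1(t)$ threshold. The events $\mathcal{E}_1$ and $\mathcal{E}_2$ avoid this difficulty entirely, since conditionally on the history the Thompson samples are generated with exactly the stated Gaussian law.
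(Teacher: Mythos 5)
Your treatment of $\mathcal{E}_0$ and $\mathcal{E}_1$ matches the paper's: sub-Gaussian (resp.\ exactly Gaussian) tail bounds per $(t,\boldsymbol{u})$ (resp.\ per $(t,\boldsymbol{u},k)$), with the constant $12$ and the factor $M(\delta,q,t)$ arranged so that the union bound closes at $\delta/3$ each. Your remark that the adaptivity of $T_s(t)$ makes the fixed-sample bound for $\mathcal{E}_0$ not literally applicable is a fair observation; the paper glosses over this point as well, so it is not a divergence from the paper's argument.

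The genuine gap is in $\mathcal{E}_2$. You read the event as ``some Thompson sample exceeds the \emph{empirical} gap $\sum_s(\pi_s-\pi'_s)\hat\mu_s(t)$'' and dispatch it with the observation that a Gaussian exceeds its own conditional mean with probability $1/2\geq q$. That version of the event is trivially true but useless: everywhere $\mathcal{E}_2$ is invoked downstream (in the proof that an over-sampled arm is not pulled), what is needed is the existence of a $k'$ with $\sum_s(\pi^*_s-\tilde\pi_s)\theta^{k'}_s(t)\geq\Delta_{\boldsymbol{\pi}^*,\tilde{\boldsymbol{\pi}}}$, i.e.\ the sample gap must clear the \emph{true} gap $\boldsymbol{\mu}^\top(\boldsymbol{\pi}-\boldsymbol{\pi}')$ (the $\hat\mu_s$ in the displayed definition is a typo for $\mu_s$). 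Proving that is where all the structure of the algorithm enters, and your argument omits it entirely. The paper's proof conditions on $\mathcal{E}_0$ to lower-bound the empirical gap by $\Delta_{\boldsymbol{\pi},\boldsymbol{\pi}'}-\sqrt{A(t)L_1(t)}-\sqrt{B(t)L_1(t)}$, and then uses the fact that the Thompson sample's conditional variance is the \emph{inflated} quantity $2(A(t)+B(t))C(t)$ with $C(t)=L_1(t)/\phi^2(q)$, so that the deficit is at most $\phi(q)$ standard deviations and the per-sample success probability is at least $\Phi(\phi(q),0,1)=q$. This is exactly why $C(\delta,q,t)$ carries the factor $1/\phi^2(q)$ and why $M(\delta,q,t)$ must scale as $1/q$; under your reading, a per-sample probability of $1/2$ would make both choices pointless. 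Note also that the resulting bound is $\Pr[\lnot\mathcal{E}_2\mid\mathcal{E}_0]\leq\delta/3$, a conditional statement, which still combines with the other two by a union bound to give $\Pr[\mathcal{E}_0\land\mathcal{E}_1\land\mathcal{E}_2]\geq1-\delta$. As written, your proof establishes a weaker lemma than the one the rest of the paper relies on.
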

\begin{proof}
    Note that the random variable $(\hat{\mu}_s(t) - \mu_s)$ is zero-mean and $\sqrt{\frac{R}{T_s(t)}}$ -sub-Gaussian, and for different $s$, the random variables $(\hat{\mu}_s(t) - \mu_s)$'s are independent. Therefore, $\sum_{s = 1}^{d} u_s (\hat{\mu}_s(t) - \mu_s)$ is zero-mean and $\sqrt{\sum_{s = 1}^d \frac{|u_s|^2 }{T_s(t)}} R$ sub-Gaussian. Then, by concentration inequality of sub-Gaussian random variables, 
    \begin{eqnarray}
        &&\Pr\left[ \left| \sum_{s = 1}^{d} u_s (\hat{\mu}_s(t) - \mu_s) \right| > \sqrt{\sum_{s=1}^{d} \frac{|u_s|^2 }{2T_s(t)}L_1(t)}  \right] \nonumber \\
        &\leq& 2\exp\left( -L_1(t) \right) \nonumber \\
        &=& \frac{\delta}{6|\mathcal{A}|^2t^2}. \nonumber 
    \end{eqnarray}
    This implies that 
    \begin{equation}
        \Pr\left[ \lnot \mathcal{E}_{0} \right] \leq \sum_{\boldsymbol{u}, t} \frac{\delta}{6|\mathcal{A}|^2 t^2} \leq \sum_{t} \frac{\delta}{6t^2} \leq \frac{\delta}{3}
    \end{equation}
    Similarly, the random variables$(\theta_{s}^{k} - \hat{\mu}_{s}(t))$ is a zero-mean Gaussian random variable with variance $\frac{C(\delta, q, t)}{N_{s}(t)}$, and for different $s$, the random variables $(\theta_{s}^{k}(t) - \hat{\theta}_{s}(t))$'s are also independent. Then, by the concentration inequality, 
    \begin{eqnarray}
        &&
        \begin{split}
            \Pr\left[ \left| \sum_{s = 1}^{d} u_s \right. \right.& \left. (\theta_{s}^{k}(t) -  \hat{\theta}_{s}(t))  \right|  \\ 
            & \left. > \sqrt{\sum_{s = 1}^{d} \frac{2|u_s|^2 C(\delta, q, t)}{ T_s(t)}L_2(t)}  \right]
        \end{split} \nonumber \\
        & \leq & 2\exp(- L_2(t)) \nonumber \\
        & = & \frac{\delta}{6|\mathcal{A}|^2t^2M(\delta, q, t)}. \nonumber 
    \end{eqnarray}
    This implies that
    \begin{eqnarray}
        \Pr\left[ \lnot \mathcal{E}_1 \right] \leq \sum_{\boldsymbol{u}, t, k} \frac{\delta}{6|\mathcal{A}|^2 t^2 M(\delta, q, t)} \leq \sum_{\boldsymbol{u}, t} \frac{\delta}{6|\mathcal{A}|^2t^2} \leq \frac{\delta}{3}, \nonumber 
    \end{eqnarray}
    where the second inequality is because that there are totally $M(\delta, q, t)$ sample sets at time step $t$. \par
    Finally, we consider the probability $\Pr\left[ \lnot \mathcal{E}_2 | \mathcal{E}_0 \right]$. In the following, we denote 
    \begin{eqnarray}
    \Delta_{\boldsymbol{\pi}, \boldsymbol{\pi}'} &\triangleq& \sum_{s = 1}^{d} \pi_s \mu_s - \sum_{s = 1}^{d} \pi'_s\mu_s \nonumber \\
    &=& \sum_{s = 1}^{d} (\pi_s - \pi'_s) \mu_s - \sum_{s = 1}^{d} (\pi'_s - \pi_s) \mu_s    
    \end{eqnarray}
    as the reward gap. Additionally, we denote $A(t) =  \sum_{s = 1}^{d} \frac{(\pi_s - \pi'_s)^2}{2T_s(t)}$, $B(t) = \sum_{s = 1}^{d} \frac{(\pi_s - \pi'_s)^2}{2T_s(t)}$, and $C(t) = C(\delta, q, t)$ to simplify notations. Then, under event $\mathcal{E}_0$, we have the following:
    \begin{eqnarray}
        &&\sum_{s = 1}^{d} (\pi_s - \pi'_s) \hat{\mu}_s(t) - \sum_{s = 1}^{d} (\pi'_s - \pi_s) \hat{\mu}_s  \\
        &\geq&
        \begin{split}
            \biggl(\sum_{s = 1}^{d} (\pi_s &- \pi'_s)\mu_s - \sqrt{A(t)L_1(t)} \biggr) \\
            &- \left(\sum_{s = 1}^{d} (\pi'_s - \pi_s)\mu_s + \sqrt{B(t)L_1(t)} \right) 
        \end{split} \nonumber \\
        & \geq & \Delta_{\boldsymbol{\pi}, \boldsymbol{\pi}'} - \sqrt{A(t)L_1(t)} - \sqrt{B(t)L_1(t)}.
    \end{eqnarray}
    Since $\sum_{s = 1}^{d} (\pi_s - \pi'_s) \theta^k_s(t) - \sum_{s = 1}^{d} (\pi_s - \pi'_s) \theta_s^k(t)$ is a Gaussian random variable with mean $\sum_{s = 1}^{d} (\pi_s - \pi'_s) \hat{\mu}_s(t) - \sum_{s = 1}^{d} (\pi_s - \pi'_s) \hat{\mu}_s(t)$ and variance $2A(t)C(t) + 2B(t)C(t)$, then under event $\mathcal{E}_0$,
    \begin{eqnarray}
        &&\Pr\left[ \sum_{s = 1}^d \pi_s \theta_s^k(t) - \sum_{s = 1}^{d} \pi'_s \theta_s^{k}(t) \geq \Delta_{\boldsymbol{\pi}, \boldsymbol{\pi}'}  \right] \nonumber \\
        &=&\Pr\left[ \sum_{s = 1}^{d} (\pi_s - \pi'_s) \theta_{s}^k(t) - \sum_{s = 1}^{d} (\pi'_s  - \pi_s) \theta_s^k(t) \geq \Delta_{\boldsymbol{\pi}, \boldsymbol{\pi}'} \right] \nonumber \\
        &=& 
        \begin{split}
            \Phi\biggl( \Delta_{\boldsymbol{\pi}, \boldsymbol{\pi}'}, \sum_{s = 1}^{d} (\pi_s - \pi'_s) & \hat{\mu}_s(t) - \sum_{s = 1}^{d} (\pi'_s - \pi_s) \hat{\mu}(t), \\
            &2A(t)C(t) + 2B(t)C(t) \biggr) 
        \end{split} \nonumber \\
        & \geq &
        \begin{split}
            \Phi\biggl( \Delta_{\boldsymbol{\pi}, \boldsymbol{\pi}'}, \Delta_{\boldsymbol{\pi}, \boldsymbol{\pi}'} - \sqrt{A(t)L_1(t)} - \sqrt{B(t)L_1(t)}, \\
            2A(t)C(t) + 2B(t)C(t) \biggr)
        \end{split} \nonumber \\
        & = &
        \begin{split}
            \Phi\biggl( \sqrt{A(t)L_1(t)} &+ \sqrt{B(t)L_1(t)} , 0, \\
            &2A(t)C(t) + 2B(t)C(t) \biggr)
        \end{split} \nonumber \\
        & = & \Phi\left( \sqrt{\frac{L_1(t)}{C(t)}} \cdot \frac{\sqrt{A(t)} + \sqrt{B(t)}}{\sqrt{2A(t) + 2B(t)}} , 0, 1 \right) \\
        & \geq & \Phi\left( \sqrt{\frac{L_1(t)}{C(t)}}, 0, 1 \right) \\
        & = & q,
    \end{eqnarray}
    where the last equation holds because that we choose $C(t) = \frac{L_1(t)}{\phi^2(q)}$ and $\Phi\left( \phi(q), 0, 1 \right) = q$ (by the definition of $\phi$). \par
    Note that the parameter sets $\left\{ \boldsymbol{\theta}^{k}(t) \right\}_{k = 1}^{M(\delta, q, t)}$ are chosen independently, therefore under event $\mathcal{E}_{0}$, we have that
    \begin{eqnarray}
        &&\Pr\left[ \forall k, \sum_{s = 1}^{d} \pi_s \theta_{s}^{k}(t) - \sum_{s = 1}^{d} \pi' \theta_{s}^{k}(t) < \Delta_{\boldsymbol{\pi}, \boldsymbol{\pi}'}\right] \\
        &\leq& (1 - q)^{M(\delta, q, t)} \\ 
        &\leq& \frac{\delta}{12|\mathcal{A}|^2t^2},
    \end{eqnarray}
    where the last inequality is because that we choose $M(\delta, q, t) = \frac{1}{q} \log\left( \frac{12|\mathcal{A}|^2 t^2}{\delta}  \right)$. \par
    This implies that 
    \begin{equation}
        \Pr\left[ \lnot \mathcal{E}_2 \ | \ \mathcal{E}_0 \right] \leq \sum_{t, \boldsymbol{\pi}, \boldsymbol{\pi}'} \frac{\delta}{12|\mathcal{A}|^2 t^2} \leq \sum_{t} \frac{\delta}{12t^2} \leq \frac{\delta}{3}.
    \end{equation}
    The above discussion show that $\Pr\left[ \mathcal{E} \land \mathcal{E}_1 \land \mathcal{E}_2 \right] \geq 1 - \delta$.
\end{proof}
\subsection{An Upper Bound of the R-CPE-MAB Arm Selection Strategy}
Recall that we pull arm $p_t = \argmax_{s\in\{e\in [d] \ | \ \hat{\pi}_e \neq \Tilde{\pi}_e\}} \frac{|\hat{\pi}_e - \Tilde{\pi}_e|^2}{T_s(t) (T_s(t) + 1)}$ at round $t$. For any $s \in \{e\in [d] \ | \ \hat{\pi}_e \neq \Tilde{\pi}_e\}$, we have
\begin{equation}
    \frac{|\hat{\pi}_{p_t} - \Tilde{\pi}_{p_t}|^2}{T_{p_t}^2} > \frac{|\hat{\pi}_{p_t} - \Tilde{\pi}_{p_t}|^2}{T_{p_t} (T_{p_t} + 1)} \geq \frac{|\hat{\pi}_s - \Tilde{\pi}_s|^2}{T_{s} (T_{s} + 1)} > \frac{|\hat{\pi}_s - \Tilde{\pi}_s|^2}{(T_{s} + 1)^2},
\end{equation}
and therefore, we have 
\begin{eqnarray}
    T_s(t) + 1 >  \frac{|\hat{\pi}_s - \Tilde{\pi}_s|}{|\hat{\pi}_{p_t} - \Tilde{\pi}_{p_t}|} T_{p_t}.
\end{eqnarray}
We have the following lemma.
\begin{lemma} \label{Lemma4.7GenTS-Explore}
    Under event $\mathcal{E}_0 \land \mathcal{E}_1 \land \mathcal{E}_2$, an arm $p$ will not be pulled if $T_p(t) \geq \frac{98V_p C(t) L_2(t)}{\Delta^2_{(p)}} + \frac{1}{A_p}$, where $A_p = \max_{\boldsymbol{\pi}, \boldsymbol{\pi}' \in \mathcal{A}, v \in [d], \boldsymbol{\pi}_p \neq \boldsymbol{\pi}'_p} \frac{|{\pi}_{v} - {\pi}'_{v}|}{|{\pi}_p - {\pi}'_p|}$.
\end{lemma}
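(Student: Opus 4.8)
The plan is to prove the contrapositive: I assume arm $p$ is pulled at round $t$ (so $\hat{\boldsymbol{\pi}}(t) \neq \Tilde{\boldsymbol{\pi}}(t)$ with $\Tilde{\boldsymbol{\pi}}(t) = \Tilde{\boldsymbol{\pi}}^{k^{*}_t}(t)$, and $p$ maximizes $\frac{|\hat{\pi}_s(t) - \Tilde{\pi}_s(t)|^2}{T_s(t)(T_s(t)+1)}$ over $s$, which in particular forces $\hat{\pi}_p(t) \neq \Tilde{\pi}_p(t)$), and I derive the strict bound $T_p(t) < \frac{98 V_p C(t) L_2(t)}{\Delta^2_{(p)}} + \frac{1}{A_p}$. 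Writing $u_s = |\hat{\pi}_s(t) - \Tilde{\pi}_s(t)|$, the first ingredient converts ratio-maximality of $p$ into lower bounds on the other counts: the inequality $T_s(t) + 1 > \frac{u_s}{u_p} T_p(t)$ stated just before the lemma already does this, and I split each coordinate into a large-gap group (where $u_s/u_p \geq 2/T_p(t)$, giving $T_s(t) \geq \frac{1}{2}\frac{u_s}{u_p}T_p(t)$) and a small-gap group (where I instead use $T_s(t) \geq 1$). In both groups $\frac{u_s^2}{T_s(t)} \leq \frac{2 u_s u_p}{T_p(t)}$, so $\sum_{s} \frac{u_s^2}{T_s(t)} \leq \frac{2 u_p}{T_p(t)}\sum_{s} u_s$; the additive $\frac{1}{A_p}$ is the slack needed to handle the boundary between the two groups, using $u_s/u_p \le A_p$.

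The second ingredient is a two-sided control of the true gap $\boldsymbol{\mu}^{\top}(\hat{\boldsymbol{\pi}}(t) - \Tilde{\boldsymbol{\pi}}(t))$. Since $\Tilde{\boldsymbol{\pi}}(t) = \mathrm{Oracle}(\boldsymbol{\theta}^{k^{*}_t}(t))$, oracle optimality gives $\boldsymbol{\theta}^{k^{*}_t}(t)^{\top}(\hat{\boldsymbol{\pi}}(t) - \Tilde{\boldsymbol{\pi}}(t)) \leq 0$, hence $\boldsymbol{\mu}^{\top}(\hat{\boldsymbol{\pi}}(t) - \Tilde{\boldsymbol{\pi}}(t)) \leq (\boldsymbol{\mu} - \boldsymbol{\theta}^{k^{*}_t}(t))^{\top}(\hat{\boldsymbol{\pi}}(t) - \Tilde{\boldsymbol{\pi}}(t))$. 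Applying $\mathcal{E}_0$ to $\boldsymbol{\mu} - \hat{\boldsymbol{\mu}}(t)$ and $\mathcal{E}_1$ to $\hat{\boldsymbol{\mu}}(t) - \boldsymbol{\theta}^{k^{*}_t}(t)$, both along the direction $\hat{\boldsymbol{\pi}}(t) - \Tilde{\boldsymbol{\pi}}(t) \in \mathcal{J}$, bounds the right-hand side by $(\sqrt{L_1(t)/2} + \sqrt{2 C(t) L_2(t)})\sqrt{\sum_s \frac{u_s^2}{T_s(t)}}$. Using $L_1(t) = \phi^2(q) C(t)$ together with $\phi^2(q) \le 2\log(1/q) \le 2 L_2(t)$ (the last step since $q \geq \delta$ gives $L_2(t) \geq \log(1/\delta) \geq \log(1/q)$), this prefactor is at most $c_0\sqrt{C(t)L_2(t)}$ for an absolute constant $c_0$, and carrying the numerical slack through the subsequent squaring produces the stated factor $98$.

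The heart is a matching lower bound on $\boldsymbol{\mu}^{\top}(\hat{\boldsymbol{\pi}}(t) - \Tilde{\boldsymbol{\pi}}(t))$ of order $\Delta_{(p)} u_p$. When $\hat{\boldsymbol{\pi}}(t) = \boldsymbol{\pi}^{*}$ this is immediate: since $\hat{\pi}_p(t) \neq \Tilde{\pi}_p(t)$ we have $\Tilde{\boldsymbol{\pi}}(t) \in \mathcal{A}\setminus\{\boldsymbol{\pi}^{*}\}$ with $\pi^{*}_p \neq \Tilde{\pi}_p(t)$, so the definition of the G-gap gives $\boldsymbol{\mu}^{\top}(\hat{\boldsymbol{\pi}}(t) - \Tilde{\boldsymbol{\pi}}(t)) = \boldsymbol{\mu}^{\top}(\boldsymbol{\pi}^{*} - \Tilde{\boldsymbol{\pi}}(t)) \geq \Delta_{(p)}|\pi^{*}_p - \Tilde{\pi}_p(t)| = \Delta_{(p)} u_p$. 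Chaining this with the upper bound and the sum estimate, then squaring and solving for $T_p(t)$, yields $T_p(t) \leq \frac{98 C(t) L_2(t)}{\Delta^2_{(p)}}\cdot \frac{\sum_s u_s}{u_p}$; recognizing the pair $(\Tilde{\boldsymbol{\pi}}(t), \boldsymbol{\pi}^{*})$ in the maximization defining $V_p$ shows $\frac{\sum_s u_s}{u_p} = \frac{1}{|\pi^{*}_p - \Tilde{\pi}_p(t)|}\sum_e |\Tilde{\pi}_e(t) - \pi^{*}_e| \leq V_p$, closing this case.

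The main obstacle is the case $\hat{\boldsymbol{\pi}}(t) \neq \boldsymbol{\pi}^{*}$, where $\boldsymbol{\mu}^{\top}(\hat{\boldsymbol{\pi}}(t) - \Tilde{\boldsymbol{\pi}}(t))$ need not be positive and the G-gap lower bound fails; worse, any bound routed through $\boldsymbol{\pi}^{*}$ (via $\hat{\boldsymbol{\pi}}(t)\diamond\boldsymbol{\pi}^{*}$ or $\Tilde{\boldsymbol{\pi}}(t)\diamond\boldsymbol{\pi}^{*}$) no longer aligns with the index set $\hat{\boldsymbol{\pi}}(t)\diamond\Tilde{\boldsymbol{\pi}}(t)$ that the arm-selection rule controls. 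The plan is to exclude this under the conditioning: $\mathcal{E}_0$ and $\mathcal{E}_1$ force $\hat{\boldsymbol{\mu}}(t)$ and the samples to concentrate along every difference direction tightly enough that, once $T_p(t)$ is of the asserted order and the other involved arms are pulled proportionally often by the first ingredient, the empirical maximizer must coincide with $\boldsymbol{\pi}^{*}$, while $\mathcal{E}_2$ guarantees the algorithm keeps sampling until this agreement is detected. Making this reduction rigorous, rather than the arithmetic of the clean case, is where the real work lies, and it is also where the precise constant and the $\frac{1}{A_p}$ correction are ultimately pinned down.
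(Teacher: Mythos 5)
Your first two ingredients (turning ratio-maximality of $p$ into lower bounds on the other $T_s(t)$, and the two-sided control of the gap via $\mathcal{E}_0$, $\mathcal{E}_1$ and oracle optimality) are sound and broadly parallel to what the paper does. The problem is your case analysis. You split on whether $\hat{\boldsymbol{\pi}}(t) = \boldsymbol{\pi}^{*}$ as whole vectors, fully handle only the case $\hat{\boldsymbol{\pi}}(t) = \boldsymbol{\pi}^{*}$, and for $\hat{\boldsymbol{\pi}}(t) \neq \boldsymbol{\pi}^{*}$ you propose to show that under $\mathcal{E}_0 \land \mathcal{E}_1 \land \mathcal{E}_2$ the empirical maximizer ``must coincide with $\boldsymbol{\pi}^{*}$'' once $T_p(t)$ is of the asserted order. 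That reduction is not available: the lemma is a per-arm statement, and at the moment $T_p(t)$ crosses its threshold the other arms may still be arbitrarily under-sampled, so nothing in the three events forces $\hat{\boldsymbol{\pi}}(t) = \boldsymbol{\pi}^{*}$ at that time; indeed the point of the sample-complexity argument built on this lemma is precisely that different arms reach their thresholds at different times. So the main case of your proof is genuinely missing, and the route you sketch for it would fail.

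The paper's fix is to split on coordinates rather than on whole actions. Since $p$ being pulled forces $\hat{\pi}_p(t) \neq \Tilde{\pi}_p(t)$, at least one of $\Tilde{\pi}_p(t) \neq \pi^{*}_p$ or $\hat{\pi}_p(t) \neq \pi^{*}_p$ holds, and in either case the G-gap can be invoked directly because $\Delta_{(p)}$ is a minimum over \emph{all} actions whose $p$-th coordinate differs from $\pi^{*}_p$. In the first case $\Delta_{(p)} \leq \boldsymbol{\mu}^{\top}(\boldsymbol{\pi}^{*} - \Tilde{\boldsymbol{\pi}})/\left|\pi^{*}_p - \Tilde{\pi}_p\right|$, and one contradicts the bound $\Tilde{\Delta}^{k^{*}_t}_t/\left|\Tilde{\pi}_p - \hat{\pi}_p\right| \leq 2\Delta_{(p)}/7$ by using $\mathcal{E}_2$ applied to the pair $(\boldsymbol{\pi}^{*}, \Tilde{\boldsymbol{\pi}})$. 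In the second case $\Delta_{(p)} \leq \boldsymbol{\mu}^{\top}(\boldsymbol{\pi}^{*} - \hat{\boldsymbol{\pi}})/\left|\pi^{*}_p - \hat{\pi}_p\right|$, and one derives that the empirical gap between $\Tilde{\boldsymbol{\pi}}(t)$ and $\hat{\boldsymbol{\pi}}(t)$ is strictly positive, contradicting the fact that $\hat{\boldsymbol{\pi}}(t)$ maximizes $\hat{\boldsymbol{\mu}}(t)^{\top}\boldsymbol{\pi}$. Neither branch requires knowing whether $\hat{\boldsymbol{\pi}}(t)$ equals $\boldsymbol{\pi}^{*}$ globally. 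You would need to replace your second case with an argument of this kind for the proof to go through.
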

\begin{proof}
    Here, we simply write $\hat{\boldsymbol{\pi}}$ instead of $\hat{\boldsymbol{\pi}}(t)$ and $\Tilde{\boldsymbol{\pi}}(t)$ instead of $\Tilde{\boldsymbol{\pi}}$.
    We prove this lemma by contradiction. Assume that arm $p$ is pulled with $T_p(t) \geq $. Then, there are two cases.
    \begin{itemize}
        \item Case 1: $\Tilde{{\pi}}_p \neq {\pi}^{*}_p$
        \item Case 2: $\hat{{\pi}}_p \neq {\pi}^{*}_p$
    \end{itemize}
    Case 1: $\Tilde{{\pi}}_p \neq {\pi}^{*}_p$. In this case, we have $\Delta_{(p)} \leq \frac{\boldsymbol{\mu}^{\top} \left( \boldsymbol{\pi}^{*} - \Tilde{\boldsymbol{\pi}} \right)}{\left|{\pi}_p^{*} - \Tilde{\pi}_p \right| }$.
    By event $\mathcal{E}_1$, we also have that, for any $k$, 
    \begin{eqnarray}
        && \frac{1}{\left| {\pi}^{*}_p - \Tilde{\pi}_p \right|} \left| \sum_{s \in U_{\Tilde{\boldsymbol{\pi}}, \hat{\boldsymbol{\pi}}}} (\Tilde{\pi}_s - \hat{\pi}_s)(\theta^{k}_{s}(t) - \hat{\mu}_{s}(t)) \right| \nonumber \\
        &\leq& \frac{1}{\left| {\pi}^{*}_p - \Tilde{\pi}_p \right|} \sqrt{C(t)L_2(t) \sum_{s \in U_{\Tilde{\boldsymbol{\pi}}, \hat{\boldsymbol{\pi}}}}  \frac{2|\Tilde{\pi}_s - \hat{\pi}_s|^2}{T_s(t)}} \nonumber \\
        & \leq  & \sqrt{C(t)L_2(t) \sum_{s \in U_{\Tilde{\boldsymbol{\pi}}, \hat{\boldsymbol{\pi}}}}  \frac{2|\Tilde{\pi}_s - \hat{\pi}_s|^2}{ \left| {\pi}^{*}_p - \Tilde{\pi}_p \right|^2 T_s(t)}} \nonumber \\
        & \leq & \frac{\Delta_{(p)}}{7} \label{third_inequality_1},
    \end{eqnarray}
    and similarly, 
    \begin{eqnarray}
        && \frac{1}{\left| {\pi}^{*}_p - \Tilde{\pi}_p \right|} \left| \sum_{s \in U_{\hat{\boldsymbol{\pi}}, \Tilde{\boldsymbol{\pi}}}} (\Tilde{\pi}_s - \hat{\pi}_s)(\theta^{k}_{s}(t) - \hat{\mu}_{s}(t)) \right| \nonumber \\
        &\leq& \sqrt{C(t)L_2(t) \sum_{s \in U_{\Tilde{\boldsymbol{\pi}}, \hat{\boldsymbol{\pi}}}}  \frac{2|\pi_s - \hat{\pi}_s|^2}{\left| {\pi}^{*}_p - \Tilde{\pi}_p \right|^2 T_s(t)}} \nonumber \\
        & \leq & \frac{\Delta_{(p)}}{7} \label{third_inequality_2}.
    \end{eqnarray}
    Therefore, for any $k$, we have that
    \begin{eqnarray}
        &&\frac{1}{\left| {\pi}^{*}_p - \Tilde{\pi}_p \right|} \sum_{s = 1}^{d} (\Tilde{\pi}_s - \hat{\pi}_s) \theta^{k}_s(t) \nonumber \\
        & = &
        \begin{split}
            \frac{1}{\left| {\pi}^{*}_p - \Tilde{\pi}_p \right|} \biggl( 
            &\sum_{s = 1}^{d} (\Tilde{\pi}_s - \hat{\pi}_s)
            \hat{\mu}_s(t) \\
            &+ \sum_{s \in U_{\Tilde{\boldsymbol{\pi}}, \hat{\boldsymbol{\pi}}}} (\Tilde{\pi}_s - \hat{\pi}_s) (\theta^{k}_s(t) - \hat{\mu}_s(t)) \\
            &+ \sum_{s \in U_{\hat{\boldsymbol{\pi}}, \Tilde{\boldsymbol{\pi}}}} (\hat{\pi}_s - \Tilde{\pi}_s) (\theta^{k}_s(t) - \hat{\mu}_s(t)) \biggr)
        \end{split}  \nonumber \\
        & \leq &
        \begin{split}
            \frac{1}{\left| {\pi}^{*}_p - \Tilde{\pi}_p \right|} \biggl( & \sum_{s = 1}^{d} (\Tilde{\pi}_s - \hat{\pi}_s)
            \hat{\mu}_s(t) \\
            &+ \biggl| \sum_{s \in U_{\Tilde{\boldsymbol{\pi}}, \hat{\boldsymbol{\pi}}}} (\Tilde{\pi}_s - \hat{\pi}_s) (\theta^{k}_s(t) - \hat{\mu}_s(t)) \biggr| \\
            &+ \biggl| \sum_{s \in U_{\hat{\boldsymbol{\pi}}, \Tilde{\boldsymbol{\pi}}}} (\hat{\pi}_s - \Tilde{\pi}_s) (\theta^{k}_s(t) - \hat{\mu}_s(t)) \biggr| \biggr)
        \end{split} \nonumber \\
        & \leq & 0 + \frac{2\Delta_{(p)}}{7} = \frac{2\Delta_{(p)}}{7}. \nonumber
     \end{eqnarray}
     This means that $\frac{\Tilde{\Delta}^{k_t^{*}}_t}{\left| \Tilde{\pi}_p - \hat{\pi}_p \right|} \leq \frac{2\Delta_{(p)}}{7}$. \par
     Moreover, since $\sum_{s = 1}^{d} \Tilde{\pi}_s \theta^{k^{*}_t}_s \geq \sum_{s = 1}^{d} \hat{\pi}_s \theta^{k^{*}_t}_s$, we have that
     \begin{eqnarray}
     && \frac{1}{\left| {\pi}^{*}_p - \Tilde{\pi}_p \right|} \sum_{s = 1}^{d} (\Tilde{\pi}_s - \hat{\pi}_s) \hat{\mu}_{s}(t) \nonumber \\
        & = &
        \begin{split}
            \frac{1}{\left| {\pi}^{*}_p - \Tilde{\pi}_p \right|} \biggl( & \sum_{s = 1}^{d} (\Tilde{\pi}_s - \hat{\pi}_s)
            \theta^{k^{*}_{t}}_s(t) \\ 
            &- \sum_{s \in U_{\Tilde{\boldsymbol{\pi}}, \hat{\boldsymbol{\pi}}}} (\Tilde{\pi}_s - \hat{\pi}_s) (\theta^{k^{*}_{t}}_s(t) - \hat{\mu}_s(t)) \\
            &+ \sum_{s \in U_{\hat{\boldsymbol{\pi}}, \Tilde{\boldsymbol{\pi}}}} (\hat{\pi}_s - \Tilde{\pi}_s) (\theta^{k^{*}_{t}}_s(t) - \hat{\mu}_s(t)) \biggl)
        \end{split} \nonumber \\
        & \geq &
        \begin{split}
        \frac{1}{\left| {\pi}^{*}_p - \Tilde{\pi}_p \right|} \biggl( & \sum_{s = 1}^{d} (\Tilde{\pi}_s - \hat{\pi}_s)
        \theta^{k^{*}_{t}}_s(t) \\
        &- \biggl| \sum_{s \in U_{\Tilde{\boldsymbol{\pi}}, \hat{\boldsymbol{\pi}}}} (\Tilde{\pi}_s - \hat{\pi}_s) (\theta^{k^{*}_{t}}_s(t)  - \hat{\mu}_s(t)) \biggr| \\
            &- \biggl| \sum_{s \in U_{\hat{\boldsymbol{\pi}}, \Tilde{\boldsymbol{\pi}}}} (\hat{\pi}_s - \Tilde{\pi}_s) (\theta^{k^{*}_{t}}_s(t) - \hat{\mu}_s(t)) \biggr| \biggr)
        \end{split} \nonumber \\
        & \geq & 0 - \frac{2\Delta_{(p)}}{7} =  -\frac{2\Delta_{(p)}}{7}. \nonumber
     \end{eqnarray}
     On the other hand, by event $\mathcal{E}_{2}$, we know that there exists a $k'$ such that $\sum_{s = 1}^{d} \pi^{*}_s \theta^{k'}_s(t) - \sum_{s = 1}^{d} \Tilde{\pi}_s \theta^{k'}_s(t) \geq \Delta_{\boldsymbol{\pi}^{*}, \Tilde{\boldsymbol{\pi}}}$. 
     Then,
     \begin{eqnarray}
        &&\frac{1}{\left| {\pi}^{*}_p - \Tilde{\pi}_p \right|} \left(\sum_{s = 1}^{d} \pi^{*}_s \theta^{k'}_s(t) - \sum_{s = 1}^{d} \hat{\pi}_s \theta^{k'}_s(t) \right) \nonumber \\
        & = &
        \begin{split}
           \frac{1}{\left| {\pi}^{*}_p - \Tilde{\pi}_p \right|} \Biggl( \biggl( & \sum_{s = 1}^{d} \pi^{*}_s \theta^{k'}_s(t) - \sum_{s = 1}^{d} \Tilde{\pi}_s \theta^{k'}_s(t) \biggr)  \\
            & + \biggl(\sum_{s = 1}^{d} \Tilde{\pi}_s \theta^{k'}_s(t) - \sum_{s = 1}^{d} \hat{\pi}_s \theta^{k'}_s(t) \biggr) \Biggr)
        \end{split} \nonumber \\
        & \geq &
            \frac{\Delta_{\boldsymbol{\pi}^{*}, \Tilde{\boldsymbol{\pi}}}}{\left| {\pi}^{*}_p - \Tilde{\pi}_p \right|}  + \frac{1}{\left| {\pi}^{*}_p - \Tilde{\pi}_p \right|} \biggl( \sum_{s = 1}^{d} \Tilde{\pi}_s \theta^{k'}_s(t) - \sum_{s = 1}^{d} \hat{\pi}_s \theta^{k'}_s(t) \biggr) \nonumber \\
        & \geq &
        \begin{split}
            \Delta_{(p)} + \frac{1}{\left| {\pi}^{*}_p - \Tilde{\pi}_p \right|} \biggl( & \sum_{s \in U_{\Tilde{\boldsymbol{\pi}}, \hat{\boldsymbol{\pi}}}} \left(\Tilde{\pi}_s(t) -  \hat{\pi}_s)(t) \right)  \theta^{k'}_s(t)  \\ 
            &- \sum_{s \in U_{\hat{\boldsymbol{\pi}}, \Tilde{\boldsymbol{\pi}}}} \left(\hat{\pi}_s (t) - \Tilde{\pi}_s(t)\right) \theta^{k'}_s(t) \biggr)
        \end{split} \nonumber \\
        & = &
         \begin{split} 
               & \Delta_{(p)}  + \frac{1}{\left| {\pi}^{*}_p - \Tilde{\pi}_p \right|} \biggl( \sum_{s \in U_{\Tilde{\boldsymbol{\pi}}, \hat{\boldsymbol{\pi}}}} \left(\Tilde{\pi}_s(t) -  \hat{\pi}_s)(t) \right)  \hat{\mu}_s(t) \\
               & -  \sum_{s \in U_{\hat{\boldsymbol{\pi}}, \Tilde{\boldsymbol{\pi}}}} \left(\hat{\pi}_s (t) - \Tilde{\pi}_s(t)\right) \hat{\mu}_s(t) \biggr)  \\
               & + \frac{1}{\left| {\pi}^{*}_p - \Tilde{\pi}_p \right|} \left(\sum_{s \in U_{\Tilde{\boldsymbol{\pi}}, \hat{\boldsymbol{\pi}}}} \left(\Tilde{\pi}_s(t) -  \hat{\pi}_s)(t) \right)  (\theta^{k'}_{s}(t) - \hat{\mu}_s(t)) \right) \\
               & - \frac{1}{\left| {\pi}^{*}_p - \Tilde{\pi}_p \right|} \biggl( \sum_{s \in U_{\hat{\boldsymbol{\pi}}, \Tilde{\boldsymbol{\pi}}}} \left(\hat{\pi}_s (t) - \Tilde{\pi}_s(t)\right) (\theta^{k'}_{s}(t) - \hat{\mu}_s(t)) \biggr)
          \end{split} \nonumber \\
        & \geq  &
        \begin{split} 
               &\Delta_{(p)} - \frac{2\Delta_{(p)}}{7}  \\
               &- \frac{1}{\left| {\pi}^{*}_p - \Tilde{\pi}_p \right|} \left|\sum_{s \in U_{\Tilde{\boldsymbol{\pi}}, \hat{\boldsymbol{\pi}}}} \left(\Tilde{\pi}_s(t) -  \hat{\pi}_s(t) \right)  (\theta^{k'}_{s}(t) - \hat{\mu}_s(t)) \right| \\
               &- \frac{1}{\left| {\pi}^{*}_p - \Tilde{\pi}_p \right|} \left| \sum_{s \in U_{\hat{\boldsymbol{\pi}}, \Tilde{\boldsymbol{\pi}}}} \left(\hat{\pi}_s (t) - \Tilde{\pi}_s(t) \right) (\theta^{k'}_{s}(t) - \hat{\mu}_s(t)) \right|
        \end{split} \nonumber \\
        & \geq & \Delta_{(p)} - \frac{4\Delta_{(p)}}{7} \nonumber \\
        & > & \frac{2\Delta_{(p)}}{7}. \nonumber
     \end{eqnarray}
     This means that $\frac{\Tilde{\Delta}^{k_t^{*}}_t}{\left| \Tilde{\pi}_p - \hat{\pi}_p \right|} > \frac{2\Delta_{(p)}}{7}$, which contradicts with $\frac{\Tilde{\Delta}^{k_t^{*}}_t}{\left| \Tilde{\pi}_p - \hat{\pi}_p \right|} \leq \frac{2\Delta_{(p)}}{7}$. \par
     
     Case 2: $\hat{{\pi}}_p \neq {\pi}^{*}_p$. 
     In this case, $\Delta_{(p)} \leq \frac{\boldsymbol{\mu}^{\top} \left( \boldsymbol{\pi}^{*} - \hat{\boldsymbol{\pi}} \right)}{\left|{\pi}_p^{*} - \hat{\pi}_p(t) \right| }$. 
     By event $\mathcal{E}_2$, we know that there exists a $k$ such that $\sum_{s = 1}^{d} \pi^{*}_s \theta_s^{k}(t) - \sum_{s = 1}^{d} \hat{\pi}_s(t) \theta^{k}_s(t) \geq \Delta_{\boldsymbol{\pi}^{*}, \hat{\boldsymbol{\pi}}}$. 
     Hence, $\Tilde{\Delta}_t^k \geq \Delta_{\boldsymbol{\pi}^{*}, \hat{\boldsymbol{\pi}}}$.  
     Moreover, since $k^{*}_t = \argmax_k \Tilde{\Delta}_t^k$, we have that $ \Tilde{\Delta}^{k^{*}_t}_t = \sum_{s = 1}^{d} \pi^{*}_s \theta_s^{k^{*}_t}(t) - \sum_{s = 1}^{d} \hat{\pi}_s(t) \theta^{k^{*}_t}_s(t) \geq \Delta_{\boldsymbol{\pi}^{*}, \hat{\boldsymbol{\pi}}}$, 
     which is the same as $\sum_{s \in U(\boldsymbol{\pi}^{*}, \hat{\boldsymbol{\pi}})} (\pi^{*}_s - \hat{\pi}_s) \theta_s^{k^{*}_t}(t) - \sum_{s = 1}^{d} (\hat{\pi}_s(t) - \pi^{*}) \theta^{k^{*}_t}_s(t) \geq \Delta_{\boldsymbol{\pi}^{*}, \hat{\boldsymbol{\pi}}}$. 
     On the other hand, by event $\mathcal{E}_1$, we also have that, for any $k$, 
    \begin{eqnarray}
        && \frac{1}{\left| {\pi}^{*}_p - \hat{\pi}_p \right|} \left| \sum_{s \in U_{\Tilde{\boldsymbol{\pi}}, \hat{\boldsymbol{\pi}}}} (\Tilde{\pi}_s - \hat{\pi}_s)(\theta^{k^{*}_t}_{s}(t) - \hat{\mu}_{s}(t)) \right| \nonumber \\
        &\leq& \frac{1}{\left| {\pi}^{*}_p - \hat{\pi}_p \right|} \sqrt{C(t)L_2(t) \sum_{s \in U_{\Tilde{\boldsymbol{\pi}}, \hat{\boldsymbol{\pi}}}}  \frac{2|\Tilde{\pi}_s - \hat{\pi}_s|^2}{T_s(t)}} \nonumber \\
        & \leq & \sqrt{C(t)L_2(t) \sum_{s \in U_{\Tilde{\boldsymbol{\pi}}, \hat{\boldsymbol{\pi}}}}  \frac{2|\Tilde{\pi}_s - \hat{\pi}_s|^2}{\left| {\pi}^{*}_p - \hat{\pi}_p \right|^2 T_s(t)}} \nonumber \\
        & \leq & \frac{\Delta_{(p)}}{7}, \nonumber 
    \end{eqnarray}
    and similarly, 
    \begin{eqnarray}
        && \frac{1}{\left| {\pi}^{*}_p - \hat{\pi}_p \right|} \left| \sum_{s \in U_{\hat{\boldsymbol{\pi}}, \Tilde{\boldsymbol{\pi}}}} (\Tilde{\pi}_s - \hat{\pi}_s)(\theta^{k^{*}}_{s}(t) - \hat{\mu}_{s}(t)) \right| \nonumber \\
        &\leq& \frac{1}{\left| {\pi}^{*}_p - \hat{\pi}_p \right|} \sqrt{C(t)L_2(t) \sum_{s \in U_{\Tilde{\boldsymbol{\pi}}, \hat{\boldsymbol{\pi}}}}  \frac{2|\Tilde{\pi}_s - \hat{\pi}_s|^2}{T_s(t)}} \nonumber \\
        & \leq & \sqrt{C(t)L_2(t) \sum_{s \in U_{\hat{\boldsymbol{\pi}}, \Tilde{\boldsymbol{\pi}}}}  \frac{2|\Tilde{\pi}_s - \hat{\pi}_s|^2}{\left| {\pi}^{*}_p - \hat{\pi}_p \right|^2 T_{p}}} \nonumber \\
        & \leq & \frac{\Delta_{{(p)}}}{7}. \nonumber 
    \end{eqnarray}
    Therefore, 
    \begin{eqnarray}
       && \frac{1}{\left| {\pi}^{*}_p - \hat{\pi}_p \right|} \left( \sum_{s \in U_{\Tilde{\pi}, \hat{\boldsymbol{\pi}}}} (\Tilde{\pi}_s - \hat{\pi}_s) \hat{\mu}_s - \sum_{s \in U_{\hat{\boldsymbol{\pi}}, \Tilde{\boldsymbol{\pi}}}} \left(\hat{\pi}_s - \Tilde{\pi}_s \right) \hat{\mu}_s \right) \nonumber \\
       & = &
         \begin{split} 
                &\frac{1}{\left| {\pi}^{*}_p - \hat{\pi}_p \right|} \times \\
                &\Biggl( \biggl(\sum_{s \in U_{\Tilde{\boldsymbol{\pi}}, \hat{\boldsymbol{\pi}}}} \left(\Tilde{\pi}_s(t) -  \hat{\pi}_s(t) \right)  \theta^{k^{*}_t}_s(t) \\ 
               &- \sum_{s \in U_{\hat{\boldsymbol{\pi}}, \Tilde{\boldsymbol{\pi}}}} \left(\hat{\pi}_s (t) - \Tilde{\pi}_s(t)\right) \theta^{k^{*}_t}_s(t) \biggr)  \\
               &- \biggl(\sum_{s \in U_{\Tilde{\boldsymbol{\pi}}, \hat{\boldsymbol{\pi}}}} \left(\Tilde{\pi}_s(t) -  \hat{\pi}_s)(t) \right)  (\theta^{k^{*}_t}_{s}(t) - \hat{\mu}_s(t)) \biggr) \\
               &+ \biggl( \sum_{s \in U_{\hat{\boldsymbol{\pi}}, \Tilde{\boldsymbol{\pi}}}} \left(\hat{\pi}_s (t) - \Tilde{\pi}_s(t)\right) (\theta^{k^{*}_t}_{s}(t) - \hat{\mu}_s(t)) \biggr) \Biggl)
          \end{split} \nonumber \\
        & = &
         \begin{split} 
               & \frac{1}{\left| {\pi}^{*}_p - \hat{\pi}_p \right|} \biggl(\sum_{s \in U_{\Tilde{\boldsymbol{\pi}}, \hat{\boldsymbol{\pi}}}} \left(\Tilde{\pi}_s(t) -  \hat{\pi}_s(t) \right)  \theta^{k^{*}_t}_s(t) \\
               & -  \sum_{s \in U_{\hat{\boldsymbol{\pi}}, \Tilde{\boldsymbol{\pi}}}} \left(\hat{\pi}_s (t) - \Tilde{\pi}_s(t)\right) \theta^{k^{*}_t}_s(t) \biggr)  \\
               & - \frac{1}{\left| {\pi}^{*}_p - \hat{\pi}_p \right|} \left|\sum_{s \in U_{\Tilde{\boldsymbol{\pi}}, \hat{\boldsymbol{\pi}}}} \left(\Tilde{\pi}_s(t) -  \hat{\pi}_s)(t) \right)  (\theta^{k^{*}_t}_{s}(t) - \hat{\mu}_s(t)) \right| \\
               & - \frac{1}{\left| {\pi}^{*}_p - \hat{\pi}_p \right|} \left| \sum_{s \in U_{\hat{\boldsymbol{\pi}}, \Tilde{\boldsymbol{\pi}}}} \left(\hat{\pi}_s (t) - \Tilde{\pi}_s(t)\right) (\theta^{k^{*}_t}_{s}(t) - \hat{\mu}_s(t)) \right|
          \end{split} \nonumber \\
        & \geq & \frac{\Delta_{\boldsymbol{\pi}^{*}, \hat{\boldsymbol{\pi}}}}{\left| {\pi}^{*}_p - \hat{\pi}_p \right|} - \frac{2\Delta_{(p)}}{7} \nonumber \\
        & \geq & \Delta_{(p)} - \frac{2\Delta_{(p)}}{7} \nonumber \\
        & > & 0 \nonumber 
    \end{eqnarray} 
    This means that $\sum_{s 1}^{d} \Tilde{\pi}_s \hat{\mu}_s - \sum_{s = 1}^{d} \hat{\pi}_s \hat{\mu}_s  > 0$, which contradicts with the fact that $\hat{\boldsymbol{\pi}} = \argmax_{\boldsymbol{\pi} \in \mathcal{A}} \boldsymbol{\mu}^{\top}\boldsymbol{\pi}$.
\end{proof}

\subsection{Obtain Theorem \ref{TSUpperBoundTheorem_R-CPE-MAB}}
By Lemma \ref{Lemma4.7GenTS-Explore}, if for all $s\in[d]$, $T_s(t) \geq \frac{98\mathrm{Gwidth}_s C(t) L_2(t)}{\Delta^2_{(s)}}  + \frac{1}{A_s}$, then the GenTS-Explore algorithm must terminate. Thus, the complexity $Z$ satisfies 
\begin{eqnarray}
    Z 
    &\leq& \sum_{s = 1}^{d} \frac{98\mathrm{Gwidth}_s C(t) L_2(t)}{\Delta^2_{(s)}} + D \nonumber \\ 
    &=& 98\mathbf{H}^{R} C(Z)L_2(Z) + D,
\end{eqnarray}
where $D = \sum_{s = 1}^{d} \frac{1}{A_s}$. For, $q = 0.1$, $\frac{1}{\phi^2(q)} = \mathcal{O}\left( \frac{1}{\log\frac{1}{q}} \right)$. Then, with $C(Z) = \frac{\log\left( 12\left| \mathcal{A} \right| Z^2 / \delta \right)}{\phi^2(q)}$, $L_2(Z) = \log\left( 12\left| \mathcal{A} \right|^2 Z^2 M(Z) / \delta \right)$, and $M(Z) = \frac{\log\left( 12\left| \mathcal{A} \right| Z^2 / \delta \right)}{q}$, we have 
\begin{eqnarray}
    Z \leq \mathcal{O}\left( \mathbf{H}^{\mathrm{R}} \frac{\left( \log\left( \left| \mathcal{A} \right|Z \right) + \log \frac{1}{\delta} \right)^2 }{\log\frac{1}{q}} \right).
\end{eqnarray}
Following the same discussion in Section B in \citet{WangAndZhu2022}, we obtain the Theorem.

\subsection{An Upper Bound of the Naive Arm Selection Strategy}
Recall that we pull arm $p_t = \argmin_{s\in\{e\in [d] \ | \ \hat{\pi}_e \neq \Tilde{\pi}_e\}} T_s(t)$ at round $t$. Therefore, we have 
\begin{eqnarray}
    T_{p_t}(t) \leq T_s(t)
\end{eqnarray}
for any $s\in [d] \setminus \{p_t\}$. \par
We have the following lemma.
\begin{lemma} \label{Lemma4.7GenTS-Explore}
    Under event $\mathcal{E}_0 \land \mathcal{E}_1 \land \mathcal{E}_2$, an arm $p$ will not be pulled if $T_p(t) \geq \frac{98U_p C(t) L_2(t)}{\Delta^2_{(p)}}$.
\end{lemma}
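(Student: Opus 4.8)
The plan is to reproduce the contradiction argument of the preceding lemma (for the R-CPE-MAB arm selection strategy), changing only the single step where the exploration rule is used to control the estimation error. First I would assume, toward a contradiction, that arm $p$ is pulled at round $t$ with $T_p(t) \geq \frac{98 U_p C(t) L_2(t)}{\Delta^2_{(p)}}$, and split into the same two cases: Case 1, where $\Tilde{\pi}_p \neq \pi^{*}_p$, and Case 2, where $\hat{\pi}_p \neq \pi^{*}_p$.

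The crucial difference is in bounding the fluctuation terms that appear under event $\mathcal{E}_1$, namely expressions of the form $\frac{1}{|\pi^{*}_p - \Tilde{\pi}_p|}\bigl|\sum_{s \in U_{\Tilde{\boldsymbol{\pi}},\hat{\boldsymbol{\pi}}}}(\Tilde{\pi}_s - \hat{\pi}_s)(\theta^{k}_s(t) - \hat{\mu}_s(t))\bigr|$. For the R-CPE-MAB rule this relied on the intricate inequality $T_s(t) + 1 > \frac{|\hat{\pi}_s - \Tilde{\pi}_s|}{|\hat{\pi}_{p_t} - \Tilde{\pi}_{p_t}|} T_{p_t}$. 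Here I would instead invoke the defining property of the naive rule $p_t = \argmin_{s : \hat{\pi}_s \neq \Tilde{\pi}_s} T_s(t)$, which yields the much simpler bound $T_p(t) \leq T_s(t)$ for every coordinate $s$ with $\hat{\pi}_s \neq \Tilde{\pi}_s$ --- in particular for every index in the sets $U_{\Tilde{\boldsymbol{\pi}},\hat{\boldsymbol{\pi}}}$ and $U_{\hat{\boldsymbol{\pi}},\Tilde{\boldsymbol{\pi}}}$ --- so that $\frac{1}{T_s(t)} \leq \frac{1}{T_p(t)}$ throughout.

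Applying the sub-Gaussian bound of $\mathcal{E}_1$ and then pulling out $T_p(t)$, in Case 1 this gives
\begin{equation*}
    \sqrt{C(t)L_2(t)\sum_{s \in U_{\Tilde{\boldsymbol{\pi}},\hat{\boldsymbol{\pi}}}}\frac{2|\Tilde{\pi}_s - \hat{\pi}_s|^2}{|\pi^{*}_p - \Tilde{\pi}_p|^2\, T_s(t)}} \leq \sqrt{\frac{2C(t)L_2(t)}{T_p(t)}\cdot\frac{\sum_{e=1}^d |\Tilde{\pi}_e - \hat{\pi}_e|^2}{|\pi^{*}_p - \Tilde{\pi}_p|^2}} \leq \sqrt{\frac{2 U_p C(t) L_2(t)}{T_p(t)}},
\end{equation*}
where the last inequality is exactly the definition of $U_p$, instantiated with $\boldsymbol{\pi} = \Tilde{\boldsymbol{\pi}}$ (which satisfies $\pi^{*}_p \neq \Tilde{\pi}_p$ in this case) and $\boldsymbol{\pi}' = \hat{\boldsymbol{\pi}}$. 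Substituting the assumed threshold $T_p(t) \geq 98 U_p C(t) L_2(t)/\Delta^2_{(p)}$ bounds the right-hand side by $\Delta_{(p)}/7$, which is precisely the quantity used in the R-CPE-MAB proof. Case 2 is symmetric, instantiating $U_p$ with $\boldsymbol{\pi} = \hat{\boldsymbol{\pi}}$ and $\boldsymbol{\pi}' = \Tilde{\boldsymbol{\pi}}$ and using $|\pi^{*}_p - \hat{\pi}_p|$ in the denominator to reach the same $\Delta_{(p)}/7$ bound.

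Once these $\Delta_{(p)}/7$ fluctuation bounds are established, the rest of the argument transfers verbatim: in Case 1 one combines them to show $\Tilde{\Delta}^{k^{*}_t}_t/|\Tilde{\pi}_p - \hat{\pi}_p| \leq 2\Delta_{(p)}/7$ while event $\mathcal{E}_2$ forces the opposite strict inequality $> 2\Delta_{(p)}/7$, and in Case 2 one derives $\sum_s \Tilde{\pi}_s \hat{\mu}_s - \sum_s \hat{\pi}_s \hat{\mu}_s > 0$, contradicting the optimality of $\hat{\boldsymbol{\pi}} = \mathrm{Oracle}(\hat{\boldsymbol{\mu}}(t))$. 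Since this lemma is a strict simplification of the previous one, I do not anticipate a genuine obstacle; the only care needed is the bookkeeping that every summation index really does satisfy $\hat{\pi}_s \neq \Tilde{\pi}_s$ so that $T_p(t) \leq T_s(t)$ applies, and that the ratio collapses into the single constant $U_p$ rather than the coordinate-weighted $V_p$. Because the step $1/T_s(t) \leq 1/T_p(t)$ is purely multiplicative and discards no additive slack, the extra $1/A_p$ term of the R-CPE-MAB statement is absent here.
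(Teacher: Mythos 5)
Your proposal is correct and follows essentially the same route as the paper: the paper's proof of this lemma is a near-verbatim transcription of the R-CPE-MAB case, with the only substantive change being that the naive rule's property $T_p(t)\leq T_s(t)$ for all differing coordinates $s$ replaces the multiplicative inequality $T_s(t)+1>\frac{|\hat{\pi}_s-\Tilde{\pi}_s|}{|\hat{\pi}_{p_t}-\Tilde{\pi}_{p_t}|}T_{p_t}$, so that the sum collapses to $\frac{2U_pC(t)L_2(t)}{T_p(t)}\leq\frac{2\Delta_{(p)}^2}{98}=\left(\frac{\Delta_{(p)}}{7}\right)^2$ and the additive $1/A_p$ slack disappears. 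If anything, you make the instantiation of $U_p$ (with $\boldsymbol{\pi}=\Tilde{\boldsymbol{\pi}}$ or $\hat{\boldsymbol{\pi}}$ and the membership check $\pi^{*}_p\neq\pi_p$ in each case) more explicit than the paper does.
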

\begin{proof}
    Here, we simply write $\hat{\boldsymbol{\pi}}$ instead of $\hat{\boldsymbol{\pi}}(t)$ and $\Tilde{\boldsymbol{\pi}}(t)$ instead of $\Tilde{\boldsymbol{\pi}}$.
    We prove this lemma by contradiction. Assume that arm $p$ is pulled with $T_p(t) \geq \frac{98U_p C(t) L_2(t)}{\Delta^2_{(p)}}$. Then, there are two cases.
    \begin{itemize}
        \item Case 1: $\Tilde{{\pi}}_p \neq {\pi}^{*}_p$
        \item Case 2: $\hat{{\pi}}_p \neq {\pi}^{*}_p$
    \end{itemize}
    Case 1: $\Tilde{{\pi}}_p \neq {\pi}^{*}_p$. In this case, we have $\Delta_{(p)} \leq \frac{\boldsymbol{\mu}^{\top} \left( \boldsymbol{\pi}^{*} - \Tilde{\boldsymbol{\pi}} \right)}{\left|{\pi}_p^{*} - \Tilde{\pi}_p \right| }$.
    By event $\mathcal{E}_1$, we also have that, for any $k$, 
    \begin{eqnarray}
        && \frac{1}{\left| {\pi}^{*}_p - \Tilde{\pi}_p \right|} \left| \sum_{s \in U_{\Tilde{\boldsymbol{\pi}}, \hat{\boldsymbol{\pi}}}} (\Tilde{\pi}_s - \hat{\pi}_s)(\theta^{k}_{s}(t) - \hat{\mu}_{s}(t)) \right| \nonumber \\
        &\leq& \frac{1}{\left| {\pi}^{*}_p - \Tilde{\pi}_p \right|} \sqrt{C(t)L_2(t) \sum_{s \in U_{\Tilde{\boldsymbol{\pi}}, \hat{\boldsymbol{\pi}}}}  \frac{2|\Tilde{\pi}_s - \hat{\pi}_s|^2}{T_s(t)}} \nonumber \\
        & \leq  & \sqrt{C(t)L_2(t) \sum_{s \in U_{\Tilde{\boldsymbol{\pi}}, \hat{\boldsymbol{\pi}}}}  \frac{2|\Tilde{\pi}_s - \hat{\pi}_s|^2}{ \left| {\pi}^{*}_p - \Tilde{\pi}_p \right|^2 T_s(t)}} \nonumber \\
        & \leq & \frac{\Delta_{(p)}}{7} \label{third_inequality_1},
    \end{eqnarray}
    and similarly, 
    \begin{eqnarray}
        && \frac{1}{\left| {\pi}^{*}_p - \Tilde{\pi}_p \right|} \left| \sum_{s \in U_{\hat{\boldsymbol{\pi}}, \Tilde{\boldsymbol{\pi}}}} (\Tilde{\pi}_s - \hat{\pi}_s)(\theta^{k}_{s}(t) - \hat{\mu}_{s}(t)) \right| \nonumber \\
        &\leq& \sqrt{C(t)L_2(t) \sum_{s \in U_{\Tilde{\boldsymbol{\pi}}, \hat{\boldsymbol{\pi}}}}  \frac{2|\pi_s - \hat{\pi}_s|^2}{\left| {\pi}^{*}_p - \Tilde{\pi}_p \right|^2 T_s(t)}} \nonumber \\
        & \leq & \frac{\Delta_{(p)}}{7} \label{third_inequality_2}.
    \end{eqnarray}
    Therefore, for any $k$, we have that
    \begin{eqnarray}
        &&\frac{1}{\left| {\pi}^{*}_p - \Tilde{\pi}_p \right|} \sum_{s = 1}^{d} (\Tilde{\pi}_s - \hat{\pi}_s) \theta^{k}_s(t) \nonumber \\
        & = &
        \begin{split}
            \frac{1}{\left| {\pi}^{*}_p - \Tilde{\pi}_p \right|} \biggl( 
            &\sum_{s = 1}^{d} (\Tilde{\pi}_s - \hat{\pi}_s)
            \hat{\mu}_s(t) \\
            &+ \sum_{s \in U_{\Tilde{\boldsymbol{\pi}}, \hat{\boldsymbol{\pi}}}} (\Tilde{\pi}_s - \hat{\pi}_s) (\theta^{k}_s(t) - \hat{\mu}_s(t)) \\
            &+ \sum_{s \in U_{\hat{\boldsymbol{\pi}}, \Tilde{\boldsymbol{\pi}}}} (\hat{\pi}_s - \Tilde{\pi}_s) (\theta^{k}_s(t) - \hat{\mu}_s(t)) \biggr)
        \end{split}  \nonumber \\
        & \leq &
        \begin{split}
            \frac{1}{\left| {\pi}^{*}_p - \Tilde{\pi}_p \right|} \biggl( & \sum_{s = 1}^{d} (\Tilde{\pi}_s - \hat{\pi}_s)
            \hat{\mu}_s(t) \\
            &+ \biggl| \sum_{s \in U_{\Tilde{\boldsymbol{\pi}}, \hat{\boldsymbol{\pi}}}} (\Tilde{\pi}_s - \hat{\pi}_s) (\theta^{k}_s(t) - \hat{\mu}_s(t)) \biggr| \\
            &+ \biggl| \sum_{s \in U_{\hat{\boldsymbol{\pi}}, \Tilde{\boldsymbol{\pi}}}} (\hat{\pi}_s - \Tilde{\pi}_s) (\theta^{k}_s(t) - \hat{\mu}_s(t)) \biggr| \biggr)
        \end{split} \nonumber \\
        & \leq & 0 + \frac{2\Delta_{(p)}}{7} = \frac{2\Delta_{(p)}}{7}. \nonumber
     \end{eqnarray}
     This means that $\frac{\Tilde{\Delta}^{k_t^{*}}_t}{\left| \Tilde{\pi}_p - \hat{\pi}_p \right|} \leq \frac{2\Delta_{(p)}}{7}$. \par
     Moreover, since $\sum_{s = 1}^{d} \Tilde{\pi}_s \theta^{k^{*}_t}_s \geq \sum_{s = 1}^{d} \hat{\pi}_s \theta^{k^{*}_t}_s$, we have that
     \begin{eqnarray}
     && \frac{1}{\left| {\pi}^{*}_p - \Tilde{\pi}_p \right|} \sum_{s = 1}^{d} (\Tilde{\pi}_s - \hat{\pi}_s) \hat{\mu}_{s}(t) \nonumber \\
        & = &
        \begin{split}
            \frac{1}{\left| {\pi}^{*}_p - \Tilde{\pi}_p \right|} \biggl( & \sum_{s = 1}^{d} (\Tilde{\pi}_s - \hat{\pi}_s)
            \theta^{k^{*}_{t}}_s(t) \\ 
            &- \sum_{s \in U_{\Tilde{\boldsymbol{\pi}}, \hat{\boldsymbol{\pi}}}} (\Tilde{\pi}_s - \hat{\pi}_s) (\theta^{k^{*}_{t}}_s(t) - \hat{\mu}_s(t)) \\
            &+ \sum_{s \in U_{\hat{\boldsymbol{\pi}}, \Tilde{\boldsymbol{\pi}}}} (\hat{\pi}_s - \Tilde{\pi}_s) (\theta^{k^{*}_{t}}_s(t) - \hat{\mu}_s(t)) \biggl)
        \end{split} \nonumber \\
        & \geq &
        \begin{split}
        \frac{1}{\left| {\pi}^{*}_p - \Tilde{\pi}_p \right|} \biggl( & \sum_{s = 1}^{d} (\Tilde{\pi}_s - \hat{\pi}_s)
        \theta^{k^{*}_{t}}_s(t) \\
        &- \biggl| \sum_{s \in U_{\Tilde{\boldsymbol{\pi}}, \hat{\boldsymbol{\pi}}}} (\Tilde{\pi}_s - \hat{\pi}_s) (\theta^{k^{*}_{t}}_s(t)  - \hat{\mu}_s(t)) \biggr| \\
            &- \biggl| \sum_{s \in U_{\hat{\boldsymbol{\pi}}, \Tilde{\boldsymbol{\pi}}}} (\hat{\pi}_s - \Tilde{\pi}_s) (\theta^{k^{*}_{t}}_s(t) - \hat{\mu}_s(t)) \biggr| \biggr)
        \end{split} \nonumber \\
        & \geq & 0 - \frac{2\Delta_{(p)}}{7} =  -\frac{2\Delta_{(p)}}{7}. \nonumber
     \end{eqnarray}
     On the other hand, by event $\mathcal{E}_{2}$, we know that there exists a $k'$ such that $\sum_{s = 1}^{d} \pi^{*}_s \theta^{k'}_s(t) - \sum_{s = 1}^{d} \Tilde{\pi}_s \theta^{k'}_s(t) \geq \Delta_{\boldsymbol{\pi}^{*}, \Tilde{\boldsymbol{\pi}}}$. 
     Then,
     \begin{eqnarray}
        &&\frac{1}{\left| {\pi}^{*}_p - \Tilde{\pi}_p \right|} \left(\sum_{s = 1}^{d} \pi^{*}_s \theta^{k'}_s(t) - \sum_{s = 1}^{d} \hat{\pi}_s \theta^{k'}_s(t) \right) \nonumber \\
        & = &
        \begin{split}
           \frac{1}{\left| {\pi}^{*}_p - \Tilde{\pi}_p \right|} \Biggl( \biggl( & \sum_{s = 1}^{d} \pi^{*}_s \theta^{k'}_s(t) - \sum_{s = 1}^{d} \Tilde{\pi}_s \theta^{k'}_s(t) \biggr)  \\
            & + \biggl(\sum_{s = 1}^{d} \Tilde{\pi}_s \theta^{k'}_s(t) - \sum_{s = 1}^{d} \hat{\pi}_s \theta^{k'}_s(t) \biggr) \Biggr)
        \end{split} \nonumber \\
        & \geq &
            \frac{\Delta_{\boldsymbol{\pi}^{*}, \Tilde{\boldsymbol{\pi}}}}{\left| {\pi}^{*}_p - \Tilde{\pi}_p \right|}  + \frac{1}{\left| {\pi}^{*}_p - \Tilde{\pi}_p \right|} \biggl( \sum_{s = 1}^{d} \Tilde{\pi}_s \theta^{k'}_s(t) - \sum_{s = 1}^{d} \hat{\pi}_s \theta^{k'}_s(t) \biggr) \nonumber \\
        & \geq &
        \begin{split}
            \Delta_{(p)} + \frac{1}{\left| {\pi}^{*}_p - \Tilde{\pi}_p \right|} \biggl( & \sum_{s \in U_{\Tilde{\boldsymbol{\pi}}, \hat{\boldsymbol{\pi}}}} \left(\Tilde{\pi}_s(t) -  \hat{\pi}_s)(t) \right)  \theta^{k'}_s(t)  \\ 
            &- \sum_{s \in U_{\hat{\boldsymbol{\pi}}, \Tilde{\boldsymbol{\pi}}}} \left(\hat{\pi}_s (t) - \Tilde{\pi}_s(t)\right) \theta^{k'}_s(t) \biggr)
        \end{split} \nonumber \\
        & = &
         \begin{split} 
               & \Delta_{(p)}  + \frac{1}{\left| {\pi}^{*}_p - \Tilde{\pi}_p \right|} \biggl( \sum_{s \in U_{\Tilde{\boldsymbol{\pi}}, \hat{\boldsymbol{\pi}}}} \left(\Tilde{\pi}_s(t) -  \hat{\pi}_s)(t) \right)  \hat{\mu}_s(t) \\
               & -  \sum_{s \in U_{\hat{\boldsymbol{\pi}}, \Tilde{\boldsymbol{\pi}}}} \left(\hat{\pi}_s (t) - \Tilde{\pi}_s(t)\right) \hat{\mu}_s(t) \biggr)  \\
               & + \frac{1}{\left| {\pi}^{*}_p - \Tilde{\pi}_p \right|} \left(\sum_{s \in U_{\Tilde{\boldsymbol{\pi}}, \hat{\boldsymbol{\pi}}}} \left(\Tilde{\pi}_s(t) -  \hat{\pi}_s)(t) \right)  (\theta^{k'}_{s}(t) - \hat{\mu}_s(t)) \right) \\
               & - \frac{1}{\left| {\pi}^{*}_p - \Tilde{\pi}_p \right|} \biggl( \sum_{s \in U_{\hat{\boldsymbol{\pi}}, \Tilde{\boldsymbol{\pi}}}} \left(\hat{\pi}_s (t) - \Tilde{\pi}_s(t)\right) (\theta^{k'}_{s}(t) - \hat{\mu}_s(t)) \biggr)
          \end{split} \nonumber \\
        & \geq  &
        \begin{split} 
               &\Delta_{(p)} - \frac{2\Delta_{(p)}}{7}  \\
               &- \frac{1}{\left| {\pi}^{*}_p - \Tilde{\pi}_p \right|} \left|\sum_{s \in U_{\Tilde{\boldsymbol{\pi}}, \hat{\boldsymbol{\pi}}}} \left(\Tilde{\pi}_s(t) -  \hat{\pi}_s(t) \right)  (\theta^{k'}_{s}(t) - \hat{\mu}_s(t)) \right| \\
               &- \frac{1}{\left| {\pi}^{*}_p - \Tilde{\pi}_p \right|} \left| \sum_{s \in U_{\hat{\boldsymbol{\pi}}, \Tilde{\boldsymbol{\pi}}}} \left(\hat{\pi}_s (t) - \Tilde{\pi}_s(t) \right) (\theta^{k'}_{s}(t) - \hat{\mu}_s(t)) \right|
        \end{split} \nonumber \\
        & \geq & \Delta_{(p)} - \frac{4\Delta_{(p)}}{7} \nonumber \\
        & > & \frac{2\Delta_{(p)}}{7}. \nonumber
     \end{eqnarray}
     This means that $\frac{\Tilde{\Delta}^{k_t^{*}}_t}{\left| \Tilde{\pi}_p - \hat{\pi}_p \right|} > \frac{2\Delta_{(p)}}{7}$, which contradicts with $\frac{\Tilde{\Delta}^{k_t^{*}}_t}{\left| \Tilde{\pi}_p - \hat{\pi}_p \right|} \leq \frac{2\Delta_{(p)}}{7}$. \par
     
     Case 2: $\hat{{\pi}}_p \neq {\pi}^{*}_p$. 
     In this case, $\Delta_{(p)} \leq \frac{\boldsymbol{\mu}^{\top} \left( \boldsymbol{\pi}^{*} - \hat{\boldsymbol{\pi}} \right)}{\left|{\pi}_p^{*} - \hat{\pi}_p(t) \right| }$. 
     By event $\mathcal{E}_2$, we know that there exists a $k$ such that $\sum_{s = 1}^{d} \pi^{*}_s \theta_s^{k}(t) - \sum_{s = 1}^{d} \hat{\pi}_s(t) \theta^{k}_s(t) \geq \Delta_{\boldsymbol{\pi}^{*}, \hat{\boldsymbol{\pi}}}$. 
     Hence, $\Tilde{\Delta}_t^k \geq \Delta_{\boldsymbol{\pi}^{*}, \hat{\boldsymbol{\pi}}}$.  
     Moreover, since $k^{*}_t = \argmax_k \Tilde{\Delta}_t^k$, we have that $ \Tilde{\Delta}^{k^{*}_t}_t = \sum_{s = 1}^{d} \pi^{*}_s \theta_s^{k^{*}_t}(t) - \sum_{s = 1}^{d} \hat{\pi}_s(t) \theta^{k^{*}_t}_s(t) \geq \Delta_{\boldsymbol{\pi}^{*}, \hat{\boldsymbol{\pi}}}$, 
     which is the same as $\sum_{s \in U(\boldsymbol{\pi}^{*}, \hat{\boldsymbol{\pi}})} (\pi^{*}_s - \hat{\pi}_s) \theta_s^{k^{*}_t}(t) - \sum_{s = 1}^{d} (\hat{\pi}_s(t) - \pi^{*}) \theta^{k^{*}_t}_s(t) \geq \Delta_{\boldsymbol{\pi}^{*}, \hat{\boldsymbol{\pi}}}$. 
     On the other hand, by event $\mathcal{E}_1$, we also have that, for any $k$, 
    \begin{eqnarray}
        && \frac{1}{\left| {\pi}^{*}_p - \hat{\pi}_p \right|} \left| \sum_{s \in U_{\Tilde{\boldsymbol{\pi}}, \hat{\boldsymbol{\pi}}}} (\Tilde{\pi}_s - \hat{\pi}_s)(\theta^{k^{*}_t}_{s}(t) - \hat{\mu}_{s}(t)) \right| \nonumber \\
        &\leq& \frac{1}{\left| {\pi}^{*}_p - \hat{\pi}_p \right|} \sqrt{C(t)L_2(t) \sum_{s \in U_{\Tilde{\boldsymbol{\pi}}, \hat{\boldsymbol{\pi}}}}  \frac{2|\Tilde{\pi}_s - \hat{\pi}_s|^2}{T_s(t)}} \nonumber \\
        & \leq & \sqrt{C(t)L_2(t) \sum_{s \in U_{\Tilde{\boldsymbol{\pi}}, \hat{\boldsymbol{\pi}}}}  \frac{2|\Tilde{\pi}_s - \hat{\pi}_s|^2}{\left| {\pi}^{*}_p - \hat{\pi}_p \right|^2 T_s(t)}} \nonumber \\
        & \leq & \frac{\Delta_{(p)}}{7}, \nonumber 
    \end{eqnarray}
    and similarly, 
    \begin{eqnarray}
        && \frac{1}{\left| {\pi}^{*}_p - \hat{\pi}_p \right|} \left| \sum_{s \in U_{\hat{\boldsymbol{\pi}}, \Tilde{\boldsymbol{\pi}}}} (\Tilde{\pi}_s - \hat{\pi}_s)(\theta^{k^{*}}_{s}(t) - \hat{\mu}_{s}(t)) \right| \nonumber \\
        &\leq& \frac{1}{\left| {\pi}^{*}_p - \hat{\pi}_p \right|} \sqrt{C(t)L_2(t) \sum_{s \in U_{\Tilde{\boldsymbol{\pi}}, \hat{\boldsymbol{\pi}}}}  \frac{2|\Tilde{\pi}_s - \hat{\pi}_s|^2}{T_s(t)}} \nonumber \\
        & \leq & \sqrt{C(t)L_2(t) \sum_{s \in U_{\hat{\boldsymbol{\pi}}, \Tilde{\boldsymbol{\pi}}}}  \frac{2|\Tilde{\pi}_s - \hat{\pi}_s|^2}{\left| {\pi}^{*}_p - \hat{\pi}_p \right|^2 T_{p}}} \nonumber \\
        & \leq & \frac{\Delta_{{(p)}}}{7}. \nonumber 
    \end{eqnarray}
    Therefore, 
    \begin{eqnarray}
       && \frac{1}{\left| {\pi}^{*}_p - \hat{\pi}_p \right|} \left( \sum_{s \in U_{\Tilde{\pi}, \hat{\boldsymbol{\pi}}}} (\Tilde{\pi}_s - \hat{\pi}_s) \hat{\mu}_s - \sum_{s \in U_{\hat{\boldsymbol{\pi}}, \Tilde{\boldsymbol{\pi}}}} \left(\hat{\pi}_s - \Tilde{\pi}_s \right) \hat{\mu}_s \right) \nonumber \\
       & = &
         \begin{split} 
                \frac{1}{\left| {\pi}^{*}_p - \hat{\pi}_p \right|} \Biggl( \biggl(&\sum_{s \in U_{\Tilde{\boldsymbol{\pi}}, \hat{\boldsymbol{\pi}}}} \left(\Tilde{\pi}_s(t) -  \hat{\pi}_s(t) \right)  \theta^{k^{*}_t}_s(t) \\ 
               &- \sum_{s \in U_{\hat{\boldsymbol{\pi}}, \Tilde{\boldsymbol{\pi}}}} \left(\hat{\pi}_s (t) - \Tilde{\pi}_s(t)\right) \theta^{k^{*}_t}_s(t) \biggr)  \\
               &- \biggl(\sum_{s \in U_{\Tilde{\boldsymbol{\pi}}, \hat{\boldsymbol{\pi}}}} \left(\Tilde{\pi}_s(t) -  \hat{\pi}_s)(t) \right)  (\theta^{k^{*}_t}_{s}(t) - \hat{\mu}_s(t)) \biggr) \\
               &+ \biggl( \sum_{s \in U_{\hat{\boldsymbol{\pi}}, \Tilde{\boldsymbol{\pi}}}} \left(\hat{\pi}_s (t) - \Tilde{\pi}_s(t)\right) (\theta^{k^{*}_t}_{s}(t) - \hat{\mu}_s(t)) \biggr) \Biggl)
          \end{split} \nonumber \\
        & = &
         \begin{split} 
               & \frac{1}{\left| {\pi}^{*}_p - \hat{\pi}_p \right|} \biggl(\sum_{s \in U_{\Tilde{\boldsymbol{\pi}}, \hat{\boldsymbol{\pi}}}} \left(\Tilde{\pi}_s(t) -  \hat{\pi}_s(t) \right)  \theta^{k^{*}_t}_s(t) \\
               & -  \sum_{s \in U_{\hat{\boldsymbol{\pi}}, \Tilde{\boldsymbol{\pi}}}} \left(\hat{\pi}_s (t) - \Tilde{\pi}_s(t)\right) \theta^{k^{*}_t}_s(t) \biggr)  \\
               & - \frac{1}{\left| {\pi}^{*}_p - \hat{\pi}_p \right|} \left|\sum_{s \in U_{\Tilde{\boldsymbol{\pi}}, \hat{\boldsymbol{\pi}}}} \left(\Tilde{\pi}_s(t) -  \hat{\pi}_s)(t) \right)  (\theta^{k^{*}_t}_{s}(t) - \hat{\mu}_s(t)) \right| \\
               & - \frac{1}{\left| {\pi}^{*}_p - \hat{\pi}_p \right|} \left| \sum_{s \in U_{\hat{\boldsymbol{\pi}}, \Tilde{\boldsymbol{\pi}}}} \left(\hat{\pi}_s (t) - \Tilde{\pi}_s(t)\right) (\theta^{k^{*}_t}_{s}(t) - \hat{\mu}_s(t)) \right|
          \end{split} \nonumber \\
        & \geq & \frac{\Delta_{\boldsymbol{\pi}^{*}, \hat{\boldsymbol{\pi}}}}{\left| {\pi}^{*}_p - \hat{\pi}_p \right|} - \frac{2\Delta_{(p)}}{7} \nonumber \\
        & \geq & \Delta_{(p)} - \frac{2\Delta_{(p)}}{7} \nonumber \\
        & > & 0 \nonumber 
    \end{eqnarray} 
    This means that $\sum_{s 1}^{d} \Tilde{\pi}_s \hat{\mu}_s - \sum_{s = 1}^{d} \hat{\pi}_s \hat{\mu}_s  > 0$, which contradicts with the fact that $\hat{\boldsymbol{\pi}} = \argmax_{\boldsymbol{\pi} \in \mathcal{A}} \boldsymbol{\mu}^{\top}\boldsymbol{\pi}$.
\end{proof}
We can obtain Theorem \ref{TSUpperBoundTheorem_Naive} following the same discussion as the proof of Theorem \ref{TSUpperBoundTheorem_R-CPE-MAB}.
\section{Proof of Proposition \ref{TighterResultThanWangAndZhu}}
We show only for (\ref{H^R_result}) since we can follow the same discussion for (\ref{H^N_result}).
In the ordinary CPE-MAB, since any element in an action is either 0 or 1, we have
\begin{eqnarray}
    V &=& \max_{s \in [d]} V_s \nonumber \\ 
      &=& \max_{ s\in[d], \boldsymbol{\pi}' \in \mathcal{A}, \boldsymbol{\pi} \in \{ \boldsymbol{\pi} \in \mathcal{A} \ | \ \pi^{*}_s \neq \pi_s \}}  \frac{\left| \pi_s - \pi'_s \right|}{\left| \pi^{*}_s - \pi_s \right|^2} \sum_{e = 1}^{d} \left| \pi_e - \pi'_e \right|  \nonumber \\
      & = & \max_{\boldsymbol{\pi}, \boldsymbol{\pi}' \in \mathcal{A}} \sum_{s = 1}^{d} \left| \pi_s - \pi'_s \right|  \nonumber \\
      & = & \mathrm{width}, 
\end{eqnarray}
and
\begin{eqnarray}
    \mathbf{H}^{\mathrm{R}} = \sum_{s = 1}^d \frac{V_s}{\Delta^2_{(s)}} = \sum_{s = 1}^{d} \frac{V_s}{\Delta^2_s}.
\end{eqnarray}
Therefore, the upper bound shown in (\ref{H^R_result}) is tighter than that of \citet{WangAndZhu2022} since
\begin{eqnarray}
    \mathbf{H}^{\mathrm{R}} 
    &=& \sum_{s = 1}^{d} \frac{V_s}{ \Delta^2_s} \nonumber \\
    &\leq& V \sum_{s = 1}^{d} \frac{1}{\Delta^2_s}  \nonumber \\
    & = & \mathrm{width} \sum_{s = 1}^{d} \frac{1}{\Delta^2_s} .
\end{eqnarray}
\end{document}